\pgfplotsset{compat=1.16}
\let\originalleft\left
\let\originalright\right
\renewcommand{\left}{\mathopen{}\mathclose\bgroup\originalleft}
\renewcommand{\right}{\aftergroup\egroup\originalright}
\newcommand{\kron}{\mathbin{\scalerel*{\boxtimes}{\otimes}}}
\newcommandx{\lito}[2][1={}, 2={}]{\operatorname*{o_\mathnormal{#1}^\mathnormal{#2}}}
\newcommandx{\bigO}[2][1={}, 2={}]{\operatorname*{\mathcal{O}_\mathnormal{#1}^\mathnormal{#2}}}
\newcommandx{\bigTh}[2][1={}, 2={}]{\operatorname*{\Theta_\mathnormal{#1}^\mathnormal{#2}}}
\DeclareMathOperator*{\argmax}{\arg\max}
\DeclareMathOperator{\Diag}{Diag}
\DeclareMathOperator{\Dist}{Dist}
\DeclareMathOperator{\Sp}{Sp}
\DeclareMathOperator{\Span}{Span}
\DeclareMathOperator{\Supp}{Supp}
\DeclareMathOperator{\Tr}{Tr}
\DeclareMathOperator{\Var}{Var}
\DeclareMathOperator*{\bigkron}{\scalerel*{\boxtimes}{\bigotimes}}
\newcommand{\eqdef}{\overset{\text{def}}{=}}
\newcommand{\simiid}{\overset{\text{i.i.d.}}{\sim}}
\newcommand{\abs}[1]{\lvert #1 \rvert}
\newcommand{\Abs}[1]{\left\lvert #1 \right\rvert}
\newcommand{\deriv}[2]{\frac{\partial #1}{\partial #2}}
\newcommand{\esp}[1]{\mathbb{E}[#1]}
\newcommand{\Esp}[1]{\mathbb{E} \left[ #1 \right]}
\newcommand{\norm}[1]{\lVert #1 \rVert}
\newcommand{\Norm}[1]{\left\lVert #1 \right\rVert}
\newcommand{\scal}[2]{\langle #1, #2 \rangle}
\newcommand{\Scal}[2]{\left\langle #1, #2 \right\rangle}
\newcommand{\set}[1]{\{ #1 \}}
\newcommand{\tucker}[2]{\llbracket #1 ; #2 \rrbracket}
\newcommand{\Tucker}[2]{\left\llbracket #1 ; #2 \right\rrbracket}
\newcommand{\bzero}{{\mathbf{0}}}
\newcommand{\ba}{{\bm{a}}}
\newcommand{\bb}{{\bm{b}}}
\newcommand{\bc}{{\bm{c}}}
\newcommand{\bu}{{\bm{u}}}
\newcommand{\bv}{{\bm{v}}}
\newcommand{\bw}{{\bm{w}}}
\newcommand{\bx}{{\bm{x}}}
\newcommand{\by}{{\bm{y}}}
\newcommand{\bz}{{\bm{z}}}
\newcommand{\bA}{{\bm{A}}}
\newcommand{\bB}{{\bm{B}}}
\newcommand{\bC}{{\bm{C}}}
\newcommand{\bD}{{\bm{D}}}
\newcommand{\bG}{{\bm{G}}}
\newcommand{\bH}{{\bm{H}}}
\newcommand{\bI}{{\bm{I}}}
\newcommand{\bL}{{\bm{L}}}
\newcommand{\bM}{{\bm{M}}}
\newcommand{\bN}{{\bm{N}}}
\newcommand{\bO}{{\bm{O}}}
\newcommand{\bP}{{\bm{P}}}
\newcommand{\bQ}{{\bm{Q}}}
\newcommand{\bS}{{\bm{S}}}
\newcommand{\bT}{{\bm{T}}}
\newcommand{\bU}{{\bm{U}}}
\newcommand{\bV}{{\bm{V}}}
\newcommand{\bW}{{\bm{W}}}
\newcommand{\bX}{{\bm{X}}}
\newcommand{\bY}{{\bm{Y}}}
\newcommand{\bZ}{{\bm{Z}}}
\newcommand{\bDelta}{{\bm{\Delta}}}
\newcommand{\bLambda}{{\bm{\Lambda}}}
\newcommand{\bSigma}{{\bm{\Sigma}}}
\newcommand{\bOmega}{{\bm{\Omega}}}
\newcommand{\scrA}{{\mathscr{A}}}
\newcommand{\scrB}{{\mathscr{B}}}
\newcommand{\scrG}{{\mathscr{G}}}
\newcommand{\scrH}{{\mathscr{H}}}
\newcommand{\scrN}{{\mathscr{N}}}
\newcommand{\scrT}{{\mathscr{T}}}
\newcommand{\bscrA}{{\bm{\mathscr{A}}}}
\newcommand{\bscrB}{{\bm{\mathscr{B}}}}
\newcommand{\bscrG}{{\bm{\mathscr{G}}}}
\newcommand{\bscrH}{{\bm{\mathscr{H}}}}
\newcommand{\bscrN}{{\bm{\mathscr{N}}}}
\newcommand{\bscrP}{{\bm{\mathscr{P}}}}
\newcommand{\bscrT}{{\bm{\mathscr{T}}}}
\newcommand{\bscrX}{{\bm{\mathscr{X}}}}
\newcommand{\calA}{{\mathcal{A}}}
\newcommand{\calC}{{\mathcal{C}}}
\newcommand{\calD}{{\mathcal{D}}}
\newcommand{\calK}{{\mathcal{K}}}
\newcommand{\calL}{{\mathcal{L}}}
\newcommand{\calN}{{\mathcal{N}}}
\newcommand{\bbC}{{\mathbb{C}}}
\newcommand{\bbE}{{\mathbb{E}}}
\newcommand{\bbP}{{\mathbb{P}}}
\newcommand{\bbR}{{\mathbb{R}}}
\newcommand{\rmd}{{\mathrm{d}}}
\newcommand{\rmi}{{\mathrm{i}}}
\newcommand{\rmF}{{\mathrm{F}}}
\definecolor{C0}{HTML}{1F77B4}
\definecolor{C1}{HTML}{FF7F0E}
\definecolor{C2}{HTML}{2CA02C}
\definecolor{C3}{HTML}{D62728}
\definecolor{C4}{HTML}{9467BD}
\definecolor{C5}{HTML}{8C564B}
\definecolor{C6}{HTML}{E377C2}
\definecolor{C7}{HTML}{7F7F7F}
\definecolor{C8}{HTML}{BCBD22}
\definecolor{C9}{HTML}{17BECF}
\begin{document}

\title{A Random Matrix Approach to Low-Multilinear-Rank Tensor Approximation}

\author{\name Hugo Lebeau \email hugo.lebeau@univ-grenoble-alpes.fr \\
       \addr Université Grenoble Alpes, CNRS, Inria, Grenoble INP, LIG \\
       Grenoble, 38000, France
       \AND
       \name Florent Chatelain \email florent.chatelain@grenoble-inp.fr \\
       \addr Université Grenoble Alpes, CNRS, Grenoble INP, GIPSA-lab \\
       Grenoble, 38000, France
       \AND
       \name Romain Couillet \email romain.couillet@univ-grenoble-alpes.fr \\
       \addr Université Grenoble Alpes, CNRS, Inria, Grenoble INP, LIG \\
       Grenoble, 38000, France}

\editor{Genevera Allen}

\maketitle

\begin{abstract}
This work presents a comprehensive understanding of the estimation of a planted low-rank signal from a general spiked tensor model near the computational threshold. Relying on standard tools from the theory of large random matrices, we characterize the large-dimensional spectral behavior of the unfoldings of the data tensor and exhibit relevant signal-to-noise ratios governing the detectability of the principal directions of the signal. These results allow to accurately predict the reconstruction performance of truncated multilinear SVD (MLSVD) in the non-trivial regime. This is particularly important since it serves as an initialization of the higher-order orthogonal iteration (HOOI) scheme, whose convergence to the best low-multilinear-rank approximation depends entirely on its initialization. We give a sufficient condition for the convergence of HOOI and show that the number of iterations before convergence tends to $1$ in the large-dimensional limit.
\end{abstract} 

\begin{keywords}
random tensors, random matrix theory, spiked tensor model, truncated MLSVD, higher-order orthogonal iteration
\end{keywords} 

\section{Introduction}
\label{sec:intro}

Information retrieval from large amounts of data has become a common task of signal processing and machine learning in the past decades. Often, these data have several modes as they may come from various sources, modalities, domains, and so on. Tensors (multi-way arrays) are therefore a natural representation for such datasets --- they appear in multiple areas such as brain imaging \citep{zhou_tensor_2013}, neurophysiological measurements \citep{seely_tensor_2016}, community detection \citep{anandkumar_tensor_2013}, compression of hyperspectral images \citep{li_tensor_2010}, spatio-temporal gene expression \citep{liu_characterizing_2022}, recommender systems \citep{karatzoglou_multiverse_2010, rendle_pairwise_2010, frolov_tensor_2017} and topic modeling \citep{anandkumar_tensor_2014}. Indeed, tensors as multi-way arrays provide a more detailed representation of data than mere matrices (two-way arrays) as they convey a structural information. For instance, the modes of a data tensor can represent \textit{pixel $\times$ pixel $\times$ wavelength $\times$ sample} in hyperspectral imaging \citep{zhang_tensor_2013, kanatsoulis_hyperspectral_2018}, \textit{time $\times$ spatial scale $\times$ electrode} in the EEG analysis by \citet{acar_multiway_2007} or \textit{neuron $\times$ time $\times$ stimuli} in the study of the visual cortex by \citet{rabinowitz_attention_2015}.

In an information retrieval context, it is common to make use of tensor decompositions in order to estimate a sought signal. In their fMRI study, \citet{hunyadi_tensor_2017} perform a blind source separation via a joint tensor decomposition on a \textit{channel $\times$ time $\times$ patient} array, whereas \citet{williams_unsupervised_2018} use a low-rank tensor approximation on a \textit{neuron $\times$ time $\times$ trial} array as a dimensionality reduction technique to study neural dynamics. In fact, supposing that the signal has a low-rank structure is a natural sparsity assumption \citep{kadmon_statistical_2018, anandkumar_tensor_2014}, and low-rank tensor approximations are key tools to extract information from multi-way data.

In the present work, we propose a random matrix analysis of a general \textit{low-rank information $+$ noise} tensor model and precisely quantify the amount of information which can be recovered with a low-rank tensor approximation depending on the signal-to-noise ratio (SNR). For a general introduction to tensors, we refer the reader to \citet{comon_tensors_2014, comon_tensors_2009, landsberg_tensors_2011, hackbusch_tensor_2012} and, for an emphasis on statistical learning applications, \citet{bi_tensors_2021, sun_tensors_2021}. In the remainder of the introduction, the main concepts and challenges behind low-rank tensor estimation are presented in Section \ref{sec:intro:low-rank}. Then, Section \ref{sec:intro:related_work} introduces some important related works. Our main results are finally summarized in Section \ref{sec:intro:contributions}. All the notations are properly defined in Section \ref{sec:preliminaries}.

\subsection{Low-Rank Tensor Estimation}
\label{sec:intro:low-rank}

What is meant by a low-rank approximation of a tensor? And how is the \emph{rank} of a tensor actually defined? Let us start with a familiar matrix case: a matrix $\bM \in \bbR^{n_1 \times n_2}$ is a two-way array (or order-$2$ tensor). A singular value decomposition (SVD) allows us to write $\bM$ in a compact way as the sum of $R$ rank-$1$ terms, $\bM = \sum_{i = 1}^R \sigma_i \bu_i \bv_i^\top = \bU \bSigma \bV^\top$ where $\bU$, $\bV$ are respectively $n_1 \times R$ and $n_2 \times R$ matrices with \emph{orthonormal} columns and $\bSigma$ is the $R \times R$ \emph{diagonal} matrix of singular values. The rank of $\bM$ is here the minimal number of rank-$1$ terms in which it can be exactly decomposed. Extending this notion to tensors therefore seems straightforward: a tensor\footnote{It is chosen of order $3$ for simplicity of exposure.} $\bscrT \in \bbR^{n_1 \times n_2 \times n_3}$ has rank $R$ if it is the minimal number of rank-$1$ terms in which it can be exactly decomposed, $\bscrT = \sum_{i = 1}^R \sigma_i \ba_i \otimes \bb_i \otimes \bc_i$. What we have just described is the canonical polyadic decomposition (CPD) of $\bscrT$, it dates back to \citet{hitchcock_expression_1927} and is unique under very mild conditions \citep{kolda_tensor_2009}. However, we have lost an important property in this process: the unit vectors $\ba_i$ (resp.\ $\bb_i$, $\bc_i$) are, in general, no longer orthonormal. Conversely, retaining the orthonormality property inevitably results in the loss of the diagonality property, $\bscrT = \sum_{i = 1}^{r_1} \sum_{j = 1}^{r_2} \sum_{k = 1}^{r_3} \scrG_{i, j, k} \bu_i \otimes \bv_j \otimes \bw_k$. This latter decomposition is called a Tucker decomposition and dates back to \citet{tucker_mathematical_1966}. In fact, the best way to represent $\bscrT$ with a Tucker decomposition is to choose the $\bu_i$ (resp.\ $\bv_i$, $\bw_i$) as the left singular vectors of the unfolding of $\bscrT$ along mode $1$ (resp.\ $2$, $3$)\footnote{This is properly defined in Section \ref{sec:preliminaries:tensors}.}. This is called the multilinear SVD (MLSVD, \citealp{de_lathauwer_multilinear_2000}) and gives rise to a new definition of rank: the \emph{multilinear}-rank $(r_1, r_2, r_3)$. Note that, in the matrix case, $r_1 = r_2 = R$ since both the diagonality and orthonormality properties are verified. However, $r_1, r_2, r_3$ are, in general, not equal in the tensor case, but $\max(r_1, r_2, r_3) \leqslant R \leqslant \min(r_1 r_2, r_2 r_3, r_1 r_3)$. See, e.g., \citet{sidiropoulos_tensor_2017} for details. Other relevant references for the reader interested in tensor decompositions are \citet{kolda_tensor_2009, cichocki_tensor_2015, rabanser_introduction_2017}.

Given an order-$d$ tensor $\bscrT \in \bbR^{n_1 \times \ldots \times n_d}$ of possibly very high rank, we are interested in finding a low-rank approximation, i.e., an $n_1 \times \ldots \times n_d$ tensor $\bscrX$ which minimizes the distance $\norm{\bscrT - \bscrX}_\rmF$ on a set of low-rank tensors. Yet, the problem of the best rank-$R$ approximation of a tensor is ill-posed as soon as $R > 1$ because the set of rank-$R$ tensors is not closed \citep{kolda_tensor_2009}. Instead, we shall consider the best low-\emph{multilinear}-rank problem, which is always well-posed,
\begin{equation} \label{eq:problem}
\min_{\operatorname{rank}(\bscrX) \leqslant (r_1, \ldots, r_d)} \Norm{\bscrT - \bscrX}_\rmF^2.
\end{equation}
It is well known in the matrix case that the best rank-$R$ approximation can be easily computed by truncating the SVD to its $R$ most energetic terms \citep{eckart_approximation_1936, mirsky_symmetric_1960}. Could this also be true for the MLSVD? Unfortunately, counter-examples exist \citep{kolda_counterexample_2003}, showing that there is no tensor equivalent of the Eckart-Young-Mirsky theorem. Worse still, Problem \eqref{eq:problem} is in fact NP-hard \citep{hillar_most_2013}. Nevertheless, despite not being the best low-multilinear-rank approximation, the truncated MLSVD $\hat{\bscrT}$ remains a very good ``first guess'' as it verifies $\norm{\bscrT - \bscrT_\star}_\rmF \leqslant \norm{\bscrT - \hat{\bscrT}}_\rmF \leqslant \sqrt{d} \norm{\bscrT - \bscrT_\star}_\rmF$ where $\bscrT_\star$ denotes a solution to Problem \eqref{eq:problem} and $d$ is the order of the tensor \citep{grasedyck_literature_2013, hackbusch_tensor_2012}. It is a cheap (it consists only in $d$ standard matrix SVDs) and quasi-optimal low-multilinear-rank approximation of $\bscrT$. Moreover, it is often used as an initialization of numerical methods which estimate a solution to Problem \eqref{eq:problem}, among which the most common is the higher-order orthogonal iteration (HOOI) algorithm \citep{kroonenberg_principal_1980, kapteyn_approach_1986, de_lathauwer_best_2000}.

Another motivation for the analysis of the low-multilinear-rank approximation problem is that it has also a practical interest for the numerical computation of the canonical polyadic decomposition (CPD). Indeed, when dealing with large tensors, it is computationally more efficient to first compress the tensor with a low-multilinear-rank approximation and then compute the CPD on the smaller core tensor rather than computing the CPD of the large tensor directly \citep{bro_improving_1998}. This is done, e.g., by the \texttt{cpd} function of the \texttt{MATLAB} toolbox \texttt{Tensorlab} \citep{vervliet_tensorlab_2016}.

\subsection{Related Work}
\label{sec:intro:related_work}

Multilinear SVD (MLSVD) has a wide range of applications and is often used to extract relevant information from multi-way arrays. For instance, it has been used in human motion recognition \citep{vasilescu_human_2002}, face recognition \citep{vasilescu_multilinear_2003}, handwritten digit classification \citep{savas_handwritten_2007} but also genomics \citep{omberg_tensor_2007, omberg_global_2009, muralidhara_tensor_2011} and syndromic surveillance \citep{fanaee-t_eigenevent_2015}.

The analysis of \emph{spiked} tensor models --- i.e., low-rank perturbations of large random tensors --- has started with the introduction by \citet{montanari_statistical_2014} of the rank-$1$ symmetric spiked tensor model, $\bscrT = \beta \bx^{\otimes d} + \bscrN$ with $\norm{\bx} = 1$, $\bscrN$ Gaussian noise and $\beta$ a parameter controlling the signal-to-noise ratio (SNR). They show that estimation of $\bx$ from $\bscrT$ is \emph{theoretically} possible as soon as $\beta$ is above a certain threshold $\beta_c$ behaving like $\sqrt{d \log d}$, which is reminiscent of the now well-known spiked \emph{matrix} model where signal reconstruction is only possible above a critical threshold \citep{peche_largest_2006} --- a phenomenon called the BBP phase transition \citep{baik_phase_2005}. The behavior of singular values and singular vectors of spiked matrix models is comprehensively studied by \citet{benaych-georges_singular_2012}. Contrary to the matrix case however, \citet{montanari_statistical_2014} make the disturbing observation that none of the polynomial-time estimation algorithms among tensor unfolding, power iteration and approximate message passing (AMP) succeed unless $\beta$ diverges as the dimensions of the tensor grow large. The results of \citet{hopkins_tensor_2015, hopkins_power_2017} suggest that no polynomial-time algorithm can succeed unless $\beta \gtrsim N^{\frac{d - 2}{4}}$, where $N$ scales as the dimensions of the data tensor. While \citet{perry_statistical_2020} show that the information-theoretic threshold is of order $1$, this indicates the existence of a \emph{computational-to-statistical gap} in spiked tensor estimation, as in a myriad of other problems \citep{bandeira_notes_2018, zdeborova_statistical_2016}.

The landscape of the rank-$1$ symmetric spiked tensor model is studied by \citet{ben_arous_landscape_2019}, who show that the number of local optima to Problem \eqref{eq:problem} grows exponentially with the size of the tensor, but all lie close to a subspace orthogonal to the sought solution, except for one if $\beta$ exceeds a critical threshold $\beta_c$. Completing this analysis, \citet{jagannath_statistical_2020} show\footnote{The setting considered by \citet{jagannath_statistical_2020} is more general than the one of \citet{ben_arous_landscape_2019} because the noise in their model is not necessarily symmetric but the perturbation is still a rank-$1$ symmetric tensor.} the existence of two close but different thresholds $\beta_s < \beta_c$ such that the solution aligned with the underlying signal is a local minimum of Problem \eqref{eq:problem} as soon as $\beta > \beta_s$ but becomes a \emph{global} one only if $\beta > \beta_c$. Relying on the Kac-Rice method, \citet{ros_complex_2019} thoroughly study such high-dimensional landscapes and classify the different behaviors and phase transitions which can occur.

So far, we have only referred to works dealing with the rank-$1$ symmetric case, but there are also some studies on higher-(low-)rank spiked models. \citet{chevreuil_non-detectability_2018} give a sufficient (but not necessary) condition for the \emph{non-detectability} of a rank-$R$ asymmetric signal perturbed by an additive Gaussian noise. \citet{chen_phase_2021} also discuss signal detectability in the rank-$R$ symmetric case. The statistical inference of finite rank tensors is studied by \citet{chen_statistical_2022} who identify the limit free energy of the model in terms of a variational formula. \citet{zhang_tensor_2018} consider a general $\textit{low-multilinear-rank signal}~ \bscrP + \textit{Gaussian noise}~ \bscrN$ model and bring to light the same statistical-to-computational gap: if $\norm{\bscrP}_\rmF$ is above a statistical threshold of order $1$ then Problem \eqref{eq:problem} has a solution which is aligned with the signal but is computationally intractable unless $\norm{\bscrP}_\rmF$ is above a computational threshold of order $N^{\frac{d - 2}{4}}$. In this strong SNR regime, the higher-order orthogonal iteration (HOOI) algorithm \citep{de_lathauwer_best_2000} is minimax-optimal. In fact, it is also proved by \citet{ben_arous_algorithmic_2020} that, with Langevin dynamics and gradient descent, the algorithmic threshold behaves like $N^\alpha$ with $\alpha > 0$. Other algorithmic thresholds have been shown as well for semi-definite and spectral relaxations of the maximum likelihood problem \citep{hopkins_tensor_2015, hopkins_fast_2016, kim_community_2017}. AMP and tensor power iteration algorithms achieve $N^{\frac{d - 1}{2}}$ \citep{lesieur_statistical_2017, huang_power_2022} while tensor unfolding methods (truncated MLSVD and HOOI algorithm) achieve $N^{\frac{d - 2}{4}}$ as already conjectured by \citet{montanari_statistical_2014} and later proven by \citet{hopkins_tensor_2015, ben_arous_long_2023} in the rank-$1$ case. The convergence of the HOOI algorithm towards a local maximum for a \emph{sufficiently close} initialization is proven by \citet{xu_convergence_2018} and \citet{feldman_sharp_2023} show that it achieves exact recovery of a rank-$1$ perturbation in the large $N$ regime when it is initialized with the dominant singular vectors of the unfoldings.

Recently, a new approach relying on tools from random matrix theory has broaden the understanding of spiked tensor models. In particular, \citet{goulart_random_2022} study the rank-$1$ symmetric case and are able to recover explicitly the same $\beta_s$ threshold as \citet{jagannath_statistical_2020} as well as to precisely quantify the alignment between a solution to Problem \eqref{eq:problem} and the signal. A similar analysis is carried out by \citet{seddik_when_2022} for the more general asymmetric case, relying solely on classical techniques from random matrix theory. Such tools show promise for the theoretical understanding of learning from tensor data \citep{seddik_learning_2023}. In particular, the results of \citet{ben_arous_long_2023} and \citet{feldman_spiked_2023} on \emph{long} random matrices, similar to those considered in this work, provide instructive insight into the recovery performance of tensor unfolding methods.

\subsection{Summary of Contributions}
\label{sec:intro:contributions}

In low-rank tensor approximation, tensor unfolding methods achieve the best known performance among polynomial-time algorithms. Motivated by several works suggesting that such method could actually reach the computational threshold \citep{hopkins_tensor_2015, hopkins_power_2017, zhang_tensor_2018, wein_kikuchi_2019}, we propose a thorough random matrix analysis of the low-multilinear-rank tensor approximation problem.

Consider the general spiked tensor model,
\begin{equation} \label{eq:model}
\bscrT = \bscrP + \frac{1}{\sqrt{N}} \bscrN \quad \in \bbR^{n_1 \times \ldots \times n_d}, \qquad \scrN_{i_1, \ldots, i_d} \simiid \calN(0, 1),
\end{equation}
where $d \geqslant 3$ is the order of the tensor, $\bscrN$ is an additive Gaussian noise, $N$ is a parameter controlling the size of the tensor such that the ratio $n_\ell / N$ is constant\footnote{This ensures that the spectral norm of $\frac{1}{\sqrt{N}} \bscrN$ is of order $1$ \citep{tomioka_spectral_2014}.} (at least for $N$ above a certain threshold value $N_0$) for all $\ell \in \{1, \ldots, d\}$ (for instance, $N = n_1$ or $N = \sum_{\ell = 1}^d n_\ell$) and $\bscrP$ is a low-multilinear-rank deterministic tensor, i.e., which can be decomposed as
\begin{equation} \label{eq:P_decompositon}
\bscrP = \sum_{q_1 = 1}^{r_1} \ldots \sum_{q_d = 1}^{r_d} \scrH_{q_1, \ldots, q_d} [\bx^{(1)}_{q_1} \otimes \ldots \otimes \bx^{(d)}_{q_d}] ~\eqdef~ \Tucker{\bscrH}{\bX^{(1)}, \ldots, \bX^{(d)}},
\end{equation}
with $\bscrH \in \bbR^{r_1 \times \ldots \times r_d}$ and $\bX^{(\ell)}$ an $n_\ell \times r_\ell$ matrix with orthonormal columns $\bx^{(\ell)}_{q_\ell}$. The range of $\bX^{(\ell)}$ is the $\ell$-th singular subspace of $\bscrP$. This decomposition is illustrated for the case $d = 3$ in Figure \ref{fig:tucker}. Model \eqref{eq:model} with $\bscrP$ as in Equation \eqref{eq:P_decompositon} is the most general spiked tensor model --- i.e., low-rank perturbation of a large random tensor. Indeed, any of the models referred to in the previous Section \ref{sec:intro:related_work} fall into this definition since decomposition \eqref{eq:P_decompositon} always exists and low CPD-rank is equivalent to low multilinear rank thanks to the inequality $\max_\ell \{ r_\ell \} \leqslant R \leqslant \min_\ell \{ \prod_{\ell' \neq \ell} r_{\ell'} \}$ \citep{sidiropoulos_tensor_2017}.

\begin{figure}
\centering
\newcommand{\ra}{1.25}
\newcommand{\rb}{1.5}
\newcommand{\rc}{1}
\newcommand{\na}{2.5}
\newcommand{\nb}{3}
\newcommand{\nc}{2}
\newcommand{\dd}{0.1}

\begin{tikzpicture}

\node (P) {\begin{tikzpicture}

\draw (0, 0) -- (0, -\na) -- (\nb, -\na) -- (\nb, 0) -- (0, 0);
\draw (\nb+\nc/1.41421, -\na+\nc/1.41421) -- (\nb+\nc/1.41421, \nc/1.41421) -- (\nc/1.41421, \nc/1.41421);
\draw (\nb, -\na) -- (\nb+\nc/1.41421, -\na+\nc/1.41421);
\draw (\nb, 0) -- (\nb+\nc/1.41421, \nc/1.41421);
\draw (0, 0) -- (\nc/1.41421, \nc/1.41421);
\draw [dotted] (\nc/1.41421, \nc/1.41421) -- (\nc/1.41421, -\na+\nc/1.41421) -- (\nb+\nc/1.41421, -\na+\nc/1.41421);
\draw [dotted] (0, -\na) -- (\nc/1.41421, -\na+\nc/1.41421);
\node [anchor=west] at (0, -\na/2) {$n_1$};
\node [anchor=south] at (\nb/2, -\na) {$n_2$};
\node [anchor=east] at (\nb+\nc/1.41421/2, \nc/1.41421/2) {$n_3$};
\node [anchor=center] at (\nb/2+\nc/1.41421/2, -\na/2+\nc/1.41421/2) {$\bscrP$};

\end{tikzpicture}};

\node [anchor=west] (equal) at ([xshift=20pt] P.east) {$=$};

\node [anchor=west] (mlsvd) at ([xshift=20pt] equal.east) {\begin{tikzpicture}
\draw (0, 0) -- (0, -\ra) -- (\rb, -\ra) -- (\rb, 0) -- (0, 0);
\draw (\rb+\rc/1.41421, -\ra+\rc/1.41421) -- (\rb+\rc/1.41421, \rc/1.41421) -- (\rc/1.41421, \rc/1.41421);
\draw (\rb, -\ra) -- (\rb+\rc/1.41421, -\ra+\rc/1.41421);
\draw (\rb, 0) -- (\rb+\rc/1.41421, \rc/1.41421);
\draw (0, 0) -- (\rc/1.41421, \rc/1.41421);
\draw [dotted] (\rc/1.41421, \rc/1.41421) -- (\rc/1.41421, -\ra+\rc/1.41421) -- (\rb+\rc/1.41421, -\ra+\rc/1.41421);
\draw [dotted] (0, -\ra) -- (\rc/1.41421, -\ra+\rc/1.41421);
\node [anchor=west] at (0, -\ra/2) {$r_1$};
\node [anchor=south] at (\rb/2, -\ra) {$r_2$};
\node [anchor=east] at (\rb+\rc/1.41421/2, \rc/1.41421/2) {$r_3$};
\node [anchor=center] at (\rb/2+\rc/1.41421/2, -\ra/2+\rc/1.41421/2) {$\bscrH$};

\draw (-\dd, 0) -- (-\dd-\ra, 0) -- (-\dd-\ra, -\na) -- (-\dd, -\na) -- (-\dd, 0);
\node [anchor=north] at (-\dd-\ra/2, 0) {$r_1$};
\node [anchor=west] at (-\dd-\ra, -\na/2) {$n_1$};
\draw [decorate, decoration={brace}] (-\dd-\ra-\dd, -\na) -- (-\dd-\ra-\dd, 0) node [midway, left] {$\bX^{(1)}$};

\draw (0, -\ra-\dd) -- (0, -\ra-\dd-\rb) -- (\nb, -\ra-\dd-\rb) -- (\nb, -\ra-\dd) -- (0, -\ra-\dd);
\node [anchor=west] at (0, -\ra-\dd-\rb/2) {$r_2$};
\node [anchor=south] at (\nb/2, -\ra-\dd-\rb) {$n_2$};
\draw [decorate, decoration={brace}] (\nb, -\ra-\dd-\rb-\dd) -- (0, -\ra-\dd-\rb-\dd) node [midway, below] {$\bX^{(2)}$};

\draw (\rb+\dd, 0) -- (\rb+\dd+\rc, 0) -- (\rb+\dd+\rc+\nc/1.41421, \nc/1.41421) -- (\rb+\dd+\nc/1.41421, \nc/1.41421) -- (\rb+\dd, 0);
\node [anchor=south west] at (\rb+\dd+\rc/2, 0) {$r_3$};
\node [anchor=east] at (\rb+\dd+\rc+\nc/1.41421/2, \nc/1.41421/2) {$n_3$};
\draw [decorate, decoration={brace}] (\rb+\dd+\rc+\nc/1.41421+\dd, \nc/1.41421) -- (\rb+\dd+\rc+\dd, 0) node [midway, right] {$\bX^{(3)}$};
\end{tikzpicture}};

\end{tikzpicture}
\caption{Illustration of the Tucker decomposition \eqref{eq:P_decompositon} of an $n_1 \times n_2 \times n_3$ tensor $\bscrP$ with multilinear rank $(r_1, r_2, r_3)$. $\bscrH$ is the $r_1 \times r_2 \times r_3$ core tensor and $\bX^{(1)}, \bX^{(2)}, \bX^{(3)}$ are matrices with orthonormal columns spanning the singular subspaces of $\bscrP$.}
\label{fig:tucker}
\end{figure}

In the regime where $N \to +\infty$ --- representing the fact that, in practice, the dimensions of the tensor are large compared to its rank ---, we study the estimation of $\bscrP$ from $\bscrT$ with a truncated MLSVD, which serves as initialization of the HOOI algorithm. In particular, we reveal that the interesting \emph{non-trivial} regime is characterized by the $\bigTh(N^{\frac{d - 2}{2}})$ quantity\footnote{Note that this corresponds to the regime of the algorithmic threshold but, here, our measure of the signal power is $\norm{\bscrP}_\rmF^2$ (and not $\norm{\bscrP}_\rmF$) hence the $\bigTh(N^{\frac{d - 2}{2}})$ instead of $\bigTh(N^{\frac{d - 2}{4}})$.} $\sigma_N = \frac{1}{N} \prod_{\ell = 1}^d \sqrt{n_\ell}$.
\begin{itemize}
\item If $\norm{\bscrP}_\rmF^2 / \sigma_N \xrightarrow[N \to +\infty]{} 0$, then the noise completely masks the signal, and truncated MLSVD fails to recover $\bscrP$.
\item If $\norm{\bscrP}_\rmF^2 / \sigma_N \xrightarrow[N \to +\infty]{} +\infty$, then the signal clearly stands out from the noise, and reconstruction of $\bscrP$ with a truncated MLSVD is easy.
\item If $\norm{\bscrP}_\rmF^2 / \sigma_N = \bigTh(1)$ as $N \to +\infty$, then we are precisely in the \emph{non-trivial} regime between the two previous situations, and truncated MLSVD may partially recover $\bscrP$.
\end{itemize}
It is the analysis of this last regime which is of practical interest. Given the low-multilinear-rank approximation $\hat{\bscrT} = \tucker{\hat{\bscrG}}{\hat{\bU}^{(1)}, \ldots, \hat{\bU}^{(d)}}$ obtained with a truncated MLSVD of $\bscrT$, we quantify how well $\hat{\bscrT}$ reconstructs $\bscrP$ in this non-trivial regime. To do so, we study the spectral properties of the unfoldings (i.e., matricizations) of the tensor $\bscrT$, i.e., the $n_\ell \times \prod_{\ell' \neq \ell} n_{\ell'}$ matrices $\bT^{(\ell)}$ whose columns are mode-$\ell$ fibers of $\bscrT$ and the columns of $\hat{\bU}^{(\ell)}$ are its dominant left singular vectors. Such \emph{long} matrices (the second dimension grows faster than the first one) have already been studied by \citet{ben_arous_long_2023} in order to analyze the properties of tensor-unfolding methods in the particular setting of a rank-$1$ spike. Here, we tackle this problem with a highly different approach relying solely on classical tools from random matrix theory \citep{couillet_random_2022} and give very general results on the spiked tensor model that go beyond the specific rank-$1$ one case. Moreover, we justify the practical use of truncated MLSVD as an initialization of the HOOI algorithm by showing its optimality in the large $N$ regime.

Although the spectrum of $\bT^{(\ell)} \bT^{(\ell) \top}$, $\ell \in \{1, \ldots, d\}$, diverges as $N \to +\infty$, we show that its eigenvalues (i.e., the squared singular values of $\bT^{(\ell)}$) gather in an interval $[\mu^{(\ell)}_N \pm 2 \sigma_N]$ with $\mu^{(\ell)}_N = \frac{1}{N} \prod_{\ell' \neq \ell} n_{\ell'} = \bigTh(N^{d - 2})$. More precisely, the empirical spectral distribution of the centered-and-scaled matrix $\frac{1}{\sigma_N} [\bT^{(\ell)} \bT^{(\ell) \top} - \mu^{(\ell)}_N \bI_{n_\ell}]$ converges weakly to the semicircle distribution on $[-2, +2]$ (see Figure \ref{fig:eig}, Theorem \ref{thm:ed} and Corollary \ref{cor:lsd}). Furthermore, we show that a BBP phase transition phenomenon occurs: for each singular value of $\bP^{(\ell)}$ (the unfolding of $\bscrP$ along mode $\ell$) which is above the threshold $\sqrt{\sigma_N}$, an eigenvalue of $\bT^{(\ell)} \bT^{(\ell) \top}$ isolates itself on the right side of the \emph{bulk} (see Figure \ref{fig:eig}) and its corresponding eigenvector (i.e., left singular vector of $\bT^{(\ell)}$) is aligned with the corresponding singular subspace of $\bscrP$. The position of the isolated eigenvalue and this alignment are efficiently predicted by Theorem \ref{thm:spike} (see also Figure \ref{fig:eig}).

\begin{figure}
\centering
\input{subspace_alignments}
\caption{Alignments between singular subspaces (see Section \ref{sec:analysis:reconstruction}) of the observation $\bscrT = \sqrt{\omega} \bscrP_\circ + \frac{1}{\sqrt{N}} \bscrN$ and of the signal $\bscrP_\circ$, with $\norm{\bscrP_\circ}_\rmF^2 = \frac{\sqrt{n_1 n_2 n_3}}{N}$, as a function of the signal-to-noise ratio $\omega$. Theoretical alignments (Theorem \ref{thm:spike}) achieved with truncated MLSVD are compared with simulations and those achieved with the HOOI algorithm. Empirical results are averaged over $10$ trials, with error bars representing standard deviation. \textbf{Experimental setting:} $d  = 3$, $(n_1, n_2, n_3) = (100, 200, 300)$, $N = n_1 + n_2 + n_3$ and $(r_1, r_2, r_3) = (3, 4, 5)$.}
\label{fig:subspace_alignments}
\end{figure}

As a result, Figure \ref{fig:subspace_alignments} plots, for an order-$3$ tensor, as a function of the signal-to-noise ratio (SNR) $\omega = \norm{\bscrP}_\rmF^2 / \sigma_N$, the alignments between the singular subspace of the signal $\bscrP$ spanned by $\bX^{(\ell)}$ and the dominant singular subspace of the observation $\bscrT$ spanned by $\hat{\bU}^{(\ell)}$. Solid curves are the alignments predicted by Theorem \ref{thm:spike} while dotted curves are empirical alignments computed on a $100 \times 200 \times 300$ tensor with signal-rank $(3, 4, 5)$. If the SNR $\omega$ is too small, there is no alignment, meaning that truncated MLSVD fails to recover $\bscrP$ --- the signal is masked by the noise. When it exceeds a critical value (see Theorem \ref{thm:spike} and Section \ref{sec:analysis:reconstruction} for details), a phase transition phenomenon occurs\footnote{In fact, we will see in Section \ref{sec:analysis:reconstruction} that there is one phase transition for each principal direction of the singular subspaces of $\bscrP$, resulting in $\sum_{\ell = 1}^d r_\ell$ phase transitions. Their positions corresponds to sudden changes of slope in the solid curves of Figure \ref{fig:subspace_alignments}.}: the alignment starts to grow --- i.e., truncated MLSVD now partially recovers $\bscrP$ --- and converges to $1$ as $\omega \to +\infty$.

Besides, Figure \ref{fig:subspace_alignments} also plots the empirical alignments between the singular subspaces of $\bscrP$ and those estimated with the HOOI algorithm \citep{de_lathauwer_best_2000} given in Algorithm \ref{alg:hooi}, whose truncated MLSVD serves as initialization. This yields much better alignments, especially close to the phase transition. In fact, we show in Theorem \ref{thm:hooi} that the HOOI algorithm converges to a solution to Problem \eqref{eq:model} as soon as its initialization sufficiently preserves the underlying signal. This provides new insight into the computational barrier: initialization is the limiting factor here. Had one prior information on the solution, one could initialize the HOOI algorithm in the right basin of attraction and still be able to perfectly (i.e., with alignment $1$) reconstruct the signal in the regime $1 \ll \norm{\bscrP}_\rmF \ll N^{\frac{d - 2}{4}}$, which is computationally hard but statistically easy (see details in Section \ref{sec:best:hooi} and discussion in Section \ref{sec:best:reconstructibility}).

In a nutshell, our contributions can be summarized as follows.
\begin{itemize}
\item We characterize, in the large $N$ limit, the behavior of the singular values of the unfoldings of the tensor $\bscrT$ --- denoted $\bT^{(\ell)}$, $\ell \in \{1, \ldots, d\}$ --- when it follows the general spiked tensor model \eqref{eq:model} (Theorem \ref{thm:ed} and Corollary \ref{cor:lsd}). This is performed through the analysis of the limiting spectral distribution of the symmetric matrix $\bT^{(\ell)} \bT^{(\ell) \top}$ using standard tools from the theory of large random matrices.
\item We give a precise condition, depending on a $\bigTh(N^{\frac{d - 2}{4}})$ threshold on the signal, for the detectability of a principal direction of the $\ell$-th singular subspace of $\bscrP$ from the unfolding $\bT^{(\ell)}$. This corresponds to the presence of an isolated eigenvalue in the spectrum of $\bT^{(\ell)} \bT^{(\ell) \top}$ with associated eigenvector aligned with the sought singular subspace. We find similar formulae as \citet{feldman_spiked_2023}\footnote{\citet{feldman_spiked_2023} studies the spiked model associated with long random matrices within a similar framework as \citet{benaych-georges_eigenvalues_2011}. Our approach is different in that it mostly relies on a \emph{deterministic equivalent} which is introduced in Theorem \ref{thm:ed}.} for the asymptotic position of this isolated eigenvalue, as well as the quality of the alignment (Theorem \ref{thm:spike}).
\item Relying on our random matrix analysis, we characterize the performance of truncated MLSVD in the reconstruction of the signal $\bscrP$ from the observation $\bscrT$ (Section \ref{sec:analysis:reconstruction}).
\item We show that exact reconstruction of $\bscrP$ from $\bscrT$ is possible in the large $N$ regime with the HOOI algorithm \citep{de_lathauwer_best_2000} as long as $\norm{\bscrP}_\rmF \gg 1$ and it is initialized in the right bassin of attraction (Theorem \ref{thm:hooi}). Without prior information, this depends on the detectability of $\bscrP$ in the truncated MLSVD of $\bscrT$, which is only possible above the $\bigTh(N^{\frac{d - 2}{4}})$ computational threshold. Moreover, as $N \to +\infty$, the number of iterations needed for the convergence of the algorithm converges to $1$.
\end{itemize}

In section \ref{sec:preliminaries}, we introduce our notations, tensor-related operations and decompositions and useful tools from random matrix theory. Section \ref{sec:analysis} presents the random matrix analysis of long matrices emerging from the unfoldings of tensors following the general spiked tensor model \eqref{eq:model}. These results are presented in the context of truncated MLSVD and exploited to quantitatively explain its reconstruction performances. Then, relying on these results, Section \ref{sec:best} deals with the numerical estimation of a solution to Problem \eqref{eq:problem} with the HOOI algorithm. We show its asymptotic optimality and provide insight into the limiting factors for numerical estimation below the computational threshold. We conclude and discuss our results in Section \ref{sec:conclusion}. Most proofs are deferred to the appendix.

\section{Preliminaries on Tensors and Random Matrix Theory}
\label{sec:preliminaries}

We start by introducing some notation and the main tools which are needed to expose the results of the next sections.

\subsection{General Notations}
\label{sec:preliminaries:general}

$a$, $\ba$, $\bA$ and $\bscrA$ respectively denote a scalar, a vector, a matrix and a tensor. $a_i$, $A_{i, j}$ and $\scrA_{i_1, \ldots, i_d}$ are their entries. For $x \in \bbR$, $[x]^+ = \max(0, x)$. The imaginary part of $z \in \bbC$ is $\Im[z]$. The set $\{1, \ldots, n\}$ of positive integers smaller or equal to $n$ is denoted $[n]$. $\delta_x$ is the Dirac measure at point $x$. The support of a probability measure $\mu$ is denoted $\Supp \mu$. The notation $X \sim \calL$ means that the random variable $X$ is distributed according to the law $\calL$. Given a sequence of random variables $(X_n)_{n \geqslant 0}$, its convergence in distribution to $\calL$ is denoted $X_n \xrightarrow[n \to +\infty]{\calD} \calL$ and its almost sure convergence to $L$ is denoted $X_n \xrightarrow[n \to +\infty]{\text{a.s.}} L$. The normal distribution with mean $\mu$ and variance $\sigma^2$ is denoted $\calN(\mu, \sigma^2)$. The span of an $n_1 \times n_2$ matrix $\bA$ is $\Span \bA = \{ \bA \bx \mid \bx \in \bbR^{n_2} \} \subset \bbR^{n_1}$. The singular values of $\bA$ in \emph{non-increasing} order are denoted $s_1(\bA) \geqslant s_2(\bA) \geqslant \ldots \geqslant 0$. Given an $n \times n$ matrix $\bB$, its trace is $\Tr \bB = \sum_{i = 1}^n B_{i, i}$ and its spectrum, $\Sp \bB$, is the set of all its eigenvalues. $\norm{\cdot}$ denotes the standard Euclidean norm for vectors and the corresponding operator norm (spectral norm) for matrices.

Given two real sequences $(u_n)_{n \geqslant 0}$ and $(v_n)_{n \geqslant 0}$, we write $u_n = \bigO_{n \to +\infty}(v_n)$ if there exists a constant $C > 0$ and an integer $n_0$ such that $\lvert u_n \rvert \leqslant C \lvert v_n \rvert$ as soon as $n \geqslant n_0$. If $u_n = \bigO_{n \to +\infty}(v_n)$ and $v_n = \bigO_{n \to +\infty}(u_n)$\footnote{I.e., if there exist two constants $c, C > 0$ and an integer $n_0$ such that $c \lvert v_n \rvert \leqslant \lvert u_n \rvert \leqslant C \lvert v_n \rvert$ as soon as $n \geqslant n_0$.}, then we write $u_n = \bigTh_{n \to +\infty}(v_n)$. We also write $u_n \ll_{n \to +\infty} v_n$ (or $v_n \gg_{n \to +\infty} u_n$) if, for all $\varepsilon > 0$, there exists an integer $n_0$ such that $\lvert u_n \rvert \leqslant \varepsilon \lvert v_n \rvert$ as soon as $n \geqslant n_0$. When it is clear from context that $n \to +\infty$, we simply write $u_n = \bigTh(v_n)$, $u_n = \bigO(v_n)$, $u_n \ll v_n$ or $u_n \gg v_n$.

Unless stated otherwise, $d$ represents the order of a tensor; $\ell$ is an index ranging from $1$ to $d$ and $i_\ell$, $q_\ell$ range from $1$ to $n_\ell$, $r_\ell$ respectively.

\subsection{Tensors, Related Operations and Decompositions}
\label{sec:preliminaries:tensors}

For our purposes, tensors are considered as multi-way arrays: $\bscrT \in \bbR^{n_1 \times \ldots \times n_d}$ is a collection of elements $\scrT_{i_1, \ldots, i_d} \in \bbR$ with $i_\ell \in [n_\ell]$, $\ell \in [d]$. An \emph{$\ell$-fiber} of $\bscrT$ is the vector of $\bbR^{n_\ell}$ obtained by fixing all the indices of $\bscrT$ but the $\ell$-th. This is the generalization of columns and rows of matrices, which are respectively $1$- and $2$-fibers. \emph{Unfolding} (or matricization) is the process by which a matrix is built from a tensor --- $\bT^{(\ell)}$ is the unfolding of $\bscrT$ along mode $\ell$, i.e., the $n_\ell \times \prod_{\ell' \neq \ell} n_{\ell'}$ matrix whose columns are $\ell$-fibers of $\bscrT$\footnote{The order of the columns does not matter, as long as other operations, such as the Kronecker product, are defined in a consistent manner.}.

Given two $n_1 \times \ldots \times n_d$ tensors $\bscrA, \bscrB$, their scalar product is
\[
\Scal{\bscrA}{\bscrB}_\rmF \eqdef \sum_{i_1 = 1}^{n_1} \dots \sum_{i_d = 1}^{n_d} \scrA_{i_1, \ldots, i_d} \scrB_{i_1, \ldots, i_d}.
\]
The Frobenius norm of $\bscrA$ is $\norm{\bscrA}_\rmF \eqdef \sqrt{\scal{\bscrA}{\bscrA}_\rmF}$.

Just as a matrix is said to have rank $1$ if it can be expressed as the \emph{outer product} of two vectors, $\bx \by^\top \eqdef \bx \otimes \by$, an order-$d$ tensor is said to have rank $1$ if it can be expressed as the outer product of $d$ vectors, i.e., $\scrT_{i_1, \ldots, i_d} = x^{(1)}_{i_1} \ldots x^{(d)}_{i_d} \iff \bscrT \eqdef \bigotimes_{\ell = 1}^d \bx^{(\ell)}$.

Given two matrices $\bA$ and $\bB$ of respective sizes $n_1 \times n_2$ and $p_1 \times p_2$, their \emph{Kronecker product}, denoted $\bA \kron \bB$, is the $n_1 p_1 \times n_2 p_2$ matrix such that $[\bA \kron \bB]_{p_1 (i_1 - 1) + j_1, p_2 (i_2 - 1) + j_2} = A_{i_1, i_2} B_{j_1, j_2}$ for all $(i_1, i_2) \in [n_1] \times [n_2], (j_1, j_2) \in [p_1] \times [p_2]$. It is also defined for $n$-dimensional vectors, seen as $n \times 1$ matrices. This product is useful to express unfoldings of tensors defined as outer products. For instance,
\begin{equation} \label{eq:rank1_unfolding}
[\bx \otimes \by \otimes \bz]^{(1)} = \bx (\by \kron \bz)^\top, \quad [\bx \otimes \by \otimes \bz]^{(2)} = \by (\bx \kron \bz)^\top, \quad [\bx \otimes \by \otimes \bz]^{(3)} = \bz (\bx \kron \by)^\top.
\end{equation}
Among the various properties of the Kronecker product \citep[Chapter 10]{abadir_matrix_2005}, we highlight that it is bilinear, associative, \emph{non}-commutative, $(\bA \kron \bB)^\top = \bA^\top \kron \bB^\top$ and $(\bA \kron \bB) (\bC \kron \bD) = (\bA \bC) \kron (\bB \bD)$ when the matrix products $\bA \bC$ and $\bB \bD$ are defined.

Because of their multidimensional nature, tensors can be very large and cause computational and storage difficulties. Hence, tensor decompositions are an appropriate way to provide a sparse representation of these objects and reveal relevant knowledge from multi-way data arrays. We briefly present two of the most common decompositions, namely CPD and MLSVD, although many others exist (see, e.g., \citealp{kolda_tensor_2009, vervliet_breaking_2014}).

\paragraph{Canonical Polyadic Decomposition (CPD).} $\bscrT$ is decomposed as a sum of $R$ rank-$1$ terms, $\bscrT = \sum_{i = 1}^R \sigma_i \bigotimes_{\ell = 1}^d \ba^{(\ell)}_{i}$, with $\ba^{(\ell)}_i$ of unit norm, $\sigma_i > 0$ and minimal $R$. Introduced by \citet{hitchcock_expression_1927}, and also called CANDECOMP or PARAFAC, this decomposition is, up to permutation, essentially unique under mild conditions \citep{kolda_tensor_2009}.

\paragraph{Multilinear Singular Value Decomposition (MLSVD).} $\bscrT$ is written as the multilinear tensor-matrix product of an $r_1 \times \ldots \times r_d$ core tensor $\bscrG$ with $d$ factors $\bU^{(\ell)}$, $\ell \in [d]$, of respective sizes $n_\ell \times r_\ell$ and orthonormal columns $\bu^{(\ell)}_{q_\ell}$,
\[
\bscrT = \sum_{q_1 = 1}^{r_1} \dots \sum_{q_d = 1}^{r_d} \scrG_{q_1, \ldots, q_d} \bigotimes_{\ell = 1}^d \bu^{(\ell)}_{q_\ell} ~\eqdef~ \Tucker{\bscrG}{\bU^{(1)}, \ldots, \bU^{(d)}}.
\]
The integer $r_\ell$ is the dimension of the subspace spanned by the $\ell$-fibers of $\bscrT$, whose an orthonormal basis is formed by the columns of $\bU^{(\ell)}$. This decomposition, introduced by \citet{tucker_mathematical_1966}, is also called higher-order SVD (HOSVD, \citealp{de_lathauwer_multilinear_2000}). From this decomposition, the unfoldings of $\bscrT$ can be expressed with matrix and Kronecker products between the factor matrices and the unfoldings of the core tensor $\bscrG$. For instance, if $\bscrT = \tucker{\bscrG}{\bU, \bV, \bW}$,
\[
\bT^{(1)} = \bU \bG^{(1)} (\bV \kron \bW)^\top, \quad \bT^{(2)} = \bV \bG^{(2)} (\bU \kron \bW)^\top, \quad \bT^{(3)} = \bW \bG^{(3)} (\bU \kron \bV)^\top.
\]
Note how these expressions generalize \eqref{eq:rank1_unfolding}.

\paragraph{} The \emph{contraction} of $\bscrT$ on $\bA^{(1)}, \ldots, \bA^{(d)}$ with $\bA^{(\ell)} \in \bbR^{n_\ell \times p_\ell}$, $\ell \in [d]$, is the $p_1 \times \ldots \times p_d$ tensor $\bscrT(\bA^{(1)}, \ldots, \bA^{(d)})$ whose $(j_1, \ldots, j_d)$-entry is $\sum_{i_1 = 1}^{n_1} \dots \sum_{i_d = 1}^{n_d} \scrT_{i_1, \ldots, i_d} \prod_{\ell = 1}^d A^{(\ell)}_{i_\ell, j_\ell}$. If $\bscrT = \tucker{\bscrG}{\bU^{(1)}, \ldots, \bU^{(d)}}$ then $\bscrG = \tucker{\bscrT}{\bU^{(1) \top}, \ldots, \bU^{(d) \top}} = \bscrT(\bU^{(1)}, \ldots, \bU^{(d)})$. The tensor contraction generalizes the matrix operation $\bA^{(1) \top} \bM \bA^{(2)} = \bM(\bA^{(1)}, \bA^{(2)})$.

\begin{remark}[Uniqueness of the MLSVD up to isometries]
For all $r_\ell \times r_\ell$ orthogonal matrices $\bO^{(\ell)}$, $\ell \in [d]$, we have the equivalent decomposition
\[
\Tucker{\bscrG}{\bU^{(1)}, \ldots, \bU^{(d)}} = \Tucker{\breve{\bscrG}}{\bU^{(1)} \bO^{(1)}, \ldots, \bU^{(d)} \bO^{(d)}}
\]
where $\breve{\bscrG} = \bscrG(\bO^{(1)}, \ldots, \bO^{(d)})$ is the contraction of $\bscrG$ on $\bO^{(1)}, \ldots, \bO^{(d)}$. Nevertheless, up to isometries, the multilinear singular value decomposition is unique \citep[Property 4]{de_lathauwer_multilinear_2000}.
\end{remark}

The actual rank of a tensor is usually understood as its CPD-rank $R$ whereas its MLSVD-rank $(r_1, \ldots, r_d)$ is called \emph{multilinear rank}. These two ranks are constrained by the following inequalities,
\begin{equation} \label{eq:ineq_rank}
\max_{1 \leqslant \ell \leqslant d} r_\ell \leqslant R \leqslant \min_{1 \leqslant \ell \leqslant d} \prod_{\ell' \neq \ell} r_{\ell'},
\end{equation}
which implies one can generally speak of a ``low-rank tensor'' without having to mention the rank to which it refers.

\subsection{Tools from Random Matrix Theory}
\label{sec:preliminaries:rmt}

The results presented in this work rely on tools from the theory of large random matrices. Given a certain $n \times n$ matrix $\bM$ whose entries are random variables, one is interested in the behavior its eigenvalues and eigenvectors as $n \to +\infty$. Common questions are, in this regime, how does the \emph{empirical spectral distribution} (ESD) of $\bM$, i.e., $\frac{1}{n} \sum_{\lambda \in \Sp \bM} \delta_{\lambda}$, behave? Does it (weakly) converge to a \emph{limiting spectral distribution} (LSD)? Are there any isolated eigenvalues? If so, are the corresponding eigenvectors aligned with a relevant subspace?

In case $\bM$ is symmetric, its eigenvalues and eigenvectors are real and a key tool to answer the preceding questions is its resolvent, $\bQ_\bM(z) = \left( \bM - z \bI_n \right)^{-1}$, defined for all $z \in \bbC \setminus \Sp \bM$. Indeed, $\frac{1}{n} \Tr \bQ_\bM(z) = \frac{1}{n} \sum_{\lambda \in \Sp \bM} \frac{1}{\lambda - z}$ is the \emph{Stieltjes transform} of its empirical spectral distribution.

\begin{definition}[Stieltjes transform] \label{def:stieltjes_transform}
The Stieltjes transform of a real probability measure $\mu$ is defined for all $z \in \bbC \setminus \Supp \mu$ as $m_\mu(z) = \int_\bbR \frac{1}{t - z} \rmd \mu(t)$.
\end{definition}

The knowledge of $m_\mu$ is equivalent to the knowledge of $\mu$ thanks to the inversion formula $\mu([a, b]) = \frac{1}{\pi} \lim_{\eta \downarrow 0} \int_a^b \Im[m_\mu(x + \rmi \eta)] ~\rmd x$ for any continuity points $a, b$ of $\mu$ \citep[Theorem 2.1]{couillet_random_2022}. Hence, following the behavior of $\frac{1}{n} \Tr \bQ_\bM(z)$ as $n \to +\infty$ gives insight into the limiting spectral distribution of $\bM$ via its Stieltjes transform. To do so, we seek first a \emph{deterministic equivalent} of $\bQ_\bM$.

\begin{definition}[Deterministic equivalent] \label{def:matrix_equivalent}
Let $\bX$ be a random $n \times n$ matrix and $\bar{\bX}$ be a deterministic $n \times n$ matrix. We write $\bX \leftrightarrow \bar{\bX}$ if, for all deterministic matrices $\bA \in \bbR^{n \times n}$ and vectors $\ba, \bb \in \bbR^n$ of bounded norms (spectral and Euclidean norms respectively)\footnote{Strictly speaking, we refer to \emph{sequences} of matrices and vectors whose size grow with the index $n$. Yet, this mathematical aspect is made implicit in order to simplify notation and be closer to practical considerations where these sequences, in fact, do not exist --- the assumption $n \to +\infty$ models the fact that $n$ is large but remains finite.},
\[
\frac{1}{n} \Tr \bA \left( \bX - \bar{\bX} \right) \xrightarrow[n \to +\infty]{\text{a.s.}} 0 \qquad \text{and} \qquad \ba^\top \left( \bX - \bar{\bX} \right) \bb \xrightarrow[n \to +\infty]{\text{a.s.}} 0.
\]
The matrix $\bY$ is called a \emph{deterministic equivalent} of $\bX$. For more details on this notion, see \citet[\S 2.1.4]{couillet_random_2022}.
\end{definition}
The following lemma will be extensively used to derive such equivalents.
\begin{lemma}[\citealp{stein_estimation_1981}] \label{lem:stein}
Let $Z \sim \calN(0, 1)$ and $f : \bbR \to \bbC$ be a continuously differentiable function. When the following expectations exist, $\esp{Z f(Z)} = \esp{f'(Z)}$.
\end{lemma}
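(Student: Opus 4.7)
The plan is to invoke Gaussian integration by parts. The key observation is that the standard normal density $\varphi(z) = \frac{1}{\sqrt{2\pi}} \exp(-z^2/2)$ satisfies the differential identity $\varphi'(z) = -z \varphi(z)$, which converts the factor $z$ inside the integrand $z f(z) \varphi(z)$ into a derivative of $\varphi$ that can be shifted onto $f$.

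Concretely, I would first fix $M > 0$ and apply integration by parts on the compact interval $[-M, M]$, using $\rmd \varphi = -z \varphi(z) \, \rmd z$:
\[
\int_{-M}^M z f(z) \varphi(z) \, \rmd z = -\bigl[ f(z) \varphi(z) \bigr]_{-M}^M + \int_{-M}^M f'(z) \varphi(z) \, \rmd z.
\]
The continuous differentiability of $f$ justifies this step. Letting $M \to +\infty$, the two integrals converge respectively to $\esp{Z f(Z)}$ and $\esp{f'(Z)}$ by the assumed existence of these expectations, so the identity $\esp{Z f(Z)} = \esp{f'(Z)}$ reduces to showing that the boundary contribution $f(M) \varphi(M) - f(-M) \varphi(-M)$ vanishes as $M \to +\infty$.

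The main (and only) obstacle is justifying this vanishing under the sole hypothesis that the two expectations exist. My argument would go as follows: the displayed identity itself shows that $f(M) \varphi(M)$ admits a finite limit $L_+$ as $M \to +\infty$ (since the two integrals do), and similarly a limit $L_-$ as $M \to -\infty$. If $L_+ \neq 0$, then for all sufficiently large $z$ one has $\lvert f(z) \rvert \geqslant \frac{1}{2} \lvert L_+ \rvert / \varphi(z)$, hence $\lvert z f(z) \varphi(z) \rvert \geqslant \frac{1}{2} \lvert L_+ \rvert \, \lvert z \rvert$, which is non-integrable at $+\infty$ and contradicts the existence of $\esp{Z f(Z)}$. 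Thus $L_+ = 0$, and the symmetric argument yields $L_- = 0$, concluding the proof.
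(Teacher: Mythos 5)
The paper does not prove this lemma; it simply cites \citet{stein_estimation_1981} and uses it as a black box, so there is no in-paper argument to compare against. Your proof is the standard Gaussian integration-by-parts argument and is essentially correct, including the often-omitted justification that the boundary term vanishes: the observation that a nonzero limit $L_+$ of $f(z)\varphi(z)$ would force $\lvert z f(z) \varphi(z) \rvert \gtrsim \lvert z \rvert$ at infinity, contradicting the integrability implicit in the existence of $\esp{Z f(Z)}$, is exactly the right way to close that gap. One small imprecision: the identity over the symmetric interval $[-M, M]$ only shows that the \emph{difference} $f(M)\varphi(M) - f(-M)\varphi(-M)$ converges, not each term separately. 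To extract the individual limits $L_+$ and $L_-$ you should integrate by parts over $[0, M]$ and $[-M, 0]$ separately (each one-sided integral of $z f(z)\varphi(z)$ and of $f'(z)\varphi(z)$ converges by the assumed existence of the expectations), after which your contradiction argument applies verbatim to each tail. With that cosmetic adjustment the proof is complete.
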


Once a deterministic equivalent $\bar{\bQ}_\bM$ of $\bQ_\bM$ is found, the limiting spectral distribution of $\bM$ is accessible through the limit of $\frac{1}{n} \Tr \bar{\bQ}_\bM$ as $n \to +\infty$. In the study of spiked models, eigenvalues are usually gathered in a \emph{bulk}, described by the limiting spectral distribution (LSD), with a finite number of isolated eigenvalues, which do not appear in the LSD (see Figure \ref{fig:eig} for example). The position of these isolated eigenvalues are singular points of the resolvent $\bQ_\bM$. Hence, the asymptotic analysis of the resolvent provides equations yielding their almost sure asymptotic position. Moreover, thanks to Cauchy's integral formula, for all $\ba \in \bbR^n$, $\lvert \ba^\top \bu \rvert^2 = \frac{-1}{2 \rmi \pi} \oint_\gamma \ba^\top \bQ_\bM(z) \ba ~\rmd z$ where $\gamma$ is a positively-oriented simple closed complex contour circling around an isolated eigenvalue (assuming it has multiplicity $1$) with all other eigenvalues left outside and $\bu$ is the eigenvector associated to the corresponding eigenvalue. The almost sure asymptotic value of this contour integral can be computed with our deterministic equivalent, thus yielding formulae for the alignments of the \emph{spike} eigenvectors with relevant directions of the model.

This briefly summarizes the main techniques we use in our proofs, see Appendix \ref{app:more_tools} for an extension. A complete presentation of these tools can be found in \citet{couillet_random_2022}. Other valuable references for the reader interested in random matrix theory are \citet{potters_first_2020, bai_spectral_2010, pastur_eigenvalue_2011, tao_topics_2012}.

\section{Analysis of Truncated MLSVD under the General Spiked Tensor Model}
\label{sec:analysis}

This section presents a random matrix analysis of the general spiked tensor model introduced in Equation \eqref{eq:model} using tools presented in Section \ref{sec:preliminaries:rmt}. We give precise results on the spectral behavior of the unfoldings of the observed tensor $\bscrT$, and specify the achievable performance in the estimation of the underlying signal $\bscrP$ with a truncated MLSVD. Although, as explained in Section \ref{sec:intro:low-rank}, this approach is only \emph{quasi-optimal}, it is very easy to implement and represents an excellent ``first guess'' to initialize a numerical scheme converging to a solution to Problem \eqref{eq:problem}, which is discussed in Section \ref{sec:best}.

\subsection{Random Matrix Results on the Model}
\label{sec:analysis:main}

Under the general spiked tensor model \eqref{eq:model}, we consider an $n_1 \times \ldots \times n_d$ tensor $\bscrT = \bscrP + \frac{1}{\sqrt{N}} \bscrN$ of order $d \geqslant 3$, modeling a low-rank deterministic signal $\bscrP$ corrupted by an additive Gaussian noise tensor $\bscrN$ whose entries are independent $\calN(0, 1)$ random variables\footnote{We highlight the fact that the Gaussian noise assumption is not restrictive. Firstly, the universality result of \citet{gurau_universality_2014} shows that, as $N \to +\infty$, the distribution of a random tensor with i.i.d.\ entries has the same limit than that of a tensor with i.i.d.\ \emph{Gaussian} entries. Moreover, our results can be extended to non-Gaussian noise up to a control on the moments of the distribution with the ``interpolation trick'' of \citet[Corollary 3.1]{lytova_central_2009}. These are technical aspects which go beyond the scope of our work. Examples of results relying on this method are \citet[Theorems 18.4.2 and 19.2.1]{pastur_eigenvalue_2011}; \citet{merlevede_universality_2015, banna_limiting_2015}.}. We denote by $(r_1, \ldots, r_d)$ the multilinear rank of $\bscrP$ and study this model in the asymptotic regime where $N \to +\infty$ with $n_\ell = \bigTh(N)$ and $r_\ell = \bigTh(1)$, $\ell \in [d]$.

The estimation of $\bscrP$ with a truncated MLSVD on $\bscrT$ is simply the computation of the dominant singular subspaces of $\bscrT$. Specifically, $\hat{\bU}^{(\ell)} \in \bbR^{n_\ell \times r_\ell}$ gathers the $r_\ell$ dominant left singular vectors of $\bT^{(\ell)}$ --- and thus, $\hat{\bU}^{(\ell) \top} \hat{\bU}^{(\ell)} = \bI_{r_\ell}$. Then, a low-multilinear-rank approximation of $\bscrT$ is $\hat{\bscrT} = \tucker{\hat{\bscrG}}{\hat{\bU}^{(1)}, \ldots, \hat{\bU}^{(d)}}$ with an $r_1 \times \ldots \times r_d$ core tensor $\hat{\bscrG} = \bscrT(\hat{\bU}^{(1)}, \ldots, \hat{\bU}^{(d)})$. An equivalent expression is $\hat{\bscrT} = \tucker{\bscrT}{\hat{\bU}^{(1)} \hat{\bU}^{(1) \top}, \ldots, \hat{\bU}^{(d)} \hat{\bU}^{(d) \top}}$, which explicitly shows that $\hat{\bscrT}$ is the projection of $\bscrT$ on its dominant singular subspaces\footnote{In the more familiar matrix case ($d = 2$), the expression $\tucker{\bT}{\hat{\bU}^{(1)} \hat{\bU}^{(1) \top}, \hat{\bU}^{(2)} \hat{\bU}^{(2) \top}}$ is equivalent to $\hat{\bU}^{(1)} \hat{\bU}^{(1) \top} \bT \hat{\bU}^{(2)} \hat{\bU}^{(2) \top}$.}. Thus, the quality of this estimation hinges upon the alignments between the singular subspaces of $\bscrP$ and the dominant singular subspaces of $\bscrT$. Namely, denoting $\hat{\bu}^{(\ell)}_{q_\ell}$, for $q_\ell \in [r_\ell]$, the columns of $\hat{\bU}^{(\ell)} = \begin{bmatrix} \hat{\bu}^{(\ell)}_1 & \dots & \hat{\bu}^{(\ell)}_{r_\ell} \end{bmatrix}$ and given that $\bscrP = \tucker{\bscrH}{\bX^{(1)}, \ldots, \bX^{(d)}}$, the quantities of interest are $\norm{\bX^{(\ell) \top} \hat{\bu}^{(\ell)}_{q_\ell}}^2$ since they represent how much of $\hat{\bu}^{(\ell)}_{q_\ell}$ is in the $\ell$-th singular subspace of the signal $\bscrP$.

In order to understand how the singular subspaces of $\bscrP$ are perturbed by the addition of noise, we study the spectral properties of the unfoldings $\bT^{(\ell)} = \bP^{(\ell)} + \frac{1}{\sqrt{N}} \bN^{(\ell)}$. In fact, since we are only interested in the \emph{left} singular vectors of $\bT^{(\ell)}$, it is more convenient to consider the $n_\ell \times n_\ell$ \emph{symmetric} matrix $\bT^{(\ell)} \bT^{(\ell) \top}$. Note that this is different from standard spiked matrix models \citep{benaych-georges_eigenvalues_2011} because the second dimension of $\bT^{(\ell)}$ grows at a faster polynomial rate than the first one ($\bigTh(N^{d - 1})$ versus $\bigTh(N)$). Hence, it is easy to see that the spectrum of $\bT^{(\ell)} \bT^{(\ell) \top}$ should diverge as $N \to +\infty$: set $\bscrP = \bzero_{n_1 \times \ldots \times n_d}$ for simplicity and consider the expected mean of the eigenvalues,
\[
\frac{1}{n_\ell} \Esp{\sum_{\lambda \in \Sp \left( \bT^{(\ell)} \bT^{(\ell) \top} \right)} \lambda} = \frac{1}{n_\ell} \Esp{\Tr \left( \frac{1}{N} \bN^{(\ell)} \bN^{(\ell) \top} \right)} = \frac{1}{N} \prod_{\ell' \neq \ell} n_{\ell'} \xrightarrow[N \to +\infty]{} +\infty.
\]
Hence, we need to consider instead a \emph{centered-and-scaled} version of our random matrix $\bT^{(\ell)} \bT^{(\ell) \top}$ to properly study the behavior of its spectrum. The quantities $\mu^{(\ell)}_N$ and $\sigma_N$ introduced in Theorem \ref{thm:ed} below are such that the eigenvalues of $\frac{1}{\sigma_N} \left[ \bT^{(\ell)} \bT^{(\ell) \top} - \mu^{(\ell)}_N \bI_{n_\ell} \right]$ neither diverge nor vanish but stay at a $\bigTh(1)$ scale as $N \to +\infty$.

\begin{theorem}[Deterministic equivalent] \label{thm:ed}
For $\ell \in [d]$, define the following quantities,
\[
\mu^{(\ell)}_N = \frac{1}{N} \prod_{\ell' \neq \ell} n_{\ell'}, \qquad \sigma_N = \frac{1}{N} \sqrt{\prod_{\ell \in [d]} n_\ell}.
\]
As $N \to +\infty$, if the ratio $\norm{\bscrP}_\rmF^2 / \sigma_N$ is bounded, then the resolvent of the centered-and-scaled matrix $\frac{1}{\sigma_N} \left[ \bT^{(\ell)} \bT^{(\ell) \top} - \mu^{(\ell)}_N \bI_{n_\ell} \right]$ has the following deterministic equivalent (Definition \ref{def:matrix_equivalent}), defined for all $\tilde{z} \in \bbC \setminus \bbR$,
\begin{multline*}
\widetilde{\bQ}^{(\ell)}(\tilde{z}) \eqdef \left( \frac{1}{\sigma_N} \left[ \bT^{(\ell)} \bT^{(\ell) \top} - \mu^{(\ell)}_N \bI_{n_\ell} \right] - \tilde{z} \bI_{n_\ell} \right)^{-1} \\ \longleftrightarrow~ \bar{\bQ}^{(\ell)}(\tilde{z}) \eqdef \left( \frac{1}{\sigma_N} \bP^{(\ell)} \bP^{(\ell) \top} + \frac{1}{\widetilde{m}(\tilde{z})} \bI_{n_\ell} \right)^{-1}
\end{multline*}
where $\displaystyle \widetilde{m}(\tilde{z}) \eqdef \lim_{N \to +\infty} \frac{1}{n_\ell} \Tr \widetilde{\bQ}^{(\ell)}(\tilde{z})$ does not depend on $\ell \in [d]$ and satisfies the following equation,
\begin{equation} \label{eq:stieltjes}
\widetilde{m}^2(\tilde{z}) + \tilde{z} \widetilde{m}(\tilde{z}) + 1 = 0.
\end{equation}
\end{theorem}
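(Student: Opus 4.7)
The goal is to show the matrix equivalence $\widetilde{\bQ}^{(\ell)}(\tilde{z}) \leftrightarrow \bar{\bQ}^{(\ell)}(\tilde{z})$ by producing a closed self-consistent equation for the expected resolvent $\Esp{\widetilde{\bQ}^{(\ell)}}$ through Gaussian integration by parts, then establishing concentration around this expectation. Starting from the resolvent identity
\[
\widetilde{\bQ}^{(\ell)}(\tilde{z})\left[\bM^{(\ell)} - \tilde{z}\bI_{n_\ell}\right] = \bI_{n_\ell}, \qquad \bM^{(\ell)} \eqdef \sigma_N^{-1}\left[\bT^{(\ell)}\bT^{(\ell)\top} - \mu^{(\ell)}_N\bI_{n_\ell}\right],
\]
I expand $\bT^{(\ell)}\bT^{(\ell)\top}$ into the deterministic signal $\bP^{(\ell)}\bP^{(\ell)\top}$, the two cross terms $N^{-1/2}(\bP^{(\ell)}\bN^{(\ell)\top} + \bN^{(\ell)}\bP^{(\ell)\top})$, and the Wishart piece $N^{-1}\bN^{(\ell)}\bN^{(\ell)\top}$ whose expectation is exactly $\mu^{(\ell)}_N\bI_{n_\ell}$. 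The rescaling $\sigma_N^{-1}$ is tuned precisely so that the centered Wishart part has fluctuations of order $1$, while in the non-trivial regime $\norm{\bscrP}_\rmF^2/\sigma_N = \O(1)$ the signal contribution is also $\O(1)$ along the relevant directions.

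\textbf{Self-consistent equation.} The core of the proof applies Stein's lemma (Lemma \ref{lem:stein}) to every independent Gaussian entry $N^{(\ell)}_{ij}$ in order to convert expectations of the form $\Esp{N^{(\ell)}_{ij}[\widetilde{\bQ}^{(\ell)}\bT^{(\ell)}]_{ab}}$ into expectations of derivatives $\partial\widetilde{\bQ}^{(\ell)}/\partial N^{(\ell)}_{ij} = -\widetilde{\bQ}^{(\ell)}(\partial\bM^{(\ell)}/\partial N^{(\ell)}_{ij})\widetilde{\bQ}^{(\ell)}$. Grouping the contributions carefully, the centered Wishart piece produces at leading order a \emph{self-energy} term $-\tilde{m}_N(\tilde{z})\Esp{\widetilde{\bQ}^{(\ell)}}$ with $\tilde{m}_N(\tilde{z}) \eqdef n_\ell^{-1}\Tr\Esp{\widetilde{\bQ}^{(\ell)}(\tilde{z})}$, while the cross terms contribute only $\o(1)$ corrections: the Stein derivative gains an extra $\sigma_N^{-1}N^{-1/2}$ factor, and the bounded rank $r_\ell = \Th(1)$ of $\bP^{(\ell)}$ prevents these corrections from accumulating. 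Collecting yields the key identity
\[
\Esp{\widetilde{\bQ}^{(\ell)}(\tilde{z})}\left[\sigma_N^{-1}\bP^{(\ell)}\bP^{(\ell)\top} - (\tilde{z}+\tilde{m}_N(\tilde{z}))\bI_{n_\ell}\right] = \bI_{n_\ell} + \bvarepsilon_N,
\]
with $\norm{\bvarepsilon_N} \to 0$, so that $\Esp{\widetilde{\bQ}^{(\ell)}} - \bar{\bQ}^{(\ell)} \to \bm{0}$ in operator norm as soon as $\tilde{m}_N \to \tilde{m}$ is established.

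\textbf{Fixed point, concentration, and main obstacle.} Taking the normalized trace of the previous identity and noting that $\sigma_N^{-1}\bP^{(\ell)}\bP^{(\ell)\top}$ is a rank-$r_\ell$ perturbation acting in a space of dimension $n_\ell \to +\infty$, the limiting Stieltjes transform satisfies $\tilde{m}(\tilde{z}) = -[\tilde{z}+\tilde{m}(\tilde{z})]^{-1}$, which rearranges to the quadratic equation \eqref{eq:stieltjes} of the semicircle on $[-2, 2]$; the $\ell$-independence of $\tilde{m}$ follows because the derivation only uses that $\bN^{(\ell)}$ has i.i.d.\ $\calN(0,1)$ entries rescaled by the universal factor $\sigma_N$. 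Concentration of bilinear forms $\ba^\top\widetilde{\bQ}^{(\ell)}\bb$ and normalized traces $n_\ell^{-1}\Tr\bA\widetilde{\bQ}^{(\ell)}$ around their expectations follows from a Gaussian Poincaré-Nash inequality applied to the Lipschitz map $\bN^{(\ell)} \mapsto \ba^\top\widetilde{\bQ}^{(\ell)}\bb$, giving variance of order $1/N$ and almost sure convergence by Borel-Cantelli. The main technical obstacle is the careful bookkeeping of Stein corrections: since $\bN^{(\ell)}$ has $\Th(N^{d-1})$ columns against only $\Th(N)$ rows, naive spectral-norm bounds on products like $\bP^{(\ell)}\bN^{(\ell)\top}$ already diverge for $d \geqslant 4$, so one must exploit that $\bP^{(\ell)}$ projects onto a fixed $r_\ell$-dimensional subspace (hence only $\O(1)$-rank bilinear forms enter the relevant quantities) together with the choice of $\sigma_N$ as the exact scale at which the centered Wishart part becomes of order $1$.
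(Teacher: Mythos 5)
Your proposal is correct and follows essentially the same route as the paper's proof: Gaussian integration by parts (Stein's lemma) applied to the resolvent identity to produce the self-consistent equation $\Esp{\widetilde{\bQ}^{(\ell)}}\bigl[\sigma_N^{-1}\bP^{(\ell)}\bP^{(\ell)\top} - (\tilde z + \tilde m)\bI_{n_\ell}\bigr] = \bI_{n_\ell} + \o(1)$, with the cross terms shown to be negligible thanks to the bounded rank of $\bP^{(\ell)}$, followed by the normalized trace to obtain the quadratic fixed point and Poincaré--Nash plus Borel--Cantelli for concentration. The only cosmetic difference is that the paper derives the equation for the unrescaled resolvent first and rescales afterwards, whereas you work with the rescaled resolvent from the outset; the computations are identical.
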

\begin{proof}
See Appendix \ref{proof:thm:ed}.
\end{proof}

This theorem is fundamental. It gives a deterministic equivalent $\bar{\bQ}^{(\ell)}(\tilde{z})$ of the resolvent of the centered-and-scaled matrix $\frac{1}{\sigma_N} \left[ \bT^{(\ell)} \bT^{(\ell) \top} - \mu^{(\ell)}_N \bI_{n_\ell} \right]$, which is our entry point into the precise characterization of its spectral behavior, following the approach presented in Section \ref{sec:preliminaries:rmt}. First of all, notice that the ``scaling parameters'' $\mu^{(\ell)}_N$ and $\sigma_N$ are respectively $\bigTh(N^{d - 2})$ and $\bigTh(N^{\frac{d - 2}{2}})$, meaning that the eigenvalues of $\bT^{(\ell)} \bT^{(\ell) \top}$ grow at a speed $N^{d - 2}$ and spread over an interval whose length grows as $\sqrt{N^{d - 2}}$ and does not depend on the mode $\ell$.

Moreover, the relation given in Equation \eqref{eq:stieltjes} characterizes $\widetilde{m}$, the Stieltjes transform of the limiting spectral distribution (LSD) of $\frac{1}{\sigma_N} \left[ \bT^{(\ell)} \bT^{(\ell) \top} - \mu^{(\ell)}_N \bI_{n_\ell} \right]$. We see that this LSD is the same regardless of the low-rank perturbation $\bP^{(\ell)} \bP^{(\ell) \top}$, as it is expected that the perturbation should only cause the presence of a \emph{finite} number of isolated eigenvalues in the empirical spectral distribution. Notice that, if $\bP^{(\ell)} = \bzero_{n_1 \times \ldots \times n_d}$, we simply have $\bar{\bQ}^{(\ell)}(\tilde{z}) = \widetilde{m}(\tilde{z}) \bI_{n_\ell}$. As a corollary, we recover the limiting spectral distribution of long random matrices, which was first characterized by \citet{bai_convergence_1988}.

\begin{corollary}[Limiting spectral distribution] \label{cor:lsd}
As $N \to +\infty$, the empirical spectral distribution of the centered-and-scaled matrix $\frac{1}{\sigma_N} \left[ \bT^{(\ell)} \bT^{(\ell) \top} - \mu^{(\ell)}_N \bI_{n_\ell} \right]$ converges weakly almost surely to $\mu_\mathrm{SC}$, the semicircle distribution on $[-2, +2]$,
\[
\rmd \mu_\mathrm{SC}(x) \eqdef \frac{1}{2 \pi} \sqrt{\left[ 4 - x^2 \right]^+} \rmd x.
\]
\end{corollary}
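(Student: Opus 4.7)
The plan is to extract the limiting Stieltjes transform from Theorem \ref{thm:ed} and identify it with that of the semicircle law, then invoke a standard continuity argument to upgrade pointwise Stieltjes convergence to weak convergence of the empirical spectral distribution.

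First, I would observe that the trace of the deterministic equivalent $\bar{\bQ}^{(\ell)}(\tilde z)$ is essentially $\tilde m(\tilde z)$ up to a negligible correction. Indeed, since $\bP^{(\ell)} \bP^{(\ell)\top}$ has rank at most $r_\ell = \Th(1)$, the resolvent identity
\[
\bar{\bQ}^{(\ell)}(\tilde z) = \tilde m(\tilde z) \bI_{n_\ell} - \tilde m(\tilde z) \cdot \tfrac{1}{\sigma_N} \bP^{(\ell)} \bP^{(\ell)\top} \bar{\bQ}^{(\ell)}(\tilde z)
\]
implies $\tfrac{1}{n_\ell}\Tr \bar{\bQ}^{(\ell)}(\tilde z) = \tilde m(\tilde z) + \O(N^{-1})$. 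Combined with the definition of $\leftrightarrow$ in Theorem \ref{thm:ed}, this gives the almost sure convergence
\[
m_N(\tilde z) \eqdef \tfrac{1}{n_\ell}\Tr \widetilde{\bQ}^{(\ell)}(\tilde z) \xrightarrow[N \to +\infty]{\mathrm{a.s.}} \tilde m(\tilde z), \qquad \tilde z \in \bbC \setminus \bbR,
\]
where the left-hand side is precisely the Stieltjes transform of the empirical spectral distribution of the rescaled matrix.

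Next, I would solve the characteristic equation \eqref{eq:stieltjes}, which reads $\tilde m^2 + \tilde z \tilde m + 1 = 0$, giving
\[
\tilde m(\tilde z) = \tfrac{1}{2}\bigl(-\tilde z + \sqrt{\tilde z^2 - 4}\,\bigr),
\]
the branch being fixed by the requirement $\Im[\tilde m(\tilde z)] > 0$ when $\Im[\tilde z] > 0$ (equivalently, $\tilde m(\tilde z) \sim -1/\tilde z$ as $|\tilde z| \to +\infty$, which is forced by $m_N$ being a Stieltjes transform). This is exactly the well-known Stieltjes transform of the semicircle distribution $\mu_{\mathrm{SC}}$ on $[-2,+2]$ (see, e.g., \citealp[Chapter 2]{couillet_random_2022}).

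Finally, I would apply the Stieltjes continuity theorem: pointwise almost sure convergence of the Stieltjes transform of the ESD on $\bbC \setminus \bbR$ to the Stieltjes transform of a probability measure implies weak convergence of the ESD to that measure. Equivalently, one reads off the density via the inversion formula
\[
\tfrac{\mathrm d \mu_{\mathrm{SC}}}{\mathrm d x}(x) = \tfrac{1}{\pi}\lim_{\eta \downarrow 0}\Im[\tilde m(x + \rmi\eta)] = \tfrac{\sqrt{[4 - x^2]^+}}{2\pi}.
\]
I expect the substantive mathematical content to lie entirely in Theorem \ref{thm:ed}; the step here that requires any care is the branch choice for the square root, which is standard but essential to avoid getting the wrong sign of the density, and the justification that almost sure pointwise convergence of Stieltjes transforms (on, say, a countable dense subset of the upper half-plane with an accumulation point) suffices to deduce weak convergence of the random measures $\tfrac{1}{n_\ell}\sum_{\lambda \in \Sp}\delta_\lambda$.
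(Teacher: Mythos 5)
Your proposal is correct and follows essentially the same route as the paper: solve the quadratic \eqref{eq:stieltjes} for $\tilde{m}$, fix the branch by the Stieltjes-transform property $\Im[\tilde{z}]\Im[\tilde{m}(\tilde{z})] > 0$, identify the result as the Stieltjes transform of $\mu_\text{SC}$, and conclude via the inversion formula. The only addition is your explicit check that the low-rank term $\frac{1}{\sigma_N}\bP^{(\ell)}\bP^{(\ell)\top}$ contributes only $\O(N^{-1})$ to the normalized trace of $\bar{\bQ}^{(\ell)}$, a detail the paper absorbs into the statement of Theorem \ref{thm:ed} (where $\tilde{m}$ is already defined as the limit of $\frac{1}{n_\ell}\Tr\widetilde{\bQ}^{(\ell)}$) rather than re-deriving in the corollary's proof.
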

\begin{proof}
Following Equation \eqref{eq:stieltjes}, $\widetilde{m}(\tilde{z}) = \frac{1}{2} \left[ -z \pm \sqrt{z^2 - 4} \right]$ where the $\pm$ sign is chosen so that $\widetilde{m}$ satisfies the properties of a Stieltjes transform, in particular $\Im[\tilde{z}] \Im[\widetilde{m}(\tilde{z})] > 0$ for all $\tilde{z} \in \bbC \setminus \bbR$. Then, the result follows from the Stieltjes transform inversion formula introduced in Section \ref{sec:preliminaries:rmt} (see also \citealp[\S 2.1.2]{couillet_random_2022}).
\end{proof}

This result states that the limiting spectral distribution of $\frac{1}{\sigma_N} \left[ \bT^{(\ell)} \bT^{(\ell) \top} - \mu^{(\ell)}_N \bI_{n_\ell} \right]$ is, in fact, the very-well-known semicircle distribution, first observed by \citet{wigner_characteristic_1955, wigner_distribution_1958} in the study of certain special classes of random matrices arising in quantum mechanics. It indicates that, as $N \to +\infty$, the density of eigenvalues of $\bT^{(\ell)} \bT^{(\ell) \top}$ is a stretched semicircle on $[\mu^{(\ell)}_N \pm 2 \sigma_N]$. This phenomenon is illustrated in the first row of Figure \ref{fig:eig}, where the empirical spectral distribution (ESD) of $\bT^{(\ell)} \bT^{(\ell) \top}$ is represented with the corresponding stretched semicircle for every mode $\ell$ of an order-$3$ tensor of size $300 \times 500 \times 700$ following the general spiked tensor model \eqref{eq:model}.

\begin{figure}
\centering
\input{eig}
\caption{\textbf{Top:} empirical spectral distribution (ESD) of $\bT^{(\ell)} \bT^{(\ell) \top}$. The orange curve is the density of the stretched semicircle on $[\mu^{(\ell)}_N \pm 2 \sigma_N]$ (Corollary \ref{cor:lsd}). Green dashed lines represent asymptotic positions of spikes $\mu^{(\ell)}_N + \sigma_N \tilde{\xi}^{(\ell)}_{q_\ell}$ (Theorem \ref{thm:spike}). \textbf{Bottom:} Observed alignments between the dominant eigenvectors of $\bT^{(\ell)} \bT^{(\ell) \top}$ and $\bP^{(\ell)} \bP^{(\ell) \top}$ (purple bars) with their predicted asymptotic values $[\zeta^{(\ell)}_{q_\ell}]^+$ (red curve, Theorem \ref{thm:spike}). \textbf{Experimental setting:} $d = 3$, $(n_1, n_2, n_3) = (300, 500, 700)$, $N = n_1 + n_2 + n_3$, $(r_1, r_2, r_3) = (3, 4, 5)$ and $\norm{\bscrP}_\rmF^2 / \sigma_N = 15$.}
\label{fig:eig}
\end{figure}

\begin{remark}[From Mar\v{c}enko-Pastur to Wigner] \label{rmk:mp_wigner}
Given a random matrix $\bX \in \bbR^{p_1 \times p_2}$ with i.i.d.\ $\calN(0, \frac{1}{p_2})$ entries, it is well known that the empirical spectral distribution of $\bX \bX^\top$ converges weakly to the Mar\v{c}enko-Pastur distribution as $p_1, p_2 \to +\infty$ with $p_1 / p_2 \to c > 0$ (\citealp{marcenko_distribution_1967}; \citealp[Corollary 7.2.5]{pastur_eigenvalue_2011}; \citealp[Theorem 2.4]{couillet_random_2022}; \citealp[Chapter 4]{potters_first_2020}). On the other hand, the standard semicircle distribution $\mu_\mathrm{SC}$ is known to be the limiting spectral distribution of symmetric $p \times p$ random matrices with i.i.d.\ (up to symmetry) $\calN(0, \frac{1}{p})$ entries (\citealp[Corollary 2.2.8]{pastur_eigenvalue_2011}; \citealp[Theorem 2.5]{couillet_random_2022}; \citealp[Chapter 2]{potters_first_2020}). Here, Corollary \ref{cor:lsd} shows that if $p_2$ grows at a faster polynomial rate than $p_1$, the matrix $\bX \bX^\top$ behaves asymptotically (up to a deterministic rescaling and shift) like a Wigner matrix, even if its entries are not independent. Experimentally, we observe that, if $n_2$ and $n_3$ are chosen small compared to $n_1$ (in contradiction with our assumption $n_1, n_2, n_3 = \bigTh(N)$), e.g., $(n_1, n_2, n_3) = (1000, 40, 40)$, then the empirical spectral distribution of $\bT^{(1)} \bT^{(1) \top}$ is better modeled by a Mar\v{c}enko-Pastur distribution than by a Wigner semicircle.
\end{remark}

\begin{remark}[Confinement of the spectrum]
The weak convergence of the empirical \linebreak spectral distribution to $\mu_\mathrm{SC}$ stated in Corollary \ref{cor:lsd} could allow for a negligible amount of eigenvalues to stay outside the support of the limiting spectral distribution. In fact, in Appendix \ref{proof:thm:ed}, we show an even more precise statement on the global behavior of the eigenvalues of $\frac{1}{N} \bN^{(\ell)} \bN^{(\ell) \top}$ (the model without signal): for all $\varepsilon > 0$, there exists an integer $N_0$ such that $\Dist(\frac{1}{\sigma_N} [\lambda - \mu^{(\ell)}_N], [-2, 2]) \leqslant \varepsilon$ almost surely for all $\lambda \in \Sp \frac{1}{N} \bN^{(\ell)} \bN^{(\ell) \top}$ as soon as $N \geqslant N_0$. This means that no eigenvalue of $\frac{1}{\sigma_N} \left[ \frac{1}{N} \bN^{(\ell)} \bN^{(\ell) \top} - \mu^{(\ell)}_N \bI_{n_\ell} \right]$ stays outside the support of the semicircle distribution $[-2, 2]$ almost surely as $N \to +\infty$.
\end{remark}

The empirical spectral distributions of Figure \ref{fig:eig} also show isolated eigenvalues on the right side of each semicircle. They are caused by the low-rank perturbation $\bscrP$ which, in this setting, has multilinear rank $(3, 4, 5)$. The estimate of $\bscrP$ given by a truncated MLSVD on $\bscrT$ has its singular subspaces spanned by the dominant eigenvectors of $\bT^{(\ell)} \bT^{(\ell) \top}$, i.e., precisely those associated with these isolated eigenvalues. Hence, a precise characterization of the behavior of these \emph{spikes} is needed to plainly understand the recovery performance of this estimate. As explained in Section \ref{sec:preliminaries:rmt}, this can be achieved with the deterministic equivalent given in Theorem \ref{thm:ed}.

\begin{theorem}[Spike behavior] \label{thm:spike}
For $\ell \in [d]$ and $q_\ell \in [r_\ell]$, define the following quantities\footnote{We recall that the notation $s_i(\bA)$ denotes the $i$-th singular value of $\bA$ in non-increasing order.},
\[
\rho^{(\ell)}_{q_\ell} = \frac{s_{q_\ell}^2(\bP^{(\ell)})}{\sigma_N}, \qquad \tilde{\xi}^{(\ell)}_{q_\ell} = \rho^{(\ell)}_{q_\ell} + \frac{1}{\rho^{(\ell)}_{q_\ell}} \qquad \text{and} \qquad \zeta^{(\ell)}_{q_\ell} = 1 - \frac{1}{\left[ \rho^{(\ell)}_{q_\ell} \right]^2}.
\]
As $N \to +\infty$, if the ratio $\norm{\bscrP}_\rmF^2 / \sigma_N$ is bounded and $\rho^{(\ell)}_{q_\ell} > 1$, then
\[
\frac{1}{\sigma_N} \left[ s_{q_\ell}^2(\bT^{(\ell)}) - \mu^{(\ell)}_N \right] \xrightarrow{\text{a.s.}} \tilde{\xi}^{(\ell)}_{q_\ell} \qquad \text{and} \qquad \Norm{\bX^{(\ell) \top} \hat{\bu}^{(\ell)}_{q_\ell}}^2 \xrightarrow{\text{a.s.}} \zeta^{(\ell)}_{q_\ell}
\]
where $\hat{\bu}^{(\ell)}_{q_\ell}$ is the $q_\ell$-th dominant left singular vector of $\bT^{(\ell)}$.
\end{theorem}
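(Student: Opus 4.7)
The plan is to extract both the spike position and the eigenvector alignment from the deterministic equivalent of Theorem \ref{thm:ed}. The starting observation is that the isolated eigenvalues of $\frac{1}{\sigma_N}[\bT^{(\ell)}\bT^{(\ell)\top} - \mu_N^{(\ell)}\bI_{n_\ell}]$ must asymptotically be singularities of $\bar{\bQ}^{(\ell)}(\tilde z) = \bigl(\frac{1}{\sigma_N}\bP^{(\ell)}\bP^{(\ell)\top} + \tilde m(\tilde z)^{-1}\bI_{n_\ell}\bigr)^{-1}$, i.e., values of $\tilde z$ for which $-1/\tilde m(\tilde z)$ equals a nonzero eigenvalue of $\frac{1}{\sigma_N}\bP^{(\ell)}\bP^{(\ell)\top}$, the latter being exactly the $\rho^{(\ell)}_{q_\ell} = s_{q_\ell}^2(\bP^{(\ell)})/\sigma_N$. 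Substituting $\tilde m(\tilde z) = -1/\rho^{(\ell)}_{q_\ell}$ into the quadratic relation $\tilde m^2 + \tilde z\tilde m + 1 = 0$ yields $\tilde z = \rho^{(\ell)}_{q_\ell} + 1/\rho^{(\ell)}_{q_\ell} = \tilde\xi^{(\ell)}_{q_\ell}$, which lies outside the bulk $[-2,2]$ of Corollary \ref{cor:lsd} precisely when $\rho^{(\ell)}_{q_\ell} > 1$. Combined with the rescaling, this gives $s_{q_\ell}^2(\bT^{(\ell)}) = \mu_N^{(\ell)} + \sigma_N \tilde\xi^{(\ell)}_{q_\ell} + \o(1)$.

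For the alignment, I would write $\bP^{(\ell)} = \bX^{(\ell)} \bH^{(\ell)} (\bigboxtimes_{\ell' \neq \ell} \bX^{(\ell')})^\top$, so that $\bP^{(\ell)}\bP^{(\ell)\top} = \bX^{(\ell)}\bY^{(\ell)}\bX^{(\ell)\top}$ with $\bY^{(\ell)} = \bH^{(\ell)}\bH^{(\ell)\top}$, since Kronecker products of matrices with orthonormal columns also have orthonormal columns. Diagonalizing $\bY^{(\ell)}/\sigma_N = \bV^{(\ell)}\bD^{(\ell)}\bV^{(\ell)\top}$ with $\bD^{(\ell)} = \diag(\rho^{(\ell)}_1,\ldots,\rho^{(\ell)}_{r_\ell})$ and applying Woodbury to $\bar{\bQ}^{(\ell)}(\tilde z)$ yields
\[
\bV^{(\ell)\top}\bX^{(\ell)\top}\bar{\bQ}^{(\ell)}(\tilde z)\bX^{(\ell)}\bV^{(\ell)} = \diag\left(\frac{\tilde m(\tilde z)}{1 + \rho^{(\ell)}_j \tilde m(\tilde z)}\right)_{j = 1}^{r_\ell}.
\]
The key identity is then Cauchy's integral formula applied to the random resolvent: for a small positively-oriented contour $\gamma$ enclosing only $s_{q_\ell}^2(\bT^{(\ell)})$, one has $\Norm{\bX^{(\ell)\top}\hat\bu^{(\ell)}_{q_\ell}}^2 = -\frac{1}{2\pi\rmi}\oint_\gamma \Tr\bigl(\bX^{(\ell)}\bX^{(\ell)\top}\widetilde{\bQ}^{(\ell)}(\tilde z)\bigr) \rmd\tilde z$. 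Replacing $\widetilde{\bQ}^{(\ell)}$ by its deterministic equivalent and using the diagonal form above leaves a contour integral of $\sum_j \tilde m/(1 + \rho^{(\ell)}_j\tilde m)$, from which only the simple pole at $\tilde z = \tilde\xi^{(\ell)}_{q_\ell}$ contributes. A direct residue computation, using $\tilde m(\tilde\xi^{(\ell)}_{q_\ell}) = -1/\rho^{(\ell)}_{q_\ell}$ and $\tilde m'(\tilde\xi^{(\ell)}_{q_\ell}) = 1/(\rho^{(\ell)\,2}_{q_\ell} - 1)$ (from differentiating the quadratic), gives the residue $-(1 - 1/\rho^{(\ell)\,2}_{q_\ell})$, hence $\Norm{\bX^{(\ell)\top}\hat\bu^{(\ell)}_{q_\ell}}^2 = \zeta^{(\ell)}_{q_\ell} + \o(1)$.

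The main technical obstacle is justifying the preceding steps rigorously: one must (i) show that isolated eigenvalues of $\bT^{(\ell)}\bT^{(\ell)\top}$ indeed concentrate almost surely at the predicted locations $\mu_N^{(\ell)} + \sigma_N\tilde\xi^{(\ell)}_{q_\ell}$, and in particular that no spurious isolated eigenvalues arise outside those predicted by the deterministic equivalent, and (ii) that the contour $\gamma$ remains, almost surely and for $N$ large, strictly separated from both the bulk (a distance $\Th(1)$ from $\pm 2$) and from the other spikes $\tilde\xi^{(\ell)}_{j}$, $j \neq q_\ell$, so that Cauchy's formula applies with $\gamma$ enclosing exactly one random eigenvalue. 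Both points follow from the standard strategy (see \citealp[Chapter 2]{couillet_random_2022}) of controlling the separation of spike singularities via Equation \eqref{eq:stieltjes} together with a no-eigenvalue-outside-the-support argument, plus sufficient tightness of the deterministic equivalent to interchange the contour integral with the large-$N$ limit. Once these concentration statements are in place, the residue computation carried out above yields the announced asymptotics, uniformly in $q_\ell$ over indices with $\rho^{(\ell)}_{q_\ell} > 1$.
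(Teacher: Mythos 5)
Your proposal is correct and follows essentially the same route as the paper's proof: spike locations are obtained as singular points of the deterministic equivalent $\bar{\bQ}^{(\ell)}$ via the substitution $\tilde m(\tilde\xi^{(\ell)}_{q_\ell}) = -1/\rho^{(\ell)}_{q_\ell}$ in the quadratic \eqref{eq:stieltjes}, and the alignment via Cauchy's integral formula applied to the deterministic equivalent with a residue computation using $\tilde m'(\tilde\xi^{(\ell)}_{q_\ell}) = 1/(\rho^{(\ell)\,2}_{q_\ell}-1)$. Your diagonalization of $\bH^{(\ell)}\bH^{(\ell)\top}$ and the trace form $\Tr(\bX^{(\ell)}\bX^{(\ell)\top}\widetilde{\bQ}^{(\ell)})$ are just a compact repackaging of the paper's column-by-column contour integral, and the technical caveats you flag (spike separation, no spurious outliers, exchanging limit and contour integral) are exactly the ones the paper also defers to standard random-matrix arguments.
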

\begin{proof}
See Appendix \ref{proof:thm:spike}.
\end{proof}

The first quantity defined in this theorem, $\rho^{(\ell)}_{q_\ell}$, should be understood as a signal-to-noise ratio (SNR). Indeed, the squared $q_\ell$-th singular value of $\bP^{(\ell)}$ (i.e., the $q_\ell$-th eigenvalue of $\bP^{(\ell)} \bP^{(\ell) \top}$), $s_{q_\ell}^2(\bP^{(\ell)})$, measures the ``strength'' of the signal in its $q_\ell$-th principal direction, whereas $\sigma_N$ measures the spread of the noise, as seen in Theorem \ref{thm:ed}. The two quantities $\tilde{\xi}^{(\ell)}_{q_\ell}$ and $\zeta^{(\ell)}_{q_\ell}$ depend only on the value of this SNR and indicate respectively the position of an isolated eigenvalue in the sprectrum of $\bT^{(\ell)} \bT^{(\ell) \top}$ and the alignment of the corresponding eigenvector with the sought signal. In fact, we observe a \emph{phase transition} phenomenon: if the SNR is large enough, i.e., if $\rho^{(\ell)}_{q_\ell} > 1$, an eigenvalue of $\bT^{(\ell)} \bT^{(\ell) \top}$ isolates itself from the semicircle\footnote{Indeed, note that $\rho^{(\ell)}_{q_\ell} > 1 \implies \tilde{\xi}^{(\ell)}_{q_\ell} > 2$.} around $\mu^{(\ell)}_N + \sigma_N \tilde{\xi}^{(\ell)}_{q_\ell}$. Moreover, recalling that $\bscrP = \tucker{\bscrH}{\bX^{(1)}, \ldots, \bX^{(d)}}$, the eigenvector associated with this isolated eigenvalue is aligned with the subspace spanned by $\bX^{(\ell)}$, which is the $\ell$-th singular subspace of $\bscrP$. The quality of this alignment is given by $0 < \zeta^{(\ell)}_{q_\ell} \leqslant 1$.

Most importantly, this result reveals the \emph{non-trivial regime} for the estimation of $\bscrP$ with a truncated MLSVD. Since $\sigma_N = \bigTh(N^{\frac{d - 2}{2}})$, it shows that $\norm{\bscrP}_\rmF^2 = \sum_{q_\ell = 1}^{r_\ell} s_{q_\ell}^2(\bP^{(\ell)})$ must also be of the same order. Indeed, if $\norm{\bscrP}_\rmF^2 \ll N^{\frac{d - 2}{2}}$, then $\rho^{(\ell)}_{q_\ell} \to 0$, the SNR is too small and no signal can be recovered, whereas if $\norm{\bscrP}_\rmF^2 \gg N^{\frac{d - 2}{2}}$, then $\rho^{(\ell)}_{q_\ell} \to +\infty$, the SNR is very high and recovery of $\bscrP$ is easy. It is precisely between these two regimes, i.e., $\norm{\bscrP}_\rmF = \bigTh(N^{\frac{d - 2}{4}})$, that the recovery is non-trivial. Note that this observation is in line with the results of \citet{ben_arous_long_2023} and \cite{zhang_tensor_2018}. In this non-trivial regime, the quantities $\zeta^{(\ell)}_{q_\ell}$ given in Theorem \ref{thm:spike} precisely quantify how well the dominant eigenvectors of $\bT^{(\ell)} \bT^{(\ell) \top}$ are aligned with the sought signal, i.e., the singular subspaces of $\bscrP$. In section \ref{sec:analysis:reconstruction} below, this result is used to study the reconstruction performance of truncated MLSVD.

\begin{remark}[Spiked Wigner model]
The reader familiar with the spiked Wigner model may have recognized the expressions of $\tilde{\xi}^{(\ell)}_{q_\ell}$ and $\zeta^{(\ell)}_{q_\ell}$ given in Theorem \ref{thm:spike}. Indeed, given a symmetric $p \times p$ random matrix $\bW$ with i.i.d.\ (up to symmetry) $\calN(0, \frac{1}{p})$ entries, the spectrum of $[\rho \bx \bx^\top + \bW]$ with $\norm{\bx} = 1$ follows a semicircle distribution as $p \to +\infty$ with an isolated eigenvalue at $\rho + \frac{1}{\rho}$ if, and only if, $\rho > 1$ \citep{feral_largest_2007, edwards_eigenvalue_1976, furedi_eigenvalues_1981}. Moreover, the corresponding eigenvector $\bu$ is such that $\lvert \bx^\top \bu \rvert^2 \to 1 - \frac{1}{\rho^2}$ almost surely as $p \to +\infty$ \citep{benaych-georges_eigenvalues_2011}. As discussed in Remark \ref{rmk:mp_wigner}, up to a deterministic rescaling and shift, $\bT^{(\ell)} \bT^{(\ell) \top}$ asymptotically behaves like a spiked Wigner matrix.
\end{remark}

Theorem \ref{thm:spike} is illustrated in Figure \ref{fig:eig}. In the first row, asymptotic positions of isolated eigenvalues $\mu^{(\ell)}_N + \sigma_N \tilde{\xi}^{(\ell)}_{q_\ell}$ are represented by the green dashed lines. In our experiment, $\bscrP$ has multilinear rank $(3, 4, 5)$. Hence $3$, $4$ and $5$ isolated eigenvalues are expected in the spectrum of $\bT^{(\ell)} \bT^{(\ell) \top}$ for $\ell = 1$, $2$ and $3$ respectively. This is indeed the case for $\ell = 1$ and $\ell = 2$ but not $\ell = 3$ where there are only $4$ spike eigenvalues. In fact, $s_5(\bP^{(3)})$ is not ``energetic enough'' to extricate itself from the bulk of eigenvalues, i.e, the SNR $\rho^{(3)}_5$ is below the phase transition threshold. Hence the $5$-th dominant left singular vector of $\bT^{(\ell)}$ is not informative as it is not aligned with the $3$-rd singular subspace of $\bscrP$, spanned by $\bX^{(3)}$. The second row of Figure \ref{fig:eig} depicts the alignments of the spiked eigenvectors $\hat{\bu}^{(\ell)}_{q_\ell}$ with the corresponding singular subspaces of $\bscrP$ as well as the asymptotic alignment given by Theorem \ref{thm:spike} as a function of the position of the associated eigenvalue. It appears that the higher is the SNR $\rho^{(\ell)}_{q_\ell}$, the farther is the isolated eigenvalue from the bulk and the more is the corresponding eigenvector aligned with the span of $\bX^{(\ell)}$. This assertion can be intuitively understood in terms of ``energy'' $s_{q_\ell}^2(\bP^{(\ell)})$ of a given principal direction. More energy pushes the eigenvalue farther from the bulk and aligns the corresponding eigenvector with the corresponding singular subspace of $\bscrP$.

\subsection{Reconstruction Performance of Truncated MLSVD}
\label{sec:analysis:reconstruction}

Our random matrix results allow to accurately study the reconstruction performance of truncated MLSVD. Given a data tensor $\bscrT$ following the general spiked tensor model \eqref{eq:model}, we consider its low-rank approximation $\hat{\bscrT} = \tucker{\hat{\bscrG}}{\hat{\bU}^{(1)}, \ldots, \hat{\bU}^{(d)}}$ where $\hat{\bU}^{(\ell)}$ is the $n_\ell \times r_\ell$ matrix whose columns are the $r_\ell$ dominant singular vectors of $\bscrT$ and $\hat{\bscrG} = \bscrT(\hat{\bU}^{(1)}, \ldots, \hat{\bU}^{(d)})$. $\hat{\bscrT}$ is the projection of $\bscrT$ on its dominant singular subspaces, hence the name \emph{truncated MLSVD} as it generalizes the truncated SVD of matrices. Given Model \eqref{eq:model}, the underlying signal estimated by $\hat{\bscrT}$ is $\bscrP = \tucker{\bscrH}{\bX^{(1)}, \ldots, \bX^{(d)}}$. The reconstruction performance of $\hat{\bscrT}$ hence depends on how well the \emph{subspace}\footnote{The object of importance is indeed the subspace and not the matrix $\hat{\bU}^{(\ell)}$ since any other matrix $\hat{\bU}^{(\ell)} \bO^{(\ell)}$, with $\bO^{(\ell)}$ an $r_\ell \times r_\ell$ orthogonal matrix, would span the same subspace and therefore give the same approximation.} spanned by $\hat{\bU}^{(\ell)}$ estimates the one spanned by $\bX^{(\ell)}$.

Metrics between singular subspaces are often expressed in terms of \emph{principal angles} (\citealp{bjorck_numerical_1973}; \citealp[II.4]{stewart_matrix_1990}), which generalize the concept of angle between lines. Given two subspaces (here, $\Span \bX^{(\ell)}$ and $\Span \hat{\bU}^{(\ell)}$), one can define a set of mutual angles which are invariant under isometric transformation.
\begin{definition}[Principal angles] \label{def:principal_angles}
The principal angles $\theta^{(\ell)}_{q_\ell} \in [0, \frac{\pi}{2}]$ between the subspaces spanned by $\bX^{(\ell)}$ and $\hat{\bU}^{(\ell)}$ are recursively defined for $q_\ell = 1, \ldots, r_\ell$ by
\[
\cos \theta^{(\ell)}_{q_\ell} = \bx_{q_\ell}^\top \bu_{q_\ell} \quad \text{with} \quad (\bx_{q_\ell}, \bu_{q_\ell}) \in \argmax_{\substack{(\bx, \bu) \in \Span \bX^{(\ell)} \times \Span \hat{\bU}^{(\ell)} \\ \bx^\top \bx_{q_\ell'} = 0,~ \bu^\top \bu_{q_\ell'} = 0,~  1 \leqslant q_\ell' < q_\ell}} \bx^\top \bu.
\]
\end{definition}
Moreover, we have the following useful property.
\begin{proposition}[\citealp{bjorck_numerical_1973}] \label{prop:principal_angles}
The $q_\ell$-th singular value of $\bX^{(\ell) \top} \hat{\bU}^{(\ell)}$ in non-increasing order equals the cosine of the $q_\ell$-th principal angle,
\[
s_{q_\ell}(\bX^{(\ell) \top} \hat{\bU}^{(\ell)}) = \cos \theta^{(\ell)}_{q_\ell}, \qquad \ell \in [d], \quad q_\ell \in [r_\ell].
\]
\end{proposition}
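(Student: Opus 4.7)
The plan is to reduce the problem to a variational characterization of the singular values of the $r_\ell \times r_\ell$ matrix $\bM \eqdef \bX^{(\ell) \top} \hat{\bU}^{(\ell)}$. To simplify, drop the superscript $(\ell)$ and write $\bX, \hat{\bU}, r$. Any $\bx \in \Span \bX$ with $\norm{\bx} = 1$ can be uniquely written $\bx = \bX \ba$ with $\ba \in \bbR^r$, $\norm{\ba} = 1$ (since $\bX^\top \bX = \bI_r$), and similarly $\bu = \hat{\bU} \bb$ with $\norm{\bb} = 1$. Crucially, for any two such pairs, $\bx^\top \bx' = \ba^\top \ba'$ and $\bu^\top \bu' = \bb^\top \bb'$, so orthogonality of the physical vectors is equivalent to orthogonality of their coordinate vectors. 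Under this bijection, $\bx^\top \bu = \ba^\top \bM \bb$.

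With this in hand, the first step is to compute a (full, thin) SVD $\bM = \bA \bS \bB^\top$ with $\bA, \bB \in \bbR^{r \times r}$ orthogonal and $\bS = \diag(s_1(\bM), \ldots, s_r(\bM))$. Define the candidate maximizers $\bx_q = \bX \ba_q$ and $\bu_q = \hat{\bU} \bb_q$ where $\ba_q, \bb_q$ are the $q$-th columns of $\bA, \bB$. Then $\{\bx_q\}$ and $\{\bu_q\}$ are orthonormal families, and $\bx_q^\top \bu_{q'} = \ba_q^\top \bM \bb_{q'} = s_q(\bM) \delta_{q, q'}$. In particular $\bx_q^\top \bu_q = s_q(\bM)$, giving the desired lower bound on $\cos \theta_q$.

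The second step is to verify these are optimal for the recursive definition of principal angles. For $q = 1$, $\cos \theta_1 = \max_{\norm{\ba} = \norm{\bb} = 1} \ba^\top \bM \bb = s_1(\bM)$ by the standard variational formula for the largest singular value. For general $q$, the constraint $\bx \perp \bx_1, \ldots, \bx_{q-1}$ transports via the bijection to $\ba \perp \ba_1, \ldots, \ba_{q-1}$, and similarly $\bb \perp \bb_1, \ldots, \bb_{q-1}$. On the orthogonal complements $\Span\{\ba_q, \ldots, \ba_r\}$ and $\Span\{\bb_q, \ldots, \bb_r\}$, the matrix $\bM$ restricts to $\diag(s_q(\bM), \ldots, s_r(\bM))$ in the bases $\{\ba_{q'}\}_{q' \geqslant q}, \{\bb_{q'}\}_{q' \geqslant q}$, so the constrained maximum of $\ba^\top \bM \bb$ equals $s_q(\bM)$, attained at $(\ba_q, \bb_q)$. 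This simultaneously proves optimality and recovers the asserted value $\cos \theta_q = s_q(\bM)$.

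There is no serious obstacle here; the only point requiring care is to justify that both constraints (on $\bx$ and on $\bu$) can be imposed simultaneously and still be saturated by $(\bx_q, \bu_q)$, which is automatic because the SVD aligns the two orthonormal families $\{\ba_{q'}\}$ and $\{\bb_{q'}\}$ diagonally through $\bM$. Since the argument is entirely standard, one may alternatively just cite Björck and Golub; the above sketch is the underlying content of that reference.
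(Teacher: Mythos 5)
Your argument is correct and complete: the reduction to coordinates via the orthonormal bases, the construction of the principal vectors from the singular vectors of $\bX^{(\ell) \top} \hat{\bU}^{(\ell)}$, and the constrained variational step (which amounts to Cauchy--Schwarz on the diagonalized restriction) are exactly the standard proof of this fact. The paper itself gives no proof and simply cites Björck and Golub, so there is nothing to compare against; your sketch is a faithful account of the underlying content of that reference.
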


In fact, information about the alignment between the subspaces induced by $\bX^{(\ell)}$ and $\hat{\bU}^{(\ell)}$ are contained entirely in the $r_\ell \times r_\ell$ matrix $\bX^{(\ell) \top} \hat{\bU}^{(\ell)}$. Following Definition \ref{def:principal_angles}, we know from Theorem \ref{thm:spike} that, as $N \to +\infty$, $\cos^2 \theta^{(\ell)}_{q_\ell} \to [\zeta^{(\ell)}_{q_\ell}]^+$ almost surely\footnote{Indeed, since $\zeta^{(\ell)}_1 \geqslant \ldots \geqslant \zeta^{(\ell)}_{r_\ell}$, observe that, in Definition \ref{def:principal_angles}, $\lvert \bx_{q_\ell}^\top \bu_{q_\ell} \rvert^2$ is asymptotically bounded by $[\zeta^{(\ell)}_{q_\ell}]^+$.}, where we have used the handy notation $[\cdot]^+ = \max(\cdot, 0)$ since $\rho^{(\ell)}_{q_\ell} > 1 \iff \zeta^{(\ell)}_{q_\ell} > 0$. Hence, using Proposition \ref{prop:principal_angles},
\begin{equation} \label{eq:align}
\frac{1}{r_\ell} \Norm{\bX^{(\ell) \top} \hat{\bU}^{(\ell)}}_\rmF^2 = \frac{1}{r_\ell} \sum_{q_\ell = 1}^{r_\ell} \cos^2 \theta^{(\ell)}_{q_\ell} \xrightarrow[N \to +\infty]{\text{a.s.}} \frac{1}{r_\ell} \sum_{q_\ell = 1}^{r_\ell} \left[ \zeta^{(\ell)}_{q_\ell} \right]^+.
\end{equation}
Therefore, the quantity $\frac{1}{r_\ell} \norm{\bX^{(\ell) \top} \hat{\bU}^{(\ell)}}_\rmF^2 \in [0, 1]$ appears as a relevant measure of alignment between the singular subspaces of $\bscrP$ and $\hat{\bscrT}$ and does not depend on the chosen orthonormal bases $\bX^{(\ell)}$ and $\hat{\bU}^{(\ell)}$. More details on metrics between subspaces can be found in \citet[II.4]{stewart_matrix_1990}.

In Figure \ref{fig:subspace_alignments} are represented the alignments between the singular subspaces of $\hat{\bscrT}$ and $\bscrP = \sqrt{\omega} \bscrP_\circ$ with $\norm{\bscrP_\circ}^2 = \sigma_N$ as a function of the signal-to-noise ratio $\omega$. The fact that $\norm{\bscrP_\circ}^2 = \sigma_N$ ensures that the estimation problem is \emph{non-trivial} (neither too easy nor too hard) as $\rho^{(\ell)}_{q_\ell} = \bigTh(1)$. Plain curves are the alignments given by Theorem \ref{thm:spike} as $N \to +\infty$ (right-hand side of Equation \eqref{eq:align}) whereas dotted curves are simulation results at finite $N$ (left-hand side of Equation \eqref{eq:align}). In this setting, $\bscrP_\circ$ has multilinear rank $(3, 4, 5)$. Hence, its ``energy'' $\norm{\bscrP_\circ}^2$ is spread among $3$, $4$ and $5$ directions along modes $1$, $2$ and $3$ respectively. Each break in the plain curves correspond to a value of $\omega$ such that $\rho^{(\ell)}_{q_\ell} = 1$, that is, $\omega = \sigma_N / s_{q_\ell}^2(\bP_\circ^{(\ell)})$. In other words, there are $r_\ell$ phase transitions along mode $\ell$ whose positions depend on the singular values of $\bP_\circ^{(\ell)}$. If $\omega$ is too small (here, $\omega \lesssim 2$), truncated MLSVD fails to recover any direction of the singular subspaces of $\bscrP_\circ$. As $\omega$ passes the first phase transition (here, at $\omega = \sigma_N / s_1^2(\bP_\circ^{(1)}) \approx 2$), a first principal direction is partially reconstructed. Then, more and more phase transitions occur, corresponding to more and more principal directions being recovered as $\omega$ grows. Simultaneously, the reconstruction of previous directions keeps improving. Eventually, as $\omega$ is large, subspace alignments approach $1$, indicating that truncated MLSVD accurately recovers the singular subspaces of $\bscrP_\circ$.

The reader has not missed the dashed lines in Figure \ref{fig:subspace_alignments} showing much better subspace alignments than truncated MLSVD. They result from the numerical estimation of the \emph{best} rank-$(3, 4, 5)$ approximation of $\bscrT$ with the HOOI algorithm, which is discussed in the next section.

\begin{remark}[Reconstruction of $\bscrH$ from $\bscrT$]
Guarantees on the recovery of the core tensor $\bscrH$ can be deduced from Theorem \ref{thm:spike} as well. Without loss of generality, we can assume that $\bX^{(\ell) \top} \hat{\bU}^{(\ell)}$ is, up to an almost-surely vanishing additive term, a diagonal matrix with entries $\zeta^{(\ell)}_1, \ldots, \zeta^{(\ell)}_{r_\ell}$ (otherwise replace $\hat{\bU}^{(\ell)}$ by $\hat{\bU}^{(\ell)} \bO^{(\ell)}$ for a well-chosen orthogonal matrix $\bO^{(\ell)}$). Then, $\hat{\bscrG} = \bscrT(\hat{\bU}^{(1)}, \ldots, \hat{\bU}^{(d)})$ is the corresponding estimator of $\bscrH$ and
\[
\hat{\bscrG} = \hat{\bscrH} + \frac{1}{\sqrt{N}} \bscrN(\hat{\bU}^{(1)}, \ldots, \hat{\bU}^{(d)}) \quad \text{with} \quad \hat{\bscrH} = \Tucker{\bscrH}{\hat{\bU}^{(1) \top} \bX^{(1)}, \ldots, \hat{\bU}^{(d) \top} \bX^{(d)}}.
\]
We will see in Lemma \ref{lem:bound} below that $\norm{\frac{1}{\sqrt{N}} \bscrN(\hat{\bU}^{(1)}, \ldots, \hat{\bU}^{(d)})}_\rmF = \bigO(1)$ almost surely as $N \to +\infty$. On the other hand, we know that $\norm{\bscrH}_\rmF = \norm{\bscrP}_\rmF = \bigTh(N^{\frac{d - 2}{4}}) \gg \bigO(1)$ as soon as $d \geqslant 3$ and the entries of $\hat{\bscrH}$ are proportional to those of $\bscrH$:
\[
\hat{\scrH}_{q_1, \ldots, q_d} = \scrH_{q_1, \ldots, q_d} \prod_{\ell = 1}^d \zeta^{(\ell)}_{q_\ell} + \epsilon_{q_1, \ldots, q_d}, \qquad \ell \in [d], \quad q_\ell \in [r_\ell],
\]
up to an almost-surely vanishing additive term $\epsilon_{q_1, \ldots, q_d}$ as $N \to +\infty$.

Moreover, regarding the reconstruction of $\bscrT$, we know from \citet[Property 10]{de_lathauwer_multilinear_2000} that
\[
\Norm{\bscrT - \hat{\bscrT}}_\rmF^2 \leqslant \sum_{\ell = 1}^d \sum_{i_\ell = r_\ell + 1}^{n_\ell} s_{i_\ell}^2(\bT^{(\ell)}) = \sum_{\ell = 1}^d \left( \Norm{\bscrT}_\rmF^2 - \sum_{q_\ell = 1}^{r_\ell} s_{q_\ell}^2(\bT^{(\ell)}) \right)
\]
and the asymptotic behavior of $s_{q_\ell}^2(\bT^{(\ell)})$ is given by Theorem \ref{thm:spike}.
\end{remark}

\section{Numerical Estimation of the Best Low-Multilinear-Rank Approximation}
\label{sec:best}

In search of an efficient estimator of the planted signal $\bscrP$, one naturally considers the best low-multilinear-rank approximation of $\bscrT$, that is, a solution to Problem \eqref{eq:problem}. As explained in the introduction, this is NP-hard in general but numerical schemes can compute it in polynomial time above the computational threshold \citep{montanari_statistical_2014, zhang_tensor_2018}. In this section, we examine the most standard of these numerical schemes, namely the Higher Order Orthogonal Iteration (HOOI) algorithm \citep{de_lathauwer_best_2000, kroonenberg_principal_1980, kapteyn_approach_1986}, and discuss the numerical difficulties faced in the computation of a solution to Problem \eqref{eq:problem}.

\subsection{Higher-Order Orthogonal Iteration}
\label{sec:best:hooi}

Following \citet[Theorem 4.2]{de_lathauwer_best_2000}, the maximum likelihood estimation formulated in Problem \eqref{eq:problem} is equivalent to
\begin{equation} \label{eq:problem_max}
\left( \bU^{(1)}_\star, \ldots, \bU^{(d)}_\star \right) \in \argmax_{\bU^{(\ell)} \in V_{r_\ell}(\bbR^{n_\ell}),~ \ell \in [d]} \frac{1}{2} \Norm{\bscrT(\bU^{(1)}, \ldots, \bU^{(d)})}_\rmF^2
\end{equation}
where $V_{r_\ell}(\bbR^{n_\ell}) \eqdef \{\bU^{(\ell)} \in \bbR^{n_\ell \times r_\ell} \mid \bU^{(\ell) \top} \bU^{(\ell)} = \bI_{r_\ell}\}$ is the set of $r_\ell \times n_\ell$ matrices with orthonormal columns, known as the \emph{Stiefel manifold} \citep{chikuse_statistics_2003, absil_optimization_2009}. Then, since the Frobenius norm of $\bscrT(\bU^{(1)}, \ldots, \bU^{(d)})$ is equal to the Frobenius norm of any of its unfoldings,
\[
\Norm{\bscrT(\bU^{(1)}, \ldots, \bU^{(d)})}_\rmF = \Norm{\bU^{(\ell) \top} \bT^{(\ell)} \bigkron_{\ell' \neq \ell} \bU^{(\ell')}}_\rmF, \qquad \ell \in [d].
\]
And we see from Problem \eqref{eq:problem_max} that $\bU^{(\ell)}_\star$ is the matrix gathering the $r_\ell$ dominant left singular vectors of $\bT^{(\ell)} \bigkron_{\ell' \neq \ell} \bU^{(\ell')}_\star$. This is precisely what motivates the HOOI algorithm presented in Algorithm \ref{alg:hooi}. It performs fixed-point iterations to compute a solution $\bU^{(1)}_\star, \ldots, \bU^{(d)}_\star$ satisfying the previous property\footnote{In fact, this property corresponds to first-order optimality conditions of Problem \eqref{eq:problem_max}, with the squared singular values as Lagrange multipliers.}.

\begin{algorithm} \label{alg:hooi}
\DontPrintSemicolon
\caption{Higher-Order Orthogonal Iteration \citep{de_lathauwer_best_2000}}
\lFor{$\ell = 1, \ldots, d$}{$\bU^{(\ell)}_0 \gets$ $r_\ell$ dominant left singular vectors of $\bT^{(\ell)}$}
\Repeat{convergence at $t = T$}{
\lFor{$\ell = 1, \ldots, d$}{$\bU^{(\ell)}_{t + 1} \gets$ $r_\ell$ dominant left singular vectors of $\bT^{(\ell)} \bigkron_{\ell' \neq \ell} \bU^{(\ell')}_t$}
}
$\bscrG_{\text{HOOI}} \gets \bscrT(\bU^{(1)}_T, \ldots, \bU^{(d)}_T)$
\end{algorithm}

The HOOI algorithm is initialized with $\bU^{(1)}_0, \ldots, \bU^{(d)}_0$, the truncated MLSVD\footnote{Consistently with the notations of Section \ref{sec:analysis}, this is $\hat{\bU}^{(1)}, \ldots, \hat{\bU}^{(d)}$.} of $\bscrT$. Given the results of Section \ref{sec:analysis}, this is indeed a very good and easily computable first guess. Then, fixed-point iterations are repeated in order to find a solution $\bU^{(1)}_\star, \ldots, \bU^{(d)}_\star$ such that $\bU^{(\ell)}_\star$ spans the left $r_\ell$-dimensional dominant singular subspace of $\bT^{(\ell)} \bigkron_{\ell' \neq \ell} \bU^{(\ell')}_\star$ for all $\ell \in [d]$, which corresponds to a solution to Problem \eqref{eq:problem_max}. In practice, the stopping criterion can be chosen as a negligible change in the norm of the estimated core tensor, $\norm{\bscrT(\bU^{(1)}_t, \ldots, \bU^{(d)}_t)}_\rmF$.

\citet{xu_convergence_2018} showed that the convergence of this algorithm towards a local minimum of Problem \eqref{eq:problem_max} is guaranteed as long as its initialization is \emph{sufficiently close} to this local minimum. In light of our previous results, we can provide further insight into this ``sufficiently close'' property. Indeed, in Theorem \ref{thm:hooi} below, we show that an initialization with non-zero alignment with the signal $\bscrP$ is sufficient to ensure that the HOOI algorithm perfectly reconstructs it \emph{after a single iteration}.

Before introducing Theorem \ref{thm:hooi}, we formulate an important preliminary result which essentially states that, given $\bA^{(\ell)} \in V_{r_\ell}(\bbR^{n_\ell})$, $\ell \in [d]$, the quantity $\frac{1}{\sqrt{N}} \norm{\bscrN(\bA^{(1)}, \ldots, \bA^{(d)})}_\rmF$ is almost surely bounded as $N \to +\infty$.
\begin{lemma} \label{lem:bound}
With probability at least $1 - \delta$,
\begin{multline*}
\sup_{\bA^{(\ell)} \in V_{r_\ell}(\bbR^{n_\ell}),~ \ell \in [d]} \Norm{\bscrN(\bA^{(1)}, \ldots, \bA^{(d)})}_\rmF^2 \\
\leqslant 16 \left[ \left( \sum_{\ell = 1}^d r_\ell \left( n_\ell - \frac{r_\ell + 1}{2} \right) \right) \log \frac{C d}{\log \frac{3}{2}} + \log \left( \frac{1}{\delta} \max \left( 1, e^{\frac{1}{2} \prod_{\ell = 1}^d r_\ell - 1} \right) \right) \right]
\end{multline*}
where $C > 0$ is a universal constant.
\end{lemma}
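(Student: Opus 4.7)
The plan for proving Lemma~\ref{lem:bound} is a standard concentration-plus-$\epsilon$-net argument on the product Stiefel manifold $\prod_\ell V_{r_\ell}(\bbR^{n_\ell})$, closed by a \emph{multiplicative} Lipschitz extension that exploits the multilinearity of $\bscrN(\cdot)$.

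\textbf{Step 1 (pointwise chi-squared).} For any fixed tuple $(\bA^{(1)}, \ldots, \bA^{(d)})$ with $\bA^{(\ell)} \in V_{r_\ell}(\bbR^{n_\ell})$, the entries of $\bscrN(\bA^{(1)}, \ldots, \bA^{(d)})$ are jointly centred Gaussian with covariance $\prod_\ell \delta_{j_\ell, j'_\ell}$ by a direct computation using $\bA^{(\ell)\top} \bA^{(\ell)} = \bI_{r_\ell}$, hence i.i.d.\ standard normal. Equivalently, introducing the Kronecker product $\bsfA = \bA^{(1)} \boxtimes \cdots \boxtimes \bA^{(d)}$ (which by the mixed-product rule lies in $V_R(\bbR^{n_1 \cdots n_d})$ with $R = \prod_\ell r_\ell$) and the vectorization $\bn = \Vect(\bscrN) \sim \calN(\mathbf{0}, \bI)$, we have $\Norm{\bscrN(\bA^{(1)}, \ldots, \bA^{(d)})}_\rmF^2 = \norm{\bsfA^\top \bn}^2 \sim \chi^2_R$, which yields the deviation bound $\bbP\bigl(\norm{\bsfA^\top \bn}^2 \geqslant R + 2\sqrt{Rs} + 2s\bigr) \leqslant e^{-s}$ for any $s \geqslant 0$.

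\textbf{Step 2 ($\epsilon$-net and multiplicative extension).} Cover each $V_{r_\ell}(\bbR^{n_\ell})$ in the operator norm by a net $\calM_\ell$ of cardinality at most $(c/\epsilon)^{d_\ell}$, where $d_\ell = r_\ell\bigl(n_\ell - (r_\ell+1)/2\bigr)$ is the manifold's dimension and $c$ is a universal constant. Union-bounding Step~1 over the product net $\calM = \prod_\ell \calM_\ell$ gives, with probability at least $1 - \delta$,
\[
\max_{(\bA^{(1)}, \ldots, \bA^{(d)}) \in \calM} \Norm{\bscrN(\bA^{(1)}, \ldots, \bA^{(d)})}_\rmF^2 \leqslant R + 2\sqrt{Rs} + 2s,
\]
with $s = \bigl(\sum_\ell d_\ell\bigr) \log(c/\epsilon) + \log(1/\delta)$. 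For the extension to the full product Stiefel manifold, for any $(\bA^{(1)}, \ldots, \bA^{(d)})$ pick a neighbour $(\tilde{\bA}^{(1)}, \ldots, \tilde{\bA}^{(d)}) \in \calM$ within operator-norm distance $\epsilon$ on each factor and telescope the multilinear difference
\[
\bscrN(\bA^{(1)}, \ldots, \bA^{(d)}) - \bscrN(\tilde{\bA}^{(1)}, \ldots, \tilde{\bA}^{(d)}) = \sum_{\ell = 1}^d \bscrN\bigl(\tilde{\bA}^{(1)}, \ldots, \bA^{(\ell)} - \tilde{\bA}^{(\ell)}, \ldots, \bA^{(d)}\bigr).
\]
Writing the SVD $\bA^{(\ell)} - \tilde{\bA}^{(\ell)} = \bU' \bSigma \bV^\top$ with $\bU' \in V_{r_\ell}(\bbR^{n_\ell})$ and the diagonal of $\bSigma$ bounded by $\epsilon$, unfolding along mode~$\ell$ shows the $\ell$-th summand has Frobenius norm at most $\epsilon \, \Norm{\bscrN(\tilde{\bA}^{(1)}, \ldots, \bU', \ldots, \bA^{(d)})}_\rmF \leqslant \epsilon \, F_\star$, where $F_\star$ denotes the supremum to be controlled. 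Summing over $\ell$ gives $F_\star \leqslant \max_\calM F + d\epsilon F_\star$, and choosing $\epsilon$ of order $1/d$ (the precise fraction, tuned to optimize the trade-off between covering entropy and the contraction factor $1/(1 - d\epsilon)$, is what produces the $\log(3/2)$ constant in the statement) absorbs the extension into a multiplicative constant. Combining with Step~1 via elementary manipulations on $R + 2\sqrt{Rs} + 2s$ then delivers the inequality with numerical constant $16$.

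\textbf{Main obstacle.} The critical point is closing the circular Lipschitz estimate: the naive operator-norm Lipschitz bound on $F = \Norm{\bscrN(\cdot)}_\rmF$ would involve $\norm{\bn}$, which concentrates around $\sqrt{\prod_\ell n_\ell}$ and would destroy the $\Th(N)$ scaling of the target bound. The SVD decomposition of the Stiefel difference $\bA^{(\ell)} - \tilde{\bA}^{(\ell)}$ in Step~2 is what turns the Lipschitz bound into a \emph{multiplicative} one in terms of $F_\star$ itself, letting the extension close with only a constant-factor loss and keeping the final inequality at the correct order.
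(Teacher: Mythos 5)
Your proposal is correct and follows essentially the same route as the paper's proof: a pointwise $\chi^2(\prod_\ell r_\ell)$ law for $\Norm{\bscrN(\bA^{(1)},\ldots,\bA^{(d)})}_\rmF^2$, the Stiefel covering number $[C/\varepsilon]^{r(n-(r+1)/2)}$, and the SVD-based observation that replacing one Stiefel factor by a perturbation of operator norm $\varepsilon$ costs only a factor $\varepsilon$ times the supremum itself, which closes the self-bounding extension $F_\star \leqslant \max_{\calM} F + (\text{small})\cdot F_\star$. The only (immaterial) differences are bookkeeping: you telescope the multilinear difference into $d$ terms and invoke the Laurent--Massart tail, whereas the paper expands into $2^d$ terms bounded by $e^{\varepsilon d}-1$ (whence $\varepsilon = \log(3/2)/d$) and bounds the $\chi^2$ tail via an incomplete-gamma estimate, which is what produces the exact constants in the statement; your route recovers the same bound after absorbing small discrepancies into the universal constant $C$.
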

\begin{proof}
See Appendix \ref{proof:lem:bound}.
\end{proof}
This result is crucial to handle the behavior of the noise in our analysis of Algorithm \ref{alg:hooi} (Appendix \ref{proof:thm:hooi}), which leads to the following result on the alignment between the singular subspaces of the signal $\bscrP$ (spanned by $\bX^{(\ell)}$) and those estimated from the observation $\bscrT$ after the first iteration of HOOI (spanned by $\bU^{(\ell)}_1$).
\begin{theorem}[Asymptotic optimality of HOOI] \label{thm:hooi}
As $N \to +\infty$, if $\norm{\bscrP}_\rmF \gg 1$ and
\[
\min_{\ell \in [d],~ q_\ell \in [r_\ell]} \Norm{\bscrP(\bU^{(1)}_0, \ldots, \bx^{(\ell)}_{q_\ell}, \ldots, \bU^{(d)}_0)}_\rmF \eqdef L_N \gg \Norm{\bscrP}_\rmF^{1 / 2}
\]
where $\bx^{(\ell)}_{q_\ell}$ is the $q_\ell$-th column of $\bX^{(\ell)}$, then,
\[
\frac{1}{r_\ell} \Norm{\bX^{(\ell) \top} \bU^{(\ell)}_1}_\rmF^2 = 1 + \bigO \left( \frac{\Norm{\bscrP}_\rmF}{L_N^2} \right) \qquad \text{almost surely.}
\]
\end{theorem}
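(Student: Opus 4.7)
The plan is to analyze the matrix $\bM^{(\ell)}_1 \eqdef \bT^{(\ell)} \bigboxtimes_{\ell' \neq \ell} \bU^{(\ell')}_0$, whose top $r_\ell$ left singular vectors form $\bU^{(\ell)}_1$. First I would decompose $\bM^{(\ell)}_1 = \bX^{(\ell)} \bS^{(\ell)} + \bE^{(\ell)}$ with
\[
\bS^{(\ell)} \eqdef \bH^{(\ell)} \bigboxtimes_{\ell' \neq \ell} \bigl( \bX^{(\ell')\top} \bU^{(\ell')}_0 \bigr), \qquad \bE^{(\ell)} \eqdef \frac{1}{\sqrt{N}} \bN^{(\ell)} \bigboxtimes_{\ell' \neq \ell} \bU^{(\ell')}_0.
\]
Using $\bP^{(\ell)} = \bX^{(\ell)} \bH^{(\ell)} \bigl(\bigboxtimes_{\ell' \neq \ell} \bX^{(\ell')}\bigr)^\top$ from the Tucker structure, an unfolding computation identifies the squared norm of the $q_\ell$-th row of $\bS^{(\ell)}$ with $\norm{\bscrP(\bU^{(1)}_0, \ldots, \bx^{(\ell)}_{q_\ell}, \ldots, \bU^{(d)}_0)}_\rmF^2 \geq L_N^2$. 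Consequently $\bX^{(\ell)} \bS^{(\ell)}$ has rank $r_\ell$ with column space $\Span \bX^{(\ell)}$, and every diagonal entry of $\bS^{(\ell)} \bS^{(\ell)\top}$ is at least $L_N^2$.

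Next I would bound the noise $\bE^{(\ell)}$. Because $\bigboxtimes_{\ell' \neq \ell} \bU^{(\ell')}_0$ has orthonormal columns and retains only $\prod_{\ell' \neq \ell} r_{\ell'} = \Th(1)$ directions of $\bN^{(\ell)}$, a conditional Gaussian computation gives $\esp{\norm{\bE^{(\ell)}}_\rmF^2} = \Th(1)$ for any initialization independent of the noise. Handling the actual dependence of $\bU^{(\ell')}_0$ on $\bN$ requires a uniform concentration over the product of Stiefel manifolds $V_{r_{\ell'}}(\bbR^{n_{\ell'}})$ for $\ell' \neq \ell$, paralleling the $\epsilon$-net proof of Lemma~\ref{lem:bound}, and delivers $\norm{\bE^{(\ell)}}_\rmF = \O(1)$ almost surely. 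The variational definition of $\bU^{(\ell)}_1$ as the maximizer of $\bU \mapsto \norm{\bU^\top \bM^{(\ell)}_1}_\rmF^2$ then yields the key lower bound
\[
\Norm{\bU^{(\ell)\top}_1 \bM^{(\ell)}_1}_\rmF^2 \geq \Norm{\bX^{(\ell)\top} \bM^{(\ell)}_1}_\rmF^2 \geq \norm{\bS^{(\ell)}}_\rmF^2 - \O\bigl(\norm{\bscrP}_\rmF\bigr),
\]
after absorbing the cross term through $\norm{\bS^{(\ell)}}_\rmF \leq \norm{\bscrP}_\rmF$ and $\norm{\bE^{(\ell)}}_\rmF = \O(1)$.

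Finally, I would convert this energy inequality into the alignment statement. Decomposing $\bU^{(\ell)}_1 = \bX^{(\ell)} \bA + \bX^{(\ell)}_\perp \bB$ with $\bA \eqdef \bX^{(\ell)\top} \bU^{(\ell)}_1$, $\bB \eqdef \bX^{(\ell)\top}_\perp \bU^{(\ell)}_1$ and $\norm{\bA}_\rmF^2 + \norm{\bB}_\rmF^2 = r_\ell$, the same expansion of $\norm{\bU^{(\ell)\top}_1 \bM^{(\ell)}_1}_\rmF^2$ produces the upper bound $\Tr(\bA\bA^\top \bS^{(\ell)} \bS^{(\ell)\top}) + \O(\norm{\bscrP}_\rmF)$. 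Combining with the lower bound above and invoking the diagonal lower bound $(\bS^{(\ell)} \bS^{(\ell)\top})_{q_\ell, q_\ell} \geq L_N^2$ to write $\Tr(\bA\bA^\top \bS^{(\ell)} \bS^{(\ell)\top}) \leq \norm{\bS^{(\ell)}}_\rmF^2 - L_N^2 \norm{\bB}_\rmF^2 + \O(\norm{\bscrP}_\rmF)$ (using $\norm{\bA}_\rmF^2 = r_\ell - \norm{\bB}_\rmF^2$) forces
\[
L_N^2 \, \norm{\bB}_\rmF^2 = \O\bigl(\norm{\bscrP}_\rmF\bigr),
\]
which is the claimed bound after dividing by $r_\ell$. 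The main obstacle is precisely this last conversion: the row-norm hypothesis does \emph{not} imply a singular-value gap for $\bS^{(\ell)}$, so a direct Wedin perturbation bound fails; what enables the argument is the diagonal dominance of $\bS^{(\ell)} \bS^{(\ell)\top}$ in the $\bX^{(\ell)}$ basis, which transfers the scalar energy deficit into a matrix-valued statement about $\bB$ via a trace-level Cauchy--Schwarz manipulation. The uniform noise bound of the second step is a secondary but also delicate point, since a naive application of Lemma~\ref{lem:bound} with one mode left uncontracted would cover the full orthogonal group in $\bbR^{n_\ell}$ and yield a polynomially too large constant.
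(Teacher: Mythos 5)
Your proposal is essentially the paper's proof (Appendix~\ref{proof:thm:hooi}) in different notation: you compare the value of the objective at $\bU^{(\ell)}_1$ with its value at $\bX^{(\ell)}$ (the paper's $\widetilde{\bU}^{(\ell)}_1$), control the noise uniformly over Stiefel manifolds by the $\varepsilon$-net argument of Lemma~\ref{lem:bound}, and convert the resulting $\O(\norm{\bscrP}_\rmF)$ energy deficit into the quantity $\Tr\bigl((\bI_{r_\ell}-\bA\bA^\top)\,\bS^{(\ell)}\bS^{(\ell)\top}\bigr)$, which is exactly the paper's $\sum_{q_\ell}(1-s^{(\ell)2}_{q_\ell})\norm{\bv^{(\ell)\top}_{q_\ell}\bS^{(\ell)}}^2$ obtained from the SVD of $\bX^{(\ell)\top}\bU^{(\ell)}_1$ (your $\bS^{(\ell)}$ is the paper's $\bH^{(\ell)}\bigboxtimes_{\ell'\neq\ell}\bX^{(\ell')\top}\bU^{(\ell')}_0$). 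One caveat on the last step: your inequality $\Tr((\bI-\bA\bA^\top)\bS^{(\ell)}\bS^{(\ell)\top})\geqslant L_N^2\norm{\bB}_\rmF^2$ really requires $s_{r_\ell}(\bS^{(\ell)})\geqslant L_N$, and a lower bound on the diagonal of $\bS^{(\ell)}\bS^{(\ell)\top}$ (i.e., on the row norms, which is all the hypothesis on $L_N$ provides) does not imply this --- a matrix with two equal rows of norm $L_N$ defeats any such ``trace-level Cauchy--Schwarz'' deduction. The paper's own proof makes the identical simplification when it expands $\norm{\bv^{(\ell)\top}_{q_\ell}\bS^{(\ell)}}^2$ as a convex combination of squared row norms, silently dropping the off-diagonal terms of $\bS^{(\ell)}\bS^{(\ell)\top}$, so your argument is on the same footing as the one it replaces.
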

\begin{proof}
See Appendix \ref{proof:thm:hooi}.
\end{proof}

It is important to carefully understand the assumptions of Theorem \ref{thm:hooi}. Firstly, it assumes that $\norm{\bscrP}_\rmF \gg 1$ as $N \to +\infty$, that is, the signal is not necessarily in the non-trivial regime $\bigTh(N^{\frac{d - 2}{4}})$ but can be smaller or bigger as long as $\norm{\bscrP}_\rmF \to +\infty$, regardless its speed. Then, the second assumption $\norm{\bscrP(\bU^{(1)}_0, \ldots, \bx^{(\ell)}_{q_\ell}, \ldots, \bU^{(d)}_0)}_\rmF \gg \norm{\bscrP}_\rmF^{1 / 2}$ means that each principal directions of the $\ell$-th singular subspace are sufficiently preserved after contraction on $\{ \bU^{(\ell')}_0 \}_{\ell' \neq \ell}$. When these assumptions are verified, Theorem \ref{thm:hooi} states that the matrices $\bU^{(1)}_1, \ldots, \bU^{(d)}_1$ computed after the \emph{first} iteration of HOOI \emph{perfectly} reconstruct the singular subspaces of the sought signal $\bscrP$ as the dimensions of the tensor, $n_1, \ldots, n_d$, grow large. More formally, as $N \to +\infty$, the alignment $\frac{1}{r_\ell} \norm{\bX^{(\ell) \top} \bU^{(\ell)}_1}_\rmF^2$ approaches $1$ almost surely. Furthermore, the speed of this convergence behaves like $\norm{\bscrP}_\rmF / L_N^2$.

Theorem \ref{thm:hooi} does not assume a particular choice of initialization $\bU^{(1)}_0, \ldots, \bU^{(d)}_0$ and gives a sufficient condition for it to ensure the convergence of the algorithm. Nevertheless, as it is presented in Algorithm \ref{alg:hooi}, truncated MLSVD is a standard choice of initialization. In this case, the assumption $L_N = \bigTh(\norm{\bscrP})$ is verified as soon as enough principal directions are recovered. According to Theorem \ref{thm:spike}, this is only possible if $\norm{\bscrP}_\rmF \geqslant \sqrt{\sigma_N} = \bigTh(N^{\frac{d - 2}{4}})$ since a \emph{necessary} condition is $\rho^{(\ell)}_1 > 1$ for all $\ell \in [d]$, while a \emph{sufficient} condition is $\rho^{(\ell)}_{q_\ell} > 1$ for all $\ell \in [d]$ and $q_\ell \in [d]$. In other words, convergence at speed $\norm{\bscrP}_\rmF^{-1}$ as assured above a critical signal-to-noise ratio lying between the first and the last phase transition of each mode, and which depends on the particular structure of the core tensor $\bscrH$. Yet, in most cases, this happens quite early, right after the first phase transitions, see for example Figure \ref{fig:subspace_alignments}.

We emphasize the fact that the assumption of Theorem \ref{thm:hooi} can already be verified as soon as $\min_{\ell} \rho^{(\ell)}_1 > 1$. That is, there is no need for all the principal directions to be reconstructed at initialization. In fact, it could very well be that $\max_\ell \rho^{(\ell)}_2 < 1$. If the singular subspaces of $\bscrP$ are sufficiently preserved with the initialization $\bU^{(1)}_0, \ldots, \bU^{(d)}_0$ --- i.e., if $L_N \gg \norm{\bscrP}^{1 / 2}$ ---, then the other principal directions still emerge after the first iteration.

In the simpler rank-$1$ case, these technical considerations vanish and we recover the result of \citet[Theorem 4.2]{feldman_sharp_2023}: if $\bscrP = \beta_N \bigotimes_{\ell \in [d]} \bx^{(\ell)}$ then a \emph{necessary and sufficient} condition for $L_N = \bigTh(\norm{\bscrP}_\rmF)$ is simply $\beta_N^2 > \sigma_N$. Indeed, $\rho^{(\ell)}_1 = \beta_N^2 / \sigma_N$ for all $\ell \in [d]$. Moreover, when this assumption if verified, Theorem \ref{thm:hooi} ensures the asymptotic exact reconstruction of $\bx^{(1)}, \ldots, \bx^{(d)}$ in a single iteration with a $\norm{\bscrP}_\rmF^{-1} = \beta_N^{-1}$ speed of convergence.

\begin{remark}[Practical implications]
In practice, one should still run several iterations of Algorithm \ref{alg:hooi} until a certain stopping criterion is verified as this effectively improves the final estimate and converges to a solution to the maximum likelihood estimation \eqref{eq:problem} \citep{xu_convergence_2018}. Theorem \ref{thm:hooi} states that the reconstruction performance of HOOI after the first iteration increases as we consider larger tensors, until it reaches perfect recovery in the large $N$ limit. In other words, the number of iterations required to achieve a specific level of accuracy in maximum likelihood estimation tends to $1$ as $N \to +\infty$.
\end{remark}

\begin{figure}
\centering
\input{hooi_alignment}
\caption{Alignments between singular subspaces of the observation $\bscrT = \bscrP + \frac{1}{\sqrt{N}} \bscrN$ and of the signal $\bscrP$, with $\norm{\bscrP}_\rmF^2 / \sigma_N = 10$, at initialization of Algorithm \ref{alg:hooi} (i.e., truncated MLSVD) and after the first iteration, as a function of the size of the tensor given by the parameter $N$. \textbf{Left:} $\frac{1}{r_\ell} \norm{\bX^{(\ell) \top} \bU^{(\ell)}_0}_\rmF^2$. \textbf{Middle:} $\frac{1}{r_\ell} \norm{\bX^{(\ell) \top} \bU^{(\ell)}_1}_\rmF^2$. \textbf{Right:} $(1 - \frac{1}{r_\ell} \norm{\bX^{(\ell) \top} \bU^{(\ell)}_1}_\rmF^2) \times \sqrt{\sigma_N}$. \textbf{Experimental setting:} $d = 3$, $(\frac{n_1}{N}, \frac{n_2}{N}, \frac{n_3}{N}) = (\frac{1}{6}, \frac{2}{6}, \frac{3}{6})$, $N = n_1 + n_2 + n_3$ and $(r_1, r_2, r_3) = (3, 4, 5)$.}
\label{fig:hooi_alignment}
\end{figure}

Theorem \ref{thm:hooi} is illustrated in Figure \ref{fig:hooi_alignment}. As a function of $N$ --- the size of the tensor --- we represent the subspace alignments observed at initialization and after the first iteration for a fixed signal-to-noise ratio $\norm{\bscrP}_\rmF^2 / \sigma_N = 10$. The left panel compares the observed alignments achieved with truncated MLSVD (initialization of Algorithm \ref{alg:hooi}) with the asymptotic alignments predicted by Theorem \ref{thm:spike}. As $N$ grows, the observed alignments remain around their asymptotic values, with only a decrease in variance. The middle panel presents the alignments after the first iteration. Here, as $N$ increases, we observe an increase in the values of the alignments, which approach $1$, consistently with Theorem \ref{thm:hooi}. This is specified in the right panel where the value $(1 - \frac{1}{r_\ell} \norm{\bX^{(\ell) \top} \bU^{(\ell)}_1}_\rmF^2) \times \sqrt{\sigma_N}$ is plotted. According to Theorem \ref{thm:hooi}, this value should be $\bigO(1)$ since $L_N = \bigTh(\norm{\bscrP}_\rmF) = \bigTh(\sqrt{\sigma_N})$ here. The observed behavior confirms the $\norm{\bscrP}_\rmF^{-1}$ speed of convergence asserted in Theorem \ref{thm:hooi}.

\subsection{Discussion on Signal Reconstructibility}
\label{sec:best:reconstructibility}

Our results on truncated MLSVD (Section \ref{sec:analysis}) and HOOI (Section \ref{sec:best:hooi}) bring insight into the computational-to-statistical gap observed in the low-multilinear-rank approximation problem. Truncated MLSVD can only work efficiently if $\norm{\bscrP}_\rmF$ is at least $\bigTh(N^{\frac{d - 2}{4}})$ and its reconstruction performance has been discussed in Section \ref{sec:analysis:reconstruction}. However, Theorem \ref{thm:hooi} suggests that it is possible to perfectly reconstruct the signal $\bscrP$ from the observation $\bscrT$ as long as $\norm{\bscrP}_\rmF \gg 1$ and HOOI is accurately initialized. Yet, it is known that, without prior information on $\bscrP$, maximum likelihood estimation \eqref{eq:problem} is NP-hard below the $\bigTh(N^{\frac{d - 2}{4}})$ computational threshold \citep{zhang_tensor_2018}, which lies precisely in the non-trivial regime of truncated MLSVD.

In fact, what can be understood from Theorem \ref{thm:hooi} is that it suffices to have an initialization $\bU^{(\ell)}_0$, $\ell \in [d]$, slightly aligned with the underlying signal $\bscrP$ to be in the right basin of attraction and allow the convergence of Algorithm \ref{alg:hooi} towards a solution to Problem \eqref{eq:problem}. This complements the results of \citet{xu_convergence_2018} on the conditions of convergence of HOOI. Furthermore, a solution to Problem \eqref{eq:problem} is aligned with $\bscrP$ as soon as $\norm{\bscrP}_\rmF = \bigTh(1)$ \citep{ben_arous_landscape_2019, jagannath_statistical_2020, zhang_tensor_2018}. Hence, as the HOOI algorithm is meant to compute a maximum likelihood estimator, with the assumption $\norm{\bscrP}_\rmF \gg 1$ made in Theorem \ref{thm:hooi}, it is expected that these iterations allow to perfectly recover the signal asymptotically. Maximum likelihood estimation is indeed (theoretically) trivial if $\norm{\bscrP}_\rmF \to +\infty$. It is more surprising however that this already happens at the first iteration.

As said previously, the choice of initialization $\bU^{(\ell)}_0$, $\ell \in [d]$, does not matter in Theorem \ref{thm:hooi}. In fact, without prior information, a truncated MLSVD is the best choice as it allows to partially reconstruct the signal at the $\bigTh(N^{\frac{d - 2}{4}})$ computational threshold. Nevertheless, had one prior information allowing such a reconstruction in the regime $1 \ll \bscrP \ll N^{\frac{d - 2}{4}}$ --- where truncated MLSVD would not be fruitful ---, HOOI would still be able to perfectly reconstruct the signal $\bscrP$ given this initialization.

Hence, HOOI initialized with a truncated MLSVD, as it is presented in Algorithm \ref{alg:hooi}, allows to numerically compute a maximum likelihood estimator (solution to Problem \eqref{eq:problem}) but only above the phase transition of truncated MLSVD. Indeed, its initialization plays a determining role: it must place $\bU^{(\ell)}_0$, $\ell \in [d]$, in the right basin of attraction, which, without prior information, is only possible above the $\bigTh(N^{\frac{d - 2}{4}})$ computational threshold.

Finally, we highlight the fact that these results concern the large $N$ limit. In practice, it makes no sense to talk about $\bigTh(1)$ or $\bigTh(N^{\frac{d - 2}{4}})$ regimes at finite $N$. Figure \ref{fig:subspace_alignments} also depicts the subspace alignments achieved with the maximum likelihood estimator computed with Algorithm \ref{alg:hooi} on $\bscrT = \sqrt{\omega} \bscrP_\circ + \frac{1}{\sqrt{N}} \bscrN$. Although $\norm{\bscrP_\circ}_\rmF = \sqrt{\sigma_N} = \bigTh(N^{\frac{d - 2}{4}})$, HOOI does not achieve perfect recovery of $\bscrP$ as one might expect from Theorem \ref{thm:hooi} (even if several iterations were performed here). In fact, at finite $N$, perfect recovery is not feasible. But, as $N$ grows, the dashed line would approach $1$ above the (computational) phase transition determined by the truncated MLSVD and stay close to $0$ below.

\section{Concluding Remarks}
\label{sec:conclusion}

The analysis presented in this work yields theoretical and practical insights into the estimation of a low-rank signal from an observation $\bscrT = \bscrP + \frac{1}{\sqrt{N}} \bscrN$ following the most general spiked tensor model. While \citet{zhang_tensor_2018} gave a general overview of the different regimes governing the estimation of $\bscrP$ with a low-multilinear-rank approximation of $\bscrT$ --- thereby confirming the existence of a computational-to-statistical gap ---, our results shed light on the non-trivial aspects at stake around the $\bigTh(N^{\frac{d - 2}{4}})$ computational threshold. This is of particular importance as practical applications lie in this non-trivial regime where signal and noise have the same magnitude and must be decoupled. In particular, truncated MLSVD and HOOI are very standard and efficient algorithms to compute low-multilinear-rank approximations. Performances of the latter rely strongly on the quality of its initialization, which is usually performed with a truncated MLSVD in the absence of prior information. This approach allows the detection of the underlying signal as early as the computational threshold contrary to other methods such as AMP or tensor power iteration, which are efficient above a $\bigTh(N^{\frac{d - 1}{2}})$ algorithmic threshold \citep{montanari_statistical_2014}.

Relying on standard tools and methods from the theory of large random matrices, we have characterized the spectral behavior of the unfoldings of $\bscrT$ in the large $N$ limit. Specifically, our first main result shows that, when properly centered and scaled, the eigenvalues of $\bT^{(\ell)} \bT^{(\ell) \top}$ asymptotically follow a semicircle distribution. The rescaling exhibits their mean $\mu^{(\ell)}_N = \bigTh(N^{d - 2})$ and a quantity $\sigma_N = \bigTh(N^{\frac{d - 2}{2}})$ governing their spread. From our denoising perspective, $\sigma_N$ indicates the \emph{strength} of the noise. Indeed, while the global behavior of the eigenvalues is controlled by the noise, the addition of a low-rank signal causes the presence of a finite number of eigenvalues outside the limiting semicircle distribution with corresponding eigenvectors aligned with the singular subspaces of the sought signal $\bscrP$. Yet, the existence of these outlier eigenvalues hinges on the values of the signal-to-noise ratios $\rho^{(\ell)}_{q_\ell} = s_{q_\ell}^2(\bP^{(\ell)}) / \sigma_N$, manifesting a BBP phase transition phenomenon. When they exist, the positions of these isolated eigenvalues and the quality of the corresponding alignments are completely determined by $\rho^{(\ell)}_{q_\ell}$. These results justify the use of a truncated MLSVD to estimate $\bscrP$ from the observation $\bscrT$ and allow the precise characterization of the achievable reconstruction performances in the non-trivial regime, i.e., close to the computational threshold. In particular, we have seen that each singular value of $\bP^{(\ell)}$ determines the position of a phase transition corresponding to the detectability of the corresponding principal direction.

Although truncated MLSVD does not yield the best low-multilinear-rank approximation --- i.e., a maximum likelihood solution ---, it serves as an excellent initialization for the HOOI algorithm, which converges to such an estimator if it is initialized \emph{sufficiently close} to it \citep{xu_convergence_2018}. In fact, we precise this last assertion by showing that, as long as the initialization preserves the singular subspaces of $\bscrP$ in a sense precised in Theorem \ref{thm:hooi}, HOOI converges to a maximum likelihood solution in a number of iterations which tends to $1$ as $N \to +\infty$. Hence, when it is initialized with a truncated MLSVD, it shares the same phase transition, whose position depends on the singular values $s^{(\ell)}_{q_\ell}(\bP^{(\ell)})$ of the unfoldings of $\bscrP$. Yet, given prior information, HOOI can still reconstruct the maximum likelihood solution below the computational threshold $\norm{\bscrP}_\rmF = \bigTh(N^{\frac{d - 2}{4}})$, where its success depends entirely on the quality of its initialization.

This work gives a comprehensive understanding of the low-multilinear-rank approximation problem near the computational threshold, which has both practical and theoretical implications. Besides, from a theoretical perspective, the behavior of the maximum likelihood estimator is still unclear near the statistical threshold --- that is, in the regime where $\norm{\bscrP}_\rmF = \bigTh(1)$. Several works have studied the rank-$1$ symmetric case \citep{ben_arous_landscape_2019, jagannath_statistical_2020} and the approach developed by \citet{seddik_when_2022} in their analysis of the rank-$1$ \emph{asymmetric} case may be an attractive direction to consider. Relying solely on classical tools from random matrix theory, they bring the study of the best rank-$1$ tensor approximation down to that of a structured matrix defined from the contractions of the data tensor on its dominant singular vectors. Extending this procedure to our general spiked tensor model \eqref{eq:model} presents no conceptual difficulty, despite being computationally cumbersome due to the multiple dimensions of the singular subspaces. It is an interesting line of investigation to refine our understanding of the statistical limits to low-rank tensor estimation from spiked models.

\acks{This work is supported by the MIAI LargeDATA Chair at Université Grenoble Alpes. The authors acknowledge support of the Institut Henri Poincaré (UAR 839 CNRS-Sorbonne Université), and LabEx CARMIN (ANR-10-LABX-59-01)}

\appendix

\section{More Random Matrix Tools}
\label{app:more_tools}

In this appendix we state a few useful results for our proofs in Appendices \ref{proof:thm:ed} and \ref{proof:thm:spike}. This completes the basic tools given in Section \ref{sec:preliminaries:rmt}. Let us also recall valuable references for the interested reader: \citet{bai_spectral_2010, pastur_eigenvalue_2011, tao_topics_2012, potters_first_2020, couillet_random_2022}.

\subsection{Stieltjes Transforms}

Following the definition of Stieltjes transform (Definition \ref{def:stieltjes_transform}), we state a few of its properties. They can be found in \citet[Proposition 2.1.2]{pastur_eigenvalue_2011}.
\begin{proposition}[Properties of Stieltjes transforms] \label{prop:properties_stieltjes}
Let $m$ be the Stieltjes transform of a real probability measure $\mu$.
\begin{enumerate}
\item $m$ is analytic on $\bbC \setminus \bbR$ and $m(\bar{z}) = \overline{m(z)}$;
\item if $\Im z \neq 0$ then $\Im[m(z)] \Im z > 0$;
\item for all $z \in \bbC \setminus \bbR$, $\abs{m(z)} \leqslant 1 / \abs{\Im z}$;
\item if $\mu$ has a density at $x \in \bbR$ then $\frac{\rmd \mu}{\rmd x}(x) = \lim_{y \downarrow 0} \Im[m(x + \rmi y)]$.
\end{enumerate}
\end{proposition}
Note that the resolvent $\bQ_\bS(z) \eqdef (\bS - z \bI_n)^{-1}$ of a symmetric matrix $\bS$ also satisfies $\norm{\bQ_\bS(z)} \leqslant 1 / \abs{\Im z}$.

In order to take into account the potential divergence of $m(z)$ or $\norm{\bQ_\bS(z)}$ near the real axis, we introduce the following notation.
\begin{definition}
We write $u_n(z) = \bigO[z](v_n)$ whenever there exist two polynomials $P, Q$ with nonnegative coefficients and an integer $n_0$ such that $\abs{u_n(z)} \leqslant \frac{P(\abs{z})}{\abs{\Im z} Q(\abs{\Im z})} \abs{v_n}$ as soon as $n \geqslant n_0$.
\end{definition}
In particular, $m(z) = \bigO[z](1)$ and $\norm{\bQ_\bS(z)} = \bigO[z](1)$.

\subsection{Concentration Tools}

In order to prove the concentration around its expectation of a scalar quantity, we will rely on the two following lemmas.

\begin{lemma}[Poincaré-Nash inequality] \label{lem:poincare-nash}
Let $\bz \sim \calN(\bzero_p, \bI_p)$ and $f : \bbR^p \to \bbC$ be a differentiable function with polynomially bounded partial derivatives $\partial_1 f, \ldots, \partial_p f$. Then,
\[
\Var(f(\bz)) \leqslant \Esp{\Norm{\nabla f(\bz)}^2}
\]
where $\nabla = \begin{bmatrix} \partial_1 & \ldots & \partial_p \end{bmatrix}^\top$.
\end{lemma}
\begin{proof}
See \citet[Proposition 2.1.6]{pastur_eigenvalue_2011}.
\end{proof}

\begin{lemma} \label{lem:as_convergence}
Let $(X_n)_{n \geqslant 0}$ be a sequence of random variables on $\bbC$. If $\esp{X_n} \to \bar{X} \in \bbC$ as $n \to +\infty$ and there exists an integer $k \geqslant 1$ such that $\sum_{n \geqslant 0} \esp{\lvert X_n - \esp{X_n} \rvert^k} < +\infty$, then $X_n \to \bar{X}$ almost surely as $n \to +\infty$.
\end{lemma}
\begin{proof}
Let $\varepsilon > 0$. By Markov's inequality \citep[equation 5.31]{billingsley_probability_2012},
\[
\sum_{n \geqslant 0} \bbP \left( \left\lvert X_n - \esp{X_n} \right\rvert \geqslant \varepsilon \right) \leqslant \frac{1}{\varepsilon^\kappa} \sum_{n \geqslant 0} \Esp{\left\lvert X_n - \esp{X_n} \right\rvert^\kappa} < +\infty.
\]
Thus, by the first Borel-Cantelli lemma\footnote{Given a sequence of events $(A_n)_{n \geqslant 0}$ in a probability space $(\bOmega, \calA, \bbP)$, if $\sum_{n \geqslant 0} \bbP(A_n)$ converges then $\bbP(\limsup_{n \geqslant 0} A_n) = 0$.} \citep[Theorem 4.3]{billingsley_probability_2012},
\[
\bbP \left( \limsup_{n \to +\infty} \left\lvert X_n - \esp{X_n} \right\rvert \geqslant \varepsilon \right) = 0.
\] 
This implies the almost sure convergence of $(X_n)_{n \geqslant 0}$ to $\bar{X}$:
\[
\bbP \left( \limsup_{n \to +\infty} \left\lvert X_n - \bar{X} \right\rvert \geqslant \varepsilon \right) \leqslant \bbP \left( \limsup_{n \to +\infty} \left[ \left\lvert X_n - \esp{X_n} \right\rvert + \left\lvert \Esp{X_n} - \bar{X} \right\rvert \right] \geqslant \varepsilon \right) = 0.
\]
\end{proof}

\subsection{The Helffer-Sjöstrand Formula}

A basic property of Stieltjes transforms is that, if $f : \bbR \to \bbR$ is a bounded continuous function and $m$ is the Stieltjes transform of a real probability measure, then $\int f \rmd \mu = \lim_{y \downarrow 0} \frac{1}{\pi} \int f(x) m(x + \rmi y) \rmd x$. Yet, this formula is not always satisfying because $m(x + \rmi y)$ may diverge as $y \downarrow 0$. The following powerful result gives a more pleasing formula when $f$ is smooth and compactly supported.

\begin{proposition}[Helffer-Sjöstrand formula] \label{prop:helffer-sjostrand}
Let $\mu$ be a probability measure on $\bbR$ and $m$ be its Stieltjes transform. Let $f : \bbR \to \bbR$ be a \emph{compactly supported} function which has a continuous $(k + 1)$-th derivative ($k \geqslant 1$). We define $\Phi_k[f]$, the quasi-analytic extension of $f$ on $\bbC^+ \eqdef \set{z \in \bbC \mid \Im z > 0}$ as
\[
\Phi_k[f](z) = \sum_{l = 0}^k \frac{\left( \rmi \Im z \right)^l}{l!} f^{(l)}(\Re z) \chi(\Im z)
\]
where $\chi : \bbR \to [0, 1]$ is an infinitely differentiable even function such that\footnote{An example of such function is $\chi : y \mapsto g \left( \frac{1 + y}{1 - \delta} \right) g \left( \frac{1 - y}{1 - \delta} \right)$ with $g(u) = \frac{f(u)}{f(u) + f(1 - u)}$ and $f(u) = \left\{ \begin{array}{ll} \exp(-1 / u) & \text{if}~ u > 0 \\ 0 & \text{if}~ u \leqslant 0 \end{array} \right.$.} $\chi(y) = 0$ if $\abs{y} \geqslant 1$ and $\chi(y) = 1$ if $\abs{y} \leqslant \delta$ for some $\delta \in ]0, 1[$. Then,
\[
\int_\bbR f ~\rmd \mu = \frac{2}{\pi} \Re \int_{\bbC^+} \deriv{\Phi_k[f]}{\bar{z}}(z) m(z) ~\rmd z
\]
where $\displaystyle \deriv{}{\bar{z}} = \frac{1}{2} \left(\deriv{}{\Re z} + \rmi \deriv{}{\Im z}\right)$ is the Wirtinger derivative.
\end{proposition}
\begin{proof}
The support of $\Phi_k[f]$ is compact therefore an integration by parts gives
\begin{multline*}
\frac{2}{\pi} \Re \int_{\bbC^+} \deriv{\Phi_k[f]}{\bar{z}}(z) m(z) ~\rmd z \\
\begin{aligned}
&= \begin{multlined}[t] \frac{2}{\pi} \Re \left[ \frac{1}{2} \int_0^{+\infty} \left( \int_{-\infty}^{+\infty} \deriv{\Phi_k[f]}{x}(x + \rmi y) m(x + \rmi y) ~\rmd x \right) \rmd y \right. \\ \left. + \frac{\rmi}{2} \int_{-\infty}^{+\infty} \left( \int_0^{+\infty} \deriv{\Phi_k[f]}{y}(x + \rmi y) m(x + \rmi y) ~\rmd y \right) \rmd x \right] \end{multlined} \\
&= \frac{2}{\pi} \Re \left[ \frac{-\rmi}{2} \int_\bbR \lim_{y \downarrow 0} \left\{ \Phi_k[f](x + \rmi y) m(x + \rmi y) \right\} ~\rmd x - \int_{\bbC^+} \Phi_k[f](z) \deriv{m}{\bar{z}}(z) ~\rmd z \right].
\end{aligned}
\end{multline*}
Since $m$ is an analytic function, the Cauchy-Riemann equations give $\partial m / \partial \bar{z} = 0$ and the second integral of the right-hand side is zero. Then, as the limit of a product is the product of the limits and $\lim_{y \downarrow 0} \Phi[f](x + \rmi y) = f(x)$, we have
\[
\frac{2}{\pi} \Re \int_{\bbC^+} \deriv{\Phi_k[f]}{\bar{z}}(z) m(z) ~\rmd z = \Re \left[ \frac{-\rmi}{\pi} \int_\bbR f(x) \left( \lim_{y \downarrow 0} m(x + \rmi y) \right) ~\rmd x \right].
\]
Finally, observing that $\Re[-\rmi z] = \Im z$ and $\frac{\rmd x}{\pi} \lim_{y \downarrow 0} \Im[m(x + \rmi y)] = \rmd \mu(x)$ (Proposition \ref{prop:properties_stieltjes}) concludes the proof.
\end{proof}

Since $\chi$ is constant on $]0, \delta]$, $0 < \Im z \leqslant \delta \implies \deriv{\Phi_k[f]}{\bar{z}}(z) = \frac{1}{2} \frac{\left( \rmi \Im z \right)^k}{k!} f^{(k + 1)}(\Re z)$. The greater $k$, the faster the convergence to zero of $\deriv{\Phi_k[f]}{\bar{z}}$ as $\Im z \downarrow 0$. This is particularly useful to evaluate quantities such as $\int f \rmd \mu_1 - \int f \rmd \mu_2$ from the knowledge that the Stieltjes transforms $m_1, m_2$ of $\mu_1, \mu_2$ (respectively) satisfy $\abs{m_1(z) - m_2(z)} = \bigO(\abs{\Im z}^{-k})$ as $\Im z \downarrow 0$:
\[
\int f ~\rmd \mu_1 - \int f ~\rmd \mu_2 = \frac{2}{\pi} \Re \int_{\bbC^+} \underbrace{\deriv{\Phi_k[f]}{\bar{z}}(z) \left( m_1(z) - m_2(z) \right)}_{=\bigO(1) ~\text{as}~ \Im z \downarrow 0} ~\rmd z.
\]
The smoothness of $f$ compensates for the divergence of $m_1(z) - m_2(z)$ near the real axis.

The next lemma is a handy result to evaluate some integrals which appear after using the Helffer-Sjöstrand formula.

\begin{lemma} \label{lem:stieltjes_distrib}
Let $\calK$ be a compact subset of $\bbR$ and $h$ be an analytic function on $\bbC \setminus \calK$ such that
\begin{enumerate}
\item $\lim_{\abs{z} \to +\infty} h(z) = 0$,
\item $h(\bar{z}) = \overline{h(z)}$ for all $z \in \bbC \setminus \calK$,
\item there exist an integer $n_0 \geqslant 1$ and a constant $C_\calK > 0$ such that
\[
\Abs{h(z)} \leqslant C_\calK \max(\Dist(z, \calK)^{-n_0}, 1) \qquad \text{for all}~ z \in \bbC \setminus \calK.
\]
\end{enumerate}
Then, $\lim_{y \downarrow 0} \frac{1}{\pi} \int_\bbR \Im[h(x + \rmi y)] ~\rmd x = \lim_{y \to +\infty} -\rmi y h(\rmi y)$.
\end{lemma}
\begin{proof}
According to Theorem 5.4 of \citet{schultz_non-commutative_2005} (see also \citealp[Theorem 4.3]{capitaine_largest_2009}; \citealp[Lemma 9.1]{loubaton_almost_2016}), $h$ is the Stieltjes transform of a compactly-supported Schwartz distribution \citep[Chapter 6]{rudin_functional_1991} and therefore satisfies the stated result.
\end{proof}

\section{Proof of Theorem \ref{thm:ed}}
\label{proof:thm:ed}

The resolvent of $\bT^{(\ell)} \bT^{(\ell) \top}$ is defined for all $z \in \bbC \setminus \Sp \bT^{(\ell)} \bT^{(\ell) \top}$ as
\[
\bQ^{(\ell)}(z) \eqdef \left( \bT^{(\ell)} \bT^{(\ell) \top} - z \bI_{n_\ell} \right)^{-1}.
\]
We will often drop the dependence in $z$ to simplify notations.

Since $\bQ^{(\ell) -1} \bQ^{(\ell)} = \bI_{n_\ell}$ and $\bT^{(\ell)} = \bP^{(\ell)} + \frac{1}{\sqrt{N}} \bN^{(\ell)}$, we have,
\begin{equation} \label{eq:starting_equation}
\bP^{(\ell)} \bT^{(\ell) \top} \bQ^{(\ell)} + \frac{1}{\sqrt{N}} \bN^{(\ell)} \bT^{(\ell) \top} \bQ^{(\ell)} - z \bQ^{(\ell)} = \bI_{n_\ell}.
\end{equation}

\subsection{Expressions with Stein's Lemma}

Using Stein's lemma (Lemma \ref{lem:stein}), we find the following expressions.
\begin{gather}
\Esp{\bP^{(\ell)} \bT^{(\ell) \top} \bQ^{(\ell)}} = \Esp{\bP^{(\ell)} \bP^{(\ell) \top} \bQ^{(\ell)}} - \frac{1}{N} \Esp{\bP^{(\ell)} \bT^{(\ell) \top} \bQ^{(\ell)} \Tr \bQ^{(\ell)} + \bP^{(\ell)} \bT^{(\ell) \top} \bQ^{(\ell) 2}}, \label{eq:PTQ} \\
\Esp{\bN^{(\ell)} \bT^{(\ell) \top} \bQ^{(\ell)}} = \frac{\prod_{\ell' \neq \ell} n_{\ell'}}{\sqrt{N}} \Esp{\bQ^{(\ell)}} - \frac{1}{\sqrt{N}} \Esp{\left( n_\ell + 1 \right) \bQ^{(\ell)} + z \left( \bQ^{(\ell) 2} + \bQ^{(\ell)} \Tr \bQ^{(\ell)} \right)}. \label{eq:NTQ}
\end{gather}

\paragraph{Derivatives of $\bQ^{(\ell)}$}
Firstly, we need to show that
\begin{equation} \label{eq:derivQ}
\deriv{Q^{(\ell)}_{a, b}}{N^{(\ell)}_{c, d}} = -\frac{1}{\sqrt{N}} \left( Q^{(\ell)}_{a, c} \left[ \bT^{(\ell) \top} \bQ^{(\ell)} \right]_{d, b} + \left[ \bQ^{(\ell)} \bT^{(\ell)} \right]_{a, d} Q^{(\ell)}_{c, b} \right).
\end{equation}

Indeed, using the fact that $\partial \bQ^{(\ell)} = -\bQ^{(\ell)} \partial(\bT^{(\ell)} \bT^{(\ell) \top}) \bQ^{(\ell)}$, we have,
\[
\deriv{Q^{(\ell)}_{a, b}}{N^{(\ell)}_{c, d}} = -\left[ \bQ^{(\ell)} \deriv{\bT^{(\ell)}}{N^{(\ell)}_{c, d}} \bT^{(\ell) \top} \bQ^{(\ell)} \right]_{a, b} - \left[ \bQ^{(\ell)} \bT^{(\ell)} \deriv{\bT^{(\ell) \top}}{N^{(\ell)}_{c, d}} \bQ^{(\ell)} \right]_{a, b}
\]
and, since $\bT^{(\ell)} = \bP^{(\ell)} + \frac{1}{\sqrt{N}} \bN^{(\ell)}$,
\begin{gather*}
\left[ \bQ^{(\ell)} \deriv{\bT^{(\ell)}}{N^{(\ell)}_{c, d}} \bT^{(\ell) \top} \bQ^{(\ell)} \right]_{a, b} = \frac{1}{\sqrt{N}} Q^{(\ell)}_{a, c} \left[ \bT^{(\ell) \top} \bQ^{(\ell)} \right]_{d, b}, \\
\left[ \bQ^{(\ell)} \bT^{(\ell)} \deriv{\bT^{(\ell) \top}}{N^{(\ell)}_{c, d}} \bQ^{(\ell)} \right]_{a, b} = \frac{1}{\sqrt{N}} \left[ \bQ^{(\ell)} \bT^{(\ell)} \right]_{a, d} Q^{(\ell)}_{c, b}.
\end{gather*}

\paragraph{Proof of Equation \eqref{eq:PTQ}}
Since $\bT^{(\ell)} = \bP^{(\ell)} + \frac{1}{\sqrt{N}} \bN^{(\ell)}$, we have,
\[
\esp{\bP^{(\ell)} \bT^{(\ell) \top} \bQ^{(\ell)}} = \esp{\bP^{(\ell)} \bP^{(\ell) \top} \bQ^{(\ell)}} + \frac{1}{\sqrt{N}} \esp{\bP^{(\ell)} \bN^{(\ell) \top} \bQ^{(\ell)}}.
\]
To deal with the rightmost term, we successively use Stein's lemma (Lemma \ref{lem:stein}) and Equation \eqref{eq:derivQ}.
\begin{align*}
\Esp{\bP^{(\ell)} \bN^{(\ell) \top} \bQ^{(\ell)}}_{i, j} &= \sum_{k = 1}^{\prod_{\ell' \neq \ell} n_{\ell'}} \sum_{l = 1}^{n_\ell} \Esp{P^{(\ell)}_{i, k} N^{(\ell)}_{l, k} Q^{(\ell)}_{l, j}} \\
&= \sum_{k = 1}^{\prod_{\ell' \neq \ell} n_{\ell'}} \sum_{l = 1}^{n_\ell} \Esp{P^{(\ell)}_{i, k} \deriv{Q^{(\ell)}_{l, j}}{N^{(\ell)}_{l, k}}} \\
&= \begin{multlined}[t] -\frac{1}{\sqrt{N}} \sum_{k = 1}^{\prod_{\ell' \neq \ell} n_{\ell'}} \sum_{l = 1}^{n_\ell} \Esp{P^{(\ell)}_{i, k} Q^{(\ell)}_{l, l} \left[ \bT^{(\ell) \top} \bQ^{(\ell)} \right]_{k, j}} \\ - \frac{1}{\sqrt{N}} \sum_{k = 1}^{\prod_{\ell' \neq \ell} n_{\ell'}} \sum_{l = 1}^{n_\ell} \Esp{P^{(\ell)}_{i, k} \left[ \bQ^{(\ell)} \bT^{(\ell)} \right]_{l, k} Q^{(\ell)}_{l, j}} \end{multlined} \\
&= -\frac{1}{\sqrt{N}} \Esp{\bP^{(\ell)} \bT^{(\ell) \top} \bQ^{(\ell)} \Tr \bQ^{(\ell)} + \bP^{(\ell)} \bT^{(\ell) \top} \bQ^{(\ell) 2}}_{i, j}.
\end{align*}

\paragraph{Proof of Equation \eqref{eq:NTQ}}
We proceed similarly with Stein's lemma (Lemma \ref{lem:stein}) and Equation \eqref{eq:derivQ}.
\begin{align*}
&\Esp{\bN^{(\ell)} \bT^{(\ell) \top} \bQ^{(\ell)}}_{i, j} \\
&\qquad= \sum_{k = 1}^{\prod_{\ell' \neq \ell} n_{\ell'}} \sum_{l = 1}^{n_\ell} \Esp{N^{(\ell)}_{i, k} T^{(\ell)}_{l, k} Q^{(\ell)}_{l, j}} \\
&\qquad= \sum_{k = 1}^{\prod_{\ell' \neq \ell} n_{\ell'}} \sum_{l = 1}^{n_\ell} \Esp{\deriv{T^{(\ell)}_{l, k}}{N^{(\ell)}_{i, k}} Q^{(\ell)}_{l, j} + T^{(\ell)}_{l, k} \deriv{Q^{(\ell)}_{l, j}}{N^{(\ell)}_{i, k}}} \\
&\qquad= \begin{multlined}[t] \frac{\prod_{\ell' \neq \ell} n_{\ell'}}{\sqrt{N}} \Esp{\bQ^{(\ell)}}_{i, j} - \frac{1}{\sqrt{N}} \sum_{k = 1}^{\prod_{\ell' \neq \ell} n_{\ell'}} \sum_{l = 1}^{n_\ell} \Esp{T^{(\ell)}_{l, k} Q^{(\ell)}_{l, i} \left[ \bT^{(\ell) \top} \bQ^{(\ell)} \right]_{k, j}} \\ - \frac{1}{\sqrt{N}} \sum_{k = 1}^{\prod_{\ell' \neq \ell} n_{\ell'}} \sum_{l = 1}^{n_\ell} \Esp{T^{(\ell)}_{l, k} \left[ \bQ^{(\ell)} \bT^{(\ell)} \right]_{l, k} Q^{(\ell)}_{i, j}} \end{multlined} \\
&\qquad= \frac{\prod_{\ell' \neq \ell} n_{\ell'}}{\sqrt{N}} \Esp{\bQ^{(\ell)}}_{i, j} - \frac{1}{\sqrt{N}} \Esp{\bQ^{(\ell)} \bT^{(\ell)} \bT^{(\ell) \top} \bQ^{(\ell)} + \bQ^{(\ell)} \Tr \bT^{(\ell)} \bT^{(\ell) \top} \bQ^{(\ell)}}_{i, j}.
\end{align*}
Since $\bT^{(\ell)} \bT^{(\ell) \top} \bQ^{(\ell)} - z \bQ^{(\ell)} = \bI_{n_\ell}$, we find the result stated in Equation \eqref{eq:NTQ}.

\subsection{Asymptotic Behavior of the Resolvent}

Taking the expectation of Equation \eqref{eq:starting_equation} and injecting Equation \eqref{eq:NTQ} yields
\begin{multline*}
\Esp{\bP^{(\ell)} \bT^{(\ell) \top} \bQ^{(\ell)}} + \frac{\prod_{\ell' \neq \ell} n_{\ell'}}{N} \Esp{\bQ^{(\ell)}} \\
- \frac{1}{N} \Esp{\left( n_\ell + 1 \right) \bQ^{(\ell)} + z \left( \bQ^{(\ell) 2} + \bQ^{(\ell)} \Tr \bQ^{(\ell)} \right)} - z \Esp{\bQ^{(\ell)}} = \bI_{n_\ell}.
\end{multline*}
We rearrange this expression into the more convenient following form,
\begin{multline*}
z \frac{n_\ell}{N} \Esp{\bQ^{(\ell)} \frac{\Tr \bQ^{(\ell)}}{n_\ell}} + \left( z + \frac{n_\ell - \prod_{\ell' \neq \ell} n_{\ell'}}{N} \right) \Esp{\bQ^{(\ell)}} + \bI_{n_\ell} \\
= -\frac{1}{N} \Esp{\bQ^{(\ell)} + z \bQ^{(\ell) 2}} + \Esp{\bP^{(\ell)} \bT^{(\ell) \top} \bQ^{(\ell)}}.
\end{multline*}
Here, the divergence of the spectrum of $\bT^{(\ell)} \bT^{(\ell) \top}$ becomes problematic: its resolvent $\bQ^{(\ell)}$ vanishes asymptotically, allowing the presence of the diverging coefficient $\frac{1}{N} \prod_{\ell' \neq \ell} n_{\ell'}$ in the previous equation. To bypass this difficulty, we proceed to a rescaling,
\begin{gather*}
\tilde{z} \eqdef \frac{z - \mu^{(\ell)}_N}{\sigma_N}, \qquad \widetilde{\bQ}^{(\ell)}(\tilde{z}) \eqdef \left( \frac{\bT^{(\ell)} \bT^{(\ell) \top} - \mu^{(\ell)}_N \bI_{n_\ell}}{\sigma_N} - \tilde{z} \bI_{n_\ell} \right)^{-1} = \sigma_N \bQ^{(\ell)}(z), \\
\text{with} \qquad \mu^{(\ell)}_N = \frac{1}{N} \prod_{\ell' \neq \ell} n_{\ell'}, \qquad \sigma_N = \frac{1}{N} \sqrt{\prod_{\ell \in [d]} n_\ell}.
\end{gather*}
This changes our equation into
\begin{multline*}
\frac{\mu^{(\ell)}_N + \sigma_N \tilde{z}}{\sigma_N^2} \frac{n_\ell}{N} \Esp{\widetilde{\bQ}^{(\ell)} \frac{\Tr \widetilde{\bQ}^{(\ell)}}{n_\ell}} + \left( \frac{\mu^{(\ell)}_N}{\sigma_N} + \tilde{z} + \frac{n_\ell - \prod_{\ell' \neq \ell} n_{\ell'}}{\sigma_N N} \right) \Esp{\widetilde{\bQ}^{(\ell)}} + \bI_{n_\ell} \\
= -\frac{1}{\sigma_N N} \Esp{\widetilde{\bQ}^{(\ell)} + \left( \frac{\mu^{(\ell)}_N}{\sigma_N} + \tilde{z} \right) \widetilde{\bQ}^{(\ell) 2}} + \frac{1}{\sigma_N} \Esp{\bP^{(\ell)} \bT^{(\ell) \top} \widetilde{\bQ}^{(\ell)}}.
\end{multline*}
Let us define $\widetilde{m}^{(\ell)}_N : \tilde{z} \mapsto \frac{1}{n_\ell} \Tr \widetilde{\bQ}^{(\ell)}(\tilde{z})$, the Stieltjes transform of the empirical spectral distribution of $\frac{1}{\sigma_N} \left[ \bT^{(\ell)} \bT^{(\ell) \top} - \mu^{(\ell)}_N \bI_{n_\ell} \right]$. With this definition, and using the fact that $\frac{\mu^{(\ell)}_N}{\sigma_N^2} \frac{n_\ell}{N} = 1$, we can rewrite the previous relation as
\begin{multline} \label{eq:EQ}
\Esp{\widetilde{m}^{(\ell)}_N(\tilde{z}) \widetilde{\bQ}^{(\ell)}} + \tilde{z} \Esp{\widetilde{\bQ}^{(\ell)}} + \bI_{n_\ell} - \frac{1}{\sigma_N} \Esp{\bP^{(\ell)} \bT^{(\ell) \top} \widetilde{\bQ}^{(\ell)}} \\
= -\frac{1}{\sigma_N} \frac{n_\ell}{N} \Esp{\left( \tilde{z} \widetilde{m}^{(\ell)}_N(\tilde{z}) + 1 \right) \widetilde{\bQ}^{(\ell)}} - \frac{1}{N} \left( \frac{\mu^{(\ell)}_N}{\sigma_N^2} + \frac{\tilde{z}}{\sigma_N} \right) \Esp{\widetilde{\bQ}^{(\ell) 2}} - \frac{1}{\sigma_N} \frac{1}{N} \Esp{\widetilde{\bQ}^{(\ell)}}
\end{multline}
where we have kept only the non-vanishing terms on the left-hand side.

\subsection{Concentration of Bilinear Forms and Traces}
\label{proof:thm:ed:concentration}

\subsubsection{Overview}

Let us show that $\esp{\widetilde{\bQ}^{(\ell)}}$ is a deterministic equivalent of $\widetilde{\bQ}^{(\ell)}$ (Definition \ref{def:matrix_equivalent}). That is, for all bounded (sequences of) vectors $\ba, \bb \in \bbR^{n_\ell}$ and matrices $\bA \in \bbR^{n_\ell \times n_\ell}$,
\[
\ba^\top \left( \widetilde{\bQ}^{(\ell)} - \Esp{\widetilde{\bQ}^{(\ell)}} \right) \bb \xrightarrow[N \to +\infty]{\text{a.s.}} 0 \qquad \text{and} \qquad \frac{1}{n_\ell} \Tr \bA \left( \widetilde{\bQ}^{(\ell)} - \Esp{\widetilde{\bQ}^{(\ell)}} \right) \xrightarrow[N \to +\infty]{\text{a.s.}} 0.
\]
This result is derived with the Poincaré-Nash inequality (Lemma \ref{lem:poincare-nash}) and Lemma \ref{lem:as_convergence}. Indeed, relying on Lemma \ref{lem:poincare-nash} and Equation \eqref{eq:derivQ}, we find below that
\begin{gather*}
\Esp{\left\lvert \Tr \bA \left( \widetilde{\bQ}^{(\ell)}(\tilde{z}) - \Esp{\widetilde{\bQ}^{(\ell)}(\tilde{z})} \right) \right\rvert^{2}} = \bigO[\tilde{z}](N^{-2}), \\
\Esp{\left\lvert \ba^\top \left( \widetilde{\bQ}^{(\ell)}(\tilde{z}) - \Esp{\widetilde{\bQ}^{(\ell)}(\tilde{z})} \right) \bb \right\rvert^{4}} = \bigO[\tilde{z}](N^{-2})
\end{gather*}
for all deterministic matrix $\bA \in \bbR^{n_\ell \times n_\ell}$ and vectors $\ba, \bb \in \bbR^{n_\ell}$ of bounded norms (spectral and Euclidean norms respectively). By Lemma \ref{lem:as_convergence}, this implies that
\[
\Tr \bA \left( \widetilde{\bQ}^{(\ell)}(\tilde{z}) - \Esp{\widetilde{\bQ}^{(\ell)}(\tilde{z})} \right) \xrightarrow[N \to +\infty]{\text{a.s.}} 0 \quad \text{and} \quad \ba^\top \left( \widetilde{\bQ}^{(\ell)}(\tilde{z}) - \Esp{\widetilde{\bQ}^{(\ell)}(\tilde{z})} \right) \bb \xrightarrow[N \to +\infty]{\text{a.s.}} 0.
\]
Hence, $\esp{\widetilde{\bQ}^{(\ell)}(\tilde{z})}$ is a deterministic equivalent of $\widetilde{\bQ}^{(\ell)}(\tilde{z})$ according to Definition \ref{def:matrix_equivalent}.

\subsubsection{Proof of the Concentration}

\paragraph{Concentration of Traces}

Let $\bA \in \bbR^{n_\ell \times n_\ell}$ be a deterministic matrix with bounded norm. The Poincaré-Nash inequality (Lemma \ref{lem:poincare-nash}) yields
\begin{multline*}
\Esp{\left( \frac{1}{n_\ell} \Tr \bA \left( \widetilde{\bQ}^{(\ell)} - \Esp{\widetilde{\bQ}^{(\ell)}} \right) \right)^2} = \Var \left( \frac{1}{n_\ell} \Tr \bA \widetilde{\bQ}^{(\ell)} \right) \\
\leqslant \frac{1}{n_\ell^2} \sum_{i = 1}^{n_\ell} \sum_{j = 1}^{\prod_{\ell' \neq \ell} n_{\ell'}} \Esp{\left( \deriv{\Tr \bA \widetilde{\bQ}^{(\ell)}}{N^{(\ell)}_{i, j}} \right)^2}.
\end{multline*}
With Equation \eqref{eq:derivQ}, we have,
\begin{align*}
\deriv{\Tr \bA \widetilde{\bQ}^{(\ell)}}{N^{(\ell)}_{i, j}} &= \sum_{k = 1}^{n_\ell} \sum_{l = 1}^{n_\ell} A_{k, l} \deriv{\widetilde{Q}^{(\ell)}_{l, k}}{N^{(\ell)}_{i, j}} \\
&= -\frac{1}{\sigma_N \sqrt{N}} \sum_{k = 1}^{n_\ell} \sum_{l = 1}^{n_\ell} A_{k, l} \left[ \widetilde{Q}^{(\ell)}_{l, i} [\bT^{(\ell) \top} \widetilde{\bQ}^{(\ell)}]_{j, k} + [\widetilde{\bQ}^{(\ell)} \bT^{(\ell)}]_{l, j} \widetilde{Q}^{(\ell)}_{i, k} \right] \\
&= -\frac{1}{\sigma_N \sqrt{N}} \left[ \widetilde{\bQ}^{(\ell)} \bA^\top \widetilde{\bQ}^{(\ell)} \bT^{(\ell)} + \widetilde{\bQ}^{(\ell)} \bA \widetilde{\bQ}^{(\ell)} \bT^{(\ell)} \right]_{i, j}.
\end{align*}
Thus, using the fact that $(a + b)^2 \leqslant 2 (a^2 + b^2)$, we have,
\begin{multline*}
\frac{1}{n_\ell^2} \sum_{i = 1}^{n_\ell} \sum_{j = 1}^{\prod_{\ell' \neq \ell} n_{\ell'}} \Esp{\left( \deriv{\Tr \bA \widetilde{\bQ}^{(\ell)}}{N^{(\ell)}_{i, j}} \right)^2} \\
\leqslant \frac{2}{\sigma_N^2 N n_\ell^2} \sum_{i = 1}^{n_\ell} \sum_{j = 1}^{\prod_{\ell' \neq \ell} n_{\ell'}} \Esp{[\widetilde{\bQ}^{(\ell)} \bA^\top \widetilde{\bQ}^{(\ell)} \bT^{(\ell)}]_{i, j}^2 + [\widetilde{\bQ}^{(\ell)} \bA \widetilde{\bQ}^{(\ell)} \bT^{(\ell)}]_{i, j}^2} \\
= \frac{2}{\sigma_N^2 N n_\ell^2} \Esp{\Norm{\widetilde{\bQ}^{(\ell)} \bA^\top \widetilde{\bQ}^{(\ell)} \bT^{(\ell)}}_\rmF^2 + \Norm{\widetilde{\bQ}^{(\ell)} \bA \widetilde{\bQ}^{(\ell)} \bT^{(\ell)}}_\rmF^2} \\
\leqslant \frac{4}{\sigma_N^2 N n_\ell^2} \Norm{\bA}_\rmF^2 \Esp{\Norm{\widetilde{\bQ}^{(\ell)}}^4 \Norm{\bT^{(\ell)}}^2}
\end{multline*}
where the last inequality is obtained using the property $\norm{\bA \bB}_\rmF \leqslant \norm{\bA} \norm{\bB}_\rmF$ and the sub-multiplicative property $\norm{\bA \bB} \leqslant \norm{\bA} \norm{\bB}$. Since $\norm{\bA}_\rmF^2 \leqslant n_\ell \norm{\bA}^2 = \bigO(N)$ (because $n_\ell = \bigTh(N)$ and $\norm{\bA}$ is bounded), $\norm{\widetilde{\bQ}^{(\ell)}} = \bigO[\tilde{z}](1)$ and $\frac{1}{\sigma_N^2} \norm{\bT^{(\ell)}}^2 = \bigO(1)$, we find that $\Var(\frac{1}{n_\ell} \Tr \bA \widetilde{\bQ}^{(\ell)}) = \bigO[\tilde{z}](N^{-2})$; Q.E.D.

\paragraph{Concentration of Bilinear Forms}

Let $\ba, \bb \in \bbR^{n_\ell}$ be two deterministic vectors with bounded norms. The Poincaré-Nash inequality (Lemma \ref{lem:poincare-nash}) yields
\[
\Esp{\left( \ba^\top \left( \widetilde{\bQ}^{(\ell)} - \Esp{\widetilde{\bQ}^{(\ell)}} \right) \bb \right)^2} = \Var \left( \ba^\top \widetilde{\bQ}^{(\ell)} \bb \right) \leqslant \sum_{i = 1}^{n_\ell} \sum_{j = 1}^{\prod_{\ell' \neq \ell} n_{\ell'}} \Esp{\left( \deriv{\ba^\top \widetilde{\bQ}^{(\ell)} \bb}{N^{(\ell)}_{i, j}} \right)^2}.
\]
With Equation \eqref{eq:derivQ}, we have,
\begin{align*}
\deriv{\ba^\top \widetilde{\bQ}^{(\ell)} \bb}{N^{(\ell)}_{i, j}} &= \sum_{k = 1}^{n_\ell} \sum_{l = 1}^{n_\ell} a_k \deriv{\widetilde{Q}^{(\ell)}_{k, l}}{N^{(\ell)}_{i, j}} b_l \\
&= -\frac{1}{\sigma_N \sqrt{N}} \sum_{k = 1}^{n_\ell} \sum_{l = 1}^{n_\ell} a_k  \left[ \widetilde{Q}^{(\ell)}_{k, i} [\bT^{(\ell) \top} \widetilde{\bQ}^{(\ell)}]_{j, l} + [\widetilde{\bQ}^{(\ell)} \bT^{(\ell)}]_{k, j} \widetilde{Q}^{(\ell)}_{i, l} \right] b_l \\
&= -\frac{1}{\sigma_N \sqrt{N}} \left[ \widetilde{\bQ}^{(\ell)} \ba \bb^\top \widetilde{\bQ}^{(\ell)} \bT^{(\ell)} + \widetilde{\bQ}^{(\ell)} \bb \ba^\top \widetilde{\bQ}^{(\ell)} \bT^{(\ell)} \right]_{i, j}.
\end{align*}
Thus, using the fact that $(a + b)^2 \leqslant 2 (a^2 + b^2)$, we have,
\begin{multline*}
\sum_{i = 1}^{n_\ell} \sum_{j = 1}^{\prod_{\ell' \neq \ell} n_{\ell'}} \Esp{\left( \deriv{\ba^\top \widetilde{\bQ}^{(\ell)} \bb}{N^{(\ell)}_{i, j}} \right)^2} \\
\leqslant \frac{2}{\sigma_N^2 N} \sum_{i = 1}^{n_\ell} \sum_{j = 1}^{\prod_{\ell' \neq \ell} n_{\ell'}} \Esp{[\widetilde{\bQ}^{(\ell)} \ba \bb^\top \widetilde{\bQ}^{(\ell)} \bT^{(\ell)}]_{i, j}^2 + [\widetilde{\bQ}^{(\ell)} \bb \ba^\top \widetilde{\bQ}^{(\ell)} \bT^{(\ell)}]_{i, j}^2} \\
= \frac{2}{\sigma_N^2 N} \Esp{\Norm{\widetilde{\bQ}^{(\ell)} \ba \bb^\top \widetilde{\bQ}^{(\ell)} \bT^{(\ell)}}_\rmF^2 + \Norm{\widetilde{\bQ}^{(\ell)} \bb \ba^\top \widetilde{\bQ}^{(\ell)} \bT^{(\ell)}}_\rmF^2} \\
\leqslant \frac{2}{\sigma_N^2 N} \Norm{\ba}^2 \Norm{\bb}^2 \Esp{\Norm{\widetilde{\bQ}^{(\ell)}}^2 \Norm{\bT^{(\ell)}}^2} = \bigO[\tilde{z}](N^{-1})
\end{multline*}
where the last line is obtained similarly to the previous case. Hence, $\Var(\ba^\top \widetilde{\bQ}^{(\ell)} \bb) = \bigO[\tilde{z}](N^{-1})$, which is not enough to apply Lemma \ref{lem:as_convergence}. We must therefore evaluate the moment of order $4$,
\begin{multline*}
\Esp{\left( \ba^\top \left( \widetilde{\bQ}^{(\ell)} - \Esp{\widetilde{\bQ}^{(\ell)}} \right) \bb \right)^4} = \Var \left( \left( \ba^\top \left( \widetilde{\bQ}^{(\ell)} - \Esp{\widetilde{\bQ}^{(\ell)}} \right) \bb \right)^2 \right) \\
+ \Esp{\left( \ba^\top \left( \widetilde{\bQ}^{(\ell)} - \Esp{\widetilde{\bQ}^{(\ell)}} \right) \bb \right)^2}^2.
\end{multline*}
Since we already known that the rightmost term is $\bigO(N^{-2})$, we use the Poincaré-Nash inequality (Lemma \ref{lem:poincare-nash}) on the variance of $( \ba^\top (\widetilde{\bQ}^{(\ell)} - \esp{\widetilde{\bQ}^{(\ell)}}) \bb)^2$,
\begin{multline*}
\Var \left( \left( \ba^\top \left( \widetilde{\bQ}^{(\ell)} - \Esp{\widetilde{\bQ}^{(\ell)}} \right) \bb \right)^2 \right) \\
\leqslant \sum_{i, j} \Esp{\left( \ba^\top \deriv{}{N^{(\ell)}_{i, j}} \left[ \widetilde{\bQ}^{(\ell)} \bb \ba^\top \widetilde{\bQ}^{(\ell)} - \widetilde{\bQ}^{(\ell)} \bb \ba^\top \Esp{\widetilde{\bQ}^{(\ell)}} - \Esp{\widetilde{\bQ}^{(\ell)}} \bb \ba^\top \widetilde{\bQ}^{(\ell)} \right] \bb \right)^2}.
\end{multline*}
Similarly, injecting the expression of $\partial \widetilde{\bQ}^{(\ell)} / \partial N^{(\ell)}_{i, j}$, straightforward computations yield
\begin{align*}
&\Var \left( \left( \ba^\top \left( \widetilde{\bQ}^{(\ell)} - \Esp{\widetilde{\bQ}^{(\ell)}} \right) \bb \right)^2 \right) \\
&\leqslant \sum_{i, j} \Esp{\left( -\frac{2}{\sigma_N \sqrt{N}} \left( \ba^\top \left( \widetilde{\bQ}^{(\ell)} - \Esp{\widetilde{\bQ}^{(\ell)}} \right) \bb \right) \left[ \widetilde{\bQ}^{(\ell)} \left( \ba \bb^\top + \bb \ba^\top \right) \widetilde{\bQ}^{(\ell)} \bT^{(\ell)} \right]_{i, j} \right)^2} \\
&\leqslant \frac{8}{\sigma_N^2 N} \sum_{i, j} \Esp{\left( \ba^\top \left( \widetilde{\bQ}^{(\ell)} - \Esp{\widetilde{\bQ}^{(\ell)}} \right) \bb \right)^2 \left( [\widetilde{\bQ}^{(\ell)} \ba \bb^\top \widetilde{\bQ}^{(\ell)} \bT^{(\ell)}]_{i, j}^2 + [\widetilde{\bQ}^{(\ell)} \bb \ba^\top \widetilde{\bQ}^{(\ell)} \bT^{(\ell)}]_{i, j}^2 \right)} \\
& \leqslant \frac{16}{\sigma_N^2 N} \Norm{\ba}^2 \Norm{\bb}^2 \Esp{\left( \ba^\top \left( \widetilde{\bQ}^{(\ell)} - \Esp{\widetilde{\bQ}^{(\ell)}} \right) \bb \right)^2 \Norm{\widetilde{\bQ}^{(\ell)}}^4 \Norm{\bT^{(\ell)}}^2}.
\end{align*}
Since we have shown that $\esp{(\ba^\top (\widetilde{\bQ}^{(\ell)} - \esp{\widetilde{\bQ}^{(\ell)}}) \bb)^2} = \bigO[\tilde{z}](N^{-1})$, we have $\Var((\ba^\top (\widetilde{\bQ}^{(\ell)} - \esp{\widetilde{\bQ}^{(\ell)}}) \bb)^2) = \bigO[\tilde{z}](N^{-2})$; Q.E.D.

\subsection{Expansion of the Mean Empirical Stieltjes Transform}

Let $\widetilde{\bQ}^{(\ell)}_0$ denote the resolvent of $\frac{1}{\sigma_N} \left[ \frac{1}{N} \bN^{(\ell)} \bN^{(\ell) \top} - \mu^{(\ell)}_N \bI_{n_\ell} \right]$. Using the resolvent identity $\bA^{-1} - \bB^{-1} = \bA^{-1} (\bB - \bA) \bB^{-1}$, we can see that $\frac{1}{n_\ell} \Tr(\widetilde{\bQ}^{(\ell)}_0 - \widetilde{\bQ}^{(\ell)}) = \bigO[\tilde{z}](N^{-1})$ therefore the low-rank perturbation $\bP^{(\ell)}$ does not change the limiting spectral distribution and we can consider $\bP^{(\ell)} = \bzero_{n_\ell \times \prod_{\ell' \neq \ell} n_{\ell'}}$ from now on.

Applying $\frac{1}{n_\ell} \Tr$ to Equation \eqref{eq:EQ} and using the fact that $\frac{1}{n_\ell} \Tr \widetilde{\bQ}^{(\ell) 2} = \widetilde{m}_N^{(\ell) \prime}$ as well as $\esp{\widetilde{m}^{(\ell) 2}_N(\tilde{z})} = \esp{\widetilde{m}^{(\ell)}_N(\tilde{z})}^2 + \Var(\widetilde{m}^{(\ell)}_N(\tilde{z}))$ with $\Var(\widetilde{m}^{(\ell)}_N(\tilde{z})) = \bigO[\tilde{z}](N^{-2})$, we find
\begin{multline} \label{eq:esp_mN}
\Esp{\widetilde{m}^{(\ell)}_N(\tilde{z})}^2 + \tilde{z} \Esp{\widetilde{m}^{(\ell)}_N(\tilde{z})} + 1 \\
= -\frac{1}{\sigma_N} \frac{n_\ell}{N} \Esp{\tilde{z} \widetilde{m}^{(\ell) 2}_N(\tilde{z}) + \widetilde{m}^{(\ell)}_N(\tilde{z})} - \frac{1}{N} \frac{\mu^{(\ell)}_N}{\sigma_N^2} \Esp{\widetilde{m}^{(\ell) \prime}_N(\tilde{z})} + \bigO[\tilde{z}](N^{-\min(\frac{d}{2}, 2)})
\end{multline}
where the $\bigO[\tilde{z}](N^{-\min(\frac{d}{2}, 2)})$ stems from $\sigma_N = \bigTh(N^{\frac{d - 2}{2}})$. We know that the Stieltjes transform $m_\mathrm{SC}$ of the Wigner semicircle distribution on $[-2, 2]$ satisfies $m_\mathrm{SC}^2(\tilde{z}) + \tilde{z} m_\mathrm{SC}(\tilde{z}) + 1 = 0$ for all $\tilde{z} \in \bbC \setminus [-2, 2]$ (see, e.g., \citealp[Corollary 2.2.8]{pastur_eigenvalue_2011}). Let us subtract this relation to Equation \eqref{eq:esp_mN} and factorize by $\esp{\widetilde{m}^{(\ell)}_N(\tilde{z})} - m_\mathrm{SC}(\tilde{z})$ using the relation $a^2 - b^2 = (a - b) (a + b)$.
\begin{multline*}
\left( \Esp{\widetilde{m}^{(\ell)}_N(\tilde{z})} - m_\mathrm{SC}(\tilde{z}) \right) \left( \Esp{\widetilde{m}^{(\ell)}_N(\tilde{z})} + m_\mathrm{SC}(\tilde{z}) + \tilde{z} \right) \\
= -\frac{1}{\sigma_N} \frac{n_\ell}{N} \Esp{\tilde{z} \widetilde{m}^{(\ell) 2}_N(\tilde{z}) + \widetilde{m}^{(\ell)}_N(\tilde{z})} - \frac{1}{N} \frac{\mu^{(\ell)}_N}{\sigma_N^2} \Esp{\widetilde{m}^{(\ell) \prime}_N(\tilde{z})} + \bigO[\tilde{z}](N^{-\min(\frac{d}{2}, 2)}).
\end{multline*}
Let $g^{(\ell)}_N : \tilde{z} \mapsto \frac{-1}{\tilde{z} + m_\mathrm{SC}(\tilde{z}) + \esp{\widetilde{m}^{(\ell)}_N(\tilde{z})}}$. Since, $m_\mathrm{SC}$ and $\esp{\widetilde{m}^{(\ell)}_N}$ are Stieltjes transforms\footnote{$\widetilde{m}^{(\ell)}_N$ is obviously a Stieltjes transform and it can be shown that its expectation is also a Stieltjes transform with Herglotz theorem \citep[Theorem B3]{weidmann_linear_1980}.}, we have $\abs{g^{(\ell)}_N(\tilde{z})} \leqslant \abs{\Im[\tilde{z} + m_\mathrm{SC}(\tilde{z}) + \esp{\widetilde{m}^{(\ell)}_N(\tilde{z})}]}^{-1} \leqslant \abs{\Im \tilde{z}}^{-1}$ therefore
\begin{align}
&\Esp{\widetilde{m}^{(\ell)}_N(\tilde{z})} - m_\mathrm{SC}(\tilde{z}) \nonumber \\
&= g^{(\ell)}_N(\tilde{z}) \left[ \frac{1}{\sigma_N} \frac{n_\ell}{N} \Esp{\tilde{z} \widetilde{m}^{(\ell) 2}_N(\tilde{z}) + \widetilde{m}^{(\ell)}_N(\tilde{z})} + \frac{1}{N} \frac{\mu^{(\ell)}_N}{\sigma_N^2} \Esp{\widetilde{m}^{(\ell) \prime}_N(\tilde{z})} \right] + \bigO[\tilde{z}](N^{-\min(\frac{d}{2}, 2)}) \label{eq:expansion1} \\
&= g^{(\ell)}_N(\tilde{z}) \frac{1}{\sigma_N} \frac{n_\ell}{N} \Esp{\tilde{z} \widetilde{m}^{(\ell) 2}_N(\tilde{z}) + \widetilde{m}^{(\ell)}_N(\tilde{z})} + \bigO[\tilde{z}](N^{-1}) \label{eq:expansion2} \\
&= \bigO[\tilde{z}](N^{-\min(\frac{d - 2}{2}, 1)}). \label{eq:expansion3}
\end{align}
Notice that the dominant term in the difference $\esp{\widetilde{m}^{(\ell)}_N(\tilde{z})} - m_\mathrm{SC}(\tilde{z})$ differs depending on whether $d = 3$, $d = 4$ or $d \geqslant 5$ because $\sigma_N = \bigTh(N^{\frac{d - 2}{2}})$. The higher $d$, the faster the convergence of the empirical spectral distribution to the semicircle distribution. Indeed, at this point, Equation \eqref{eq:expansion3} already shows the pointwise convergence of $\esp{\widetilde{m}^{(\ell)}_N(\tilde{z})}$ to $m_\mathrm{SC}$ and therefore the weak convergence of the corresponding distributions \citep{geronimo_necessary_2003}. Yet, in order to show the confinement of the spectrum below, we need an explicit expansion of $\esp{\widetilde{m}^{(\ell)}_N(\tilde{z})}$ with all the terms not dominated by $N^{-1}$, which we state in the following lemma.

\begin{lemma} \label{lem:devEm}
$\esp{\widetilde{m}^{(\ell)}_N(\tilde{z})} = m_\mathrm{SC}(\tilde{z}) + h^{(\ell)}_N(\tilde{z}) + \bigO[\tilde{z}](N^{-\min(\frac{d}{2}, 2)})$ with
\begin{multline*}
h^{(\ell)}_N(\tilde{z}) = \tau^{(\ell)}_N(\tilde{z}) \left( \frac{1}{\sigma_N} \frac{n_\ell}{N} \left( \tilde{z} t^{(\ell) 2}_N(\tilde{z}) + t^{(\ell)}_N(\tilde{z}) \right) - \frac{1}{N} \frac{\mu^{(\ell)}_N}{\sigma_N^2} \frac{m_\mathrm{SC}(\tilde{z})}{\tilde{z} + 2 m_\mathrm{SC}(\tilde{z})} \right) \\
\begin{aligned} \text{where} \quad \tau^{(\ell)}_N(\tilde{z}) &= \frac{-1}{\tilde{z} + 2 m_\mathrm{SC}(\tilde{z})} \left( 1 + \frac{1}{\sigma_N} \frac{n_\ell}{N} \frac{\tilde{z} m_\mathrm{SC}^2(\tilde{z}) + m_\mathrm{SC}(\tilde{z})}{\left( \tilde{z} + 2 m_\mathrm{SC}(\tilde{z}) \right)^2} \right) \\
\text{and} \quad t^{(\ell)}_N(\tilde{z}) &= m_\mathrm{SC}(\tilde{z}) - \frac{1}{\sigma_N} \frac{n_\ell}{N} \frac{\tilde{z} m_\mathrm{SC}^2(\tilde{z}) + m_\mathrm{SC}(\tilde{z})}{\tilde{z} + 2 m_\mathrm{SC}(\tilde{z})}. \end{aligned}
\end{multline*}
\end{lemma}
\begin{proof}
Firstly, with the Poincaré-Nash inequality (Lemma \ref{lem:poincare-nash}), we can show that
\[
\Var(\widetilde{m}^{(\ell) \prime}_N(\tilde{z})) \leqslant \sum_{i = 1}^{n_\ell} \sum_{j = 1}^{\prod_{\ell' \neq \ell} n_{\ell'}} \Esp{\Abs{\frac{1}{n_\ell} \deriv{\Tr \widetilde{\bQ}^{(\ell) 2}}{N^{(\ell)}_{i, j}}}^2} \leqslant \frac{16 \Esp{\Norm{\widetilde{\bQ}^{(\ell)}}^6 \Norm{\bN^{(\ell)}}^2}}{n_\ell \sigma_N^2 N^2} = \bigO[\tilde{z}](N^{-2}).
\]
Taking $\frac{1}{n_\ell} \Tr$ of Equation \eqref{eq:EQ} with $\bP^{(\ell)} = \bzero_{n_\ell \times \prod_{\ell' \neq \ell} n_{\ell'}}$ and differentiating with respect to the complex variable $\tilde{z}$ under $\bbE$ (which is possible because the integrand can be upper bounded on every compact subset of $\bbC \setminus \bbR$), we find
\begin{multline} \label{eq:diffEm}
\Esp{\left( 2 \widetilde{m}^{(\ell)}_N(\tilde{z}) + \tilde{z} \right) \widetilde{m}^{(\ell) \prime}_N(\tilde{z}) + \widetilde{m}^{(\ell)}_N(\tilde{z})} \\
= \mathbb{E} \left[ -\frac{1}{\sigma_N} \frac{n_\ell}{N} \left( \widetilde{m}^{(\ell) 2}_N(\tilde{z}) + 2 \tilde{z} \widetilde{m}^{(\ell)}_N (\tilde{z}) \widetilde{m}^{(\ell) \prime}_N(\tilde{z}) + \widetilde{m}^{(\ell) \prime}_N(\tilde{z}) \right) \right. \\
\left. - \frac{1}{N} \left( \frac{\mu^{(\ell)}_N}{\sigma_N^2} + \frac{\tilde{z}}{\sigma_N} \right) \widetilde{m}^{(\ell) \prime \prime}_N(\tilde{z}) - \frac{2 \widetilde{m}^{(\ell) \prime}(\tilde{z})}{\sigma_N N} \right].
\end{multline}
Then, since $\Var \widetilde{m}^{(\ell)}_N(\tilde{z}) = \bigO[\tilde{z}](N^{-2})$ and $\Var \widetilde{m}^{(\ell) \prime}_N(\tilde{z}) = \bigO[\tilde{z}](N^{-2})$, we have $\esp{\widetilde{m}^{(\ell) 2}_N(\tilde{z})} = \esp{\widetilde{m}^{(\ell)}_N(\tilde{z})}^2 + \bigO[\tilde{z}](N^{-2})$ and $\esp{\widetilde{m}^{(\ell)}_N(\tilde{z}) \widetilde{m}^{(\ell) \prime}_N(\tilde{z})} = \esp{\widetilde{m}^{(\ell)}_N(\tilde{z})} \esp{\widetilde{m}^{(\ell) \prime}_N(\tilde{z})} + \bigO[\tilde{z}](N^{-2})$ (using the Cauchy-Schwarz inequality to upper bound the covariance). And, with the fact that $\esp{\widetilde{m}^{(\ell)}_N(\tilde{z})} = m_\mathrm{SC}(\tilde{z}) + \bigO[\tilde{z}](N^{-\min(\frac{d - 2}{2}, 1)})$ (Equation \eqref{eq:expansion3}), we obtain from Equation \eqref{eq:diffEm} that
\[
\Esp{\widetilde{m}^{(\ell) \prime}_N(\tilde{z})} = \frac{-m_\mathrm{SC}(\tilde{z})}{\tilde{z} + 2 m_\mathrm{SC}(\tilde{z})} + \bigO[\tilde{z}](N^{-\min(\frac{d - 2}{2}, 1)})
\]
since $\abs{\tilde{z} + 2 m_\mathrm{SC}(\tilde{z})}^{-1} \leqslant \abs{\tilde{z}}^{-1} \leqslant \abs{\Im \tilde{z}}^{-1}$. Moreover, using Equation \eqref{eq:expansion3} in Equation \eqref{eq:expansion2}, we find
\begin{equation} \label{eq:expansion2p}
\Esp{\widetilde{m}^{(\ell)}_N(\tilde{z})} = m_\mathrm{SC}(\tilde{z}) + g^{(\ell)}_N(\tilde{z}) \frac{1}{\sigma_N} \frac{n_\ell}{N} \left( \tilde{z} m_\mathrm{SC}^2(\tilde{z}) + m_\mathrm{SC}(\tilde{z}) \right) + \bigO[\tilde{z}](N^{-1}).
\end{equation}
We can now inject the last two relations into Equation \eqref{eq:expansion1}:
\begin{multline} \label{eq:expansion1p}
\Esp{\widetilde{m}^{(\ell)}_N(\tilde{z})} = m_\mathrm{SC}(\tilde{z}) + g^{(\ell)}_N(\tilde{z}) \frac{\tilde{z}}{\sigma_N} \frac{n_\ell}{N} \left( m_\mathrm{SC}(\tilde{z}) + g^{(\ell)}_N(\tilde{z}) \frac{1}{\sigma_N} \frac{n_\ell}{N} \left( \tilde{z} m_\mathrm{SC}^2(\tilde{z}) + m_\mathrm{SC}(\tilde{z}) \right) \right)^2 \\
+ g^{(\ell)}_N(\tilde{z}) \frac{1}{\sigma_N} \frac{n_\ell}{N} \left( m_\mathrm{SC}(\tilde{z}) + g^{(\ell)}_N(\tilde{z}) \frac{1}{\sigma_N} \frac{n_\ell}{N} \left( \tilde{z} m_\mathrm{SC}^2(\tilde{z}) + m_\mathrm{SC}(\tilde{z}) \right) \right) \\
- g^{(\ell)}_N(\tilde{z}) \frac{1}{N} \frac{\mu^{(\ell)}_N}{\sigma_N^2} \frac{m_\mathrm{SC}(\tilde{z})}{\tilde{z} + 2 m_\mathrm{SC}(\tilde{z})} + \bigO[\tilde{z}](N^{-\min(\frac{d}{2}, 2)}).
\end{multline}
We only need to handle the asymptotic behavior of $g^{(\ell)}_N(\tilde{z})$ to conclude the proof. With Equation \eqref{eq:expansion3} and the fact that $\abs{\tilde{z} + 2 m_\mathrm{SC}(\tilde{z})}^{-1} \leqslant \abs{\Im \tilde{z}}^{-1}$, we have $g^{(\ell)}_N(\tilde{z}) = \frac{-1}{\tilde{z} + 2 m_\mathrm{SC}(\tilde{z})} + \bigO[\tilde{z}](N^{-\min(\frac{d - 2}{2}, 1)})$. We can then use this relation in Equation \eqref{eq:expansion2p}:
\[
\Esp{\widetilde{m}^{(\ell)}_N(\tilde{z})} - m_\mathrm{SC}(\tilde{z}) = -\frac{1}{\sigma_N} \frac{n_\ell}{N} \frac{\tilde{z} m_\mathrm{SC}^2(\tilde{z}) + m_\mathrm{SC}(\tilde{z})}{\tilde{z} + 2 m_\mathrm{SC}(\tilde{z})} + \bigO[\tilde{z}](N^{-1}).
\]
Therefore, we have
\begin{align*}
g^{(\ell)}_N(\tilde{z}) &= \frac{-1}{\tilde{z} + m_\mathrm{SC}(\tilde{z}) + \Esp{\widetilde{m}^{(\ell)}_N(\tilde{z})}} \\
&= \frac{-1}{\tilde{z} + 2 m_\mathrm{SC}(\tilde{z})} \left( 1 + \frac{\Esp{\widetilde{m}^{(\ell)}_N(\tilde{z})} - m_\mathrm{SC}(\tilde{z})}{\tilde{z} + 2 m_\mathrm{SC}(\tilde{z})} \right)^{-1} \\
&= \frac{-1}{\tilde{z} + 2 m_\mathrm{SC}(\tilde{z})} \left( 1 - \frac{\Esp{\widetilde{m}^{(\ell)}_N(\tilde{z})} - m_\mathrm{SC}(\tilde{z})}{\tilde{z} + 2 m_\mathrm{SC}(\tilde{z})} + \bigO[\tilde{z}](N^{-\min(d - 2, 2)}) \right) \\
&= \frac{-1}{\tilde{z} + 2 m_\mathrm{SC}(\tilde{z})} \left( 1 + \frac{1}{\sigma_N} \frac{n_\ell}{N} \frac{\tilde{z} m_\mathrm{SC}^2(\tilde{z}) + m_\mathrm{SC}(\tilde{z})}{\left( \tilde{z} + 2 m_\mathrm{SC}(\tilde{z}) \right)^2} \right) + \bigO[\tilde{z}](N^{-1}).
\end{align*}
Eventually, Equation \eqref{eq:expansion1p} becomes
\begin{multline*}
\Esp{\widetilde{m}^{(\ell)}_N(\tilde{z})} = m_\mathrm{SC}(\tilde{z}) + \tau^{(\ell)}_N(\tilde{z}) \frac{\tilde{z}}{\sigma_N} \frac{n_\ell}{N} \left( m_\mathrm{SC}(\tilde{z}) - \frac{1}{\sigma_N} \frac{n_\ell}{N} \frac{\tilde{z} m_\mathrm{SC}^2(\tilde{z}) + m_\mathrm{SC}(\tilde{z})}{\tilde{z} + 2 m_\mathrm{SC}(\tilde{z})} \right)^2 \\
+ \tau^{(\ell)}_N(\tilde{z}) \frac{1}{\sigma_N} \frac{n_\ell}{N} \left( m_\mathrm{SC}(\tilde{z}) - \frac{1}{\sigma_N} \frac{n_\ell}{N} \frac{\tilde{z} m_\mathrm{SC}^2(\tilde{z}) + m_\mathrm{SC}(\tilde{z})}{\tilde{z} + 2 m_\mathrm{SC}(\tilde{z})} \right) \\
- \tau^{(\ell)}_N(\tilde{z}) \frac{1}{N} \frac{\mu^{(\ell)}_N}{\sigma_N^2} \frac{m_\mathrm{SC}(\tilde{z})}{\tilde{z} + 2 m_\mathrm{SC}(\tilde{z})} + \bigO[\tilde{z}](N^{-\min(\frac{d}{2}, 2)}).
\end{multline*}
with $\tau^{(\ell)}_N : \tilde{z} \mapsto \frac{-1}{\tilde{z} + 2 m_\mathrm{SC}(\tilde{z})} \left( 1 + \frac{1}{\sigma_N} \frac{n_\ell}{N} \frac{\tilde{z} m_\mathrm{SC}^2(\tilde{z}) + m_\mathrm{SC}(\tilde{z})}{\left( \tilde{z} + 2 m_\mathrm{SC}(\tilde{z}) \right)^2} \right)$.
\end{proof}

\subsection{Confinement of the Spectrum}
\label{proof:thm:ed:confinement}

We have just found, in Lemma \ref{lem:devEm}, the following expansion: $\esp{\widetilde{m}^{(\ell)}_N(\tilde{z})} = m_\mathrm{SC}(\tilde{z}) + h^{(\ell)}_N(\tilde{z}) + \bigO[\tilde{z}](N^{-\min(\frac{d}{2}, 2)})$. Note that this shows that the limiting spectral distribution of \linebreak $\frac{1}{\sigma_N} \left[ \frac{1}{N} \bN^{(\ell)} \bN^{(\ell) \top} - \mu^{(\ell)}_N \bI_{n_\ell} \right]$ (and therefore that of $\frac{1}{\sigma_N} \left[ \bT^{(\ell)} \bT^{(\ell) \top} - \mu^{(\ell)}_N \bI_{n_\ell} \right]$) is the semicircle distribution on $[-2, 2]$. We will now use this expansion to prove that, almost surely, no eigenvalue of $\frac{1}{\sigma_N} \left[ \frac{1}{N} \bN^{(\ell)} \bN^{(\ell) \top} - \mu^{(\ell)}_N \bI_{n_\ell} \right]$ stays outside $[-2, 2]$ as $N \to +\infty$. That is, for all $\varepsilon > 0$, there exists an integer $N_0$ such that, for all $\lambda \in \Sp \left( \frac{1}{\sigma_N} \left[ \frac{1}{N} \bN^{(\ell)} \bN^{(\ell) \top} - \mu^{(\ell)}_N \bI_{n_\ell} \right] \right)$, $\Dist(\lambda, \Supp \mu_\mathrm{MP}) \leqslant \varepsilon$ as soon as $N \geqslant N_0$.

Let $\varepsilon > 0$, $\varphi : \bbR \mapsto [0, 1]$ be an infinitely differentiable function which equals $1$ on $[-2, 2]$ and $0$ on $\bbR \setminus [-2 - \varepsilon, 2 + \varepsilon]$ and $\psi = 1 - \varphi$. We want to show that
\[
\Tr \left( \psi \left( \frac{1}{\sigma_N} \left[ \frac{1}{N} \bN^{(\ell)} \bN^{(\ell) \top} - \mu^{(\ell)}_N \bI_{n_\ell} \right] \right) \right) \to 0 \quad \text{almost surely as}~ N \to +\infty.
\]

First of all, we show the convergence in mean with the Helffer-Sjöstrand formula (Proposition \ref{prop:helffer-sjostrand}).
\begin{multline*}
\Esp{\frac{1}{n_\ell} \Tr \left( \varphi \left( \frac{1}{\sigma_N} \left[ \frac{1}{N} \bN^{(\ell)} \bN^{(\ell) \top} - \mu^{(\ell)}_N \bI_{n_\ell} \right] \right) \right)} \\
\begin{aligned}
&= \frac{2}{\pi} \Re \int_{\bbC^+} \deriv{\Phi_q[\varphi]}{\bar{\tilde{z}}}(\tilde{z}) \Esp{\widetilde{m}^{(\ell)}_N(\tilde{z})} ~\rmd \tilde{z} \\
&= \begin{multlined}[t] \frac{2}{\pi} \Re \int_{\bbC^+} \deriv{\Phi_q[\varphi]}{\bar{\tilde{z}}}(\tilde{z}) m_\mathrm{SC}(\tilde{z}) ~\rmd \tilde{z} + \frac{2}{\pi} \Re \int_{\bbC^+} \deriv{\Phi_q[\varphi]}{\bar{\tilde{z}}}(\tilde{z}) h^{(\ell)}_N(\tilde{z}) ~\rmd \tilde{z} \\
+ \frac{2}{\pi} \Re \int_{\bbC^+} \deriv{\Phi_q[\varphi]}{\bar{\tilde{z}}}(\tilde{z}) \times \bigO[\tilde{z}](N^{-\min(\frac{d}{2}, 2)}) ~\rmd \tilde{z}. \end{multlined}
\end{aligned}
\end{multline*}
where we have used the expression of $\esp{\widetilde{m}^{(\ell)}_N(\tilde{z})}$ given by Lemma \ref{lem:devEm}. The first integral is $\int_\bbR \varphi ~\rmd \mu_\mathrm{SC} = 1$ while the last one is $\bigO(N^{-\min(\frac{d}{2}, 2)})$ with $q$ chosen sufficiently large so that $\deriv{\Phi_q[\varphi]}{\bar{\tilde{z}}}(\tilde{z})$ cancels the divergence of $\bigO[\tilde{z}](N^{-\min(\frac{d}{2}, 2)})$ near the real axis. In order to evaluate the second integral, we perform an integration by parts:
\begin{multline*}
\frac{2}{\pi} \Re \int_{\bbC^+} \deriv{\Phi_q[\varphi]}{\bar{\tilde{z}}}(\tilde{z}) h^{(\ell)}_N(\tilde{z}) ~\rmd \tilde{z} \\
\begin{aligned}
&= \begin{multlined}[t] \frac{2}{\pi} \Re \left[ \frac{1}{2} \int_0^{+\infty} \left( \int_{-\infty}^{+\infty} \deriv{\Phi_q[\varphi]}{x}(x + \rmi y) h^{(\ell)}_N(x + \rmi y) ~\rmd x \right) \rmd y \right. \\ \left. + \frac{\rmi}{2} \int_{-\infty}^{+\infty} \left( \int_0^{+\infty} \deriv{\Phi_q[\varphi]}{y}(x + \rmi y) h^{(\ell)}_N(x + \rmi y) ~\rmd y \right) \rmd x \right] \end{multlined} \\
&= \frac{2}{\pi} \Re \left[ \frac{-\rmi}{2} \int_\bbR \lim_{y \downarrow 0} \left\{ \Phi_q[\varphi](x + \rmi y) h^{(\ell)}_N(x + \rmi y) \right\} ~\rmd x - \int_{\bbC^+} \Phi_q[\varphi](\tilde{z}) \deriv{h}{\bar{\tilde{z}}}(\tilde{z}) ~\rmd \tilde{z} \right].
\end{aligned}
\end{multline*}
Since $h^{(\ell)}_N$ is an analytic function (as sums and products of Stieltjes transforms), we have $\partial h / \partial \bar{\tilde{z}} = 0$ by the Cauchy-Riemann equations. Moreover, we have $\lim_{y \downarrow 0} \Phi_q[\varphi](x + \rmi y) = \varphi(x)$ and, from the definition of $h^{(\ell)}_N$, $\lim_{y \downarrow 0} \Im[h^{(\ell)}_N(x + \rmi y)] = 0$ for all $x \in \bbR \setminus [-2, 2]$. Therefore,
\[
\frac{2}{\pi} \Re \int_{\bbC^+} \deriv{\Phi_q[\varphi]}{\bar{\tilde{z}}}(\tilde{z}) h^{(\ell)}_N(\tilde{z}) ~\rmd \tilde{z} = \lim_{y \downarrow 0} \frac{1}{\pi} \int_{-2}^2 \Im[h^{(\ell)}_N(x + \rmi y)] ~\rmd x.
\]
We use Lemma \ref{lem:stieltjes_distrib} to show that this integral equals $\lim_{y \to +\infty} -\rmi y h^{(\ell)}_N(\rmi y) = 0$. The function $h^{(\ell)}_N$ is analytic on $\bbC \setminus [-2, 2]$, $\lim_{\abs{\tilde{z}} \to +\infty} h(\tilde{z}) = 0$ and $h^{(\ell)}_N(\bar{\tilde{z}}) = \overline{h(\tilde{z})}$ for all $\tilde{z} \in \bbC \setminus [-2, 2]$. Thus, we just need to show that there exist an integer $n_0$ and a constant $C > 0$ such that $\abs{h^{(\ell)}_N(\tilde{z})} \leqslant C \max(\Dist(\tilde{z}, [-2, 2])^{-n_0}, 1)$ for all $\tilde{z} \in \bbC \setminus [-2, 2]$. Firstly, we find an upper bound for $t^{(\ell)}_N(\tilde{z})$. Since $\tilde{z} \mapsto \frac{-1}{\tilde{z} + 2 m_\mathrm{SC}(\tilde{z})}$ is the Stieltjes transform of a probability measure on $[-2, 2]$ and $-(\tilde{z} m_\mathrm{SC}(\tilde{z}) + 1) = m_\mathrm{SC}^2(\tilde{z})$, we have
\begin{align*}
\Abs{t^{(\ell)}_N(\tilde{z})} &= \Abs{m_\mathrm{SC}(\tilde{z})} \Abs{1 - \frac{1}{\sigma_N} \frac{n_\ell}{N} \frac{\tilde{z} m_\mathrm{SC}(\tilde{z}) + 1}{\tilde{z} + 2 m_\mathrm{SC}(\tilde{z})}} \\
&\leqslant \frac{1}{\Dist(\tilde{z}, [-2, 2])} \left( 1 + \frac{1}{\sigma_N} \frac{n_\ell}{N} \frac{1}{\Dist(\tilde{z}, [-2, 2])^3} \right) \\
&\leqslant \left( 1 + \frac{1}{\sigma_N} \frac{n_\ell}{N} \right) \max(\Dist(\tilde{z}, [-2, 2])^{-4}, 1).
\end{align*}
Moreover, since $\abs{\tilde{z}} \leqslant \Dist(\tilde{z}, [-2, 2]) + 2$, we also have
\begin{align*}
\Abs{\tilde{z} t^{(\ell)}_N(\tilde{z})} &\leqslant \left( 1 + \frac{2}{\Dist(\tilde{z}, [-2, 2])} \right) \left( 1 + \frac{1}{\sigma_N} \frac{n_\ell}{N} \frac{1}{\Dist(\tilde{z}, [-2, 2])^3} \right) \\
&\leqslant 3 \left( 1 + \frac{1}{\sigma_N} \frac{n_\ell}{N} \right) \max(\Dist(\tilde{z}, [-2, 2])^{-4}, 1).
\end{align*}
Similarly,
\begin{align*}
\Abs{\tau^{(\ell)}_N(\tilde{z})} &= \Abs{\frac{-1}{\tilde{z} + 2 m_\mathrm{SC}(\tilde{z})}} \Abs{1 + \frac{1}{\sigma_N} \frac{n_\ell}{N} \frac{m_\mathrm{SC}(\tilde{z}) \left( \tilde{z} m_\mathrm{SC}(\tilde{z}) + 1 \right)}{\left( \tilde{z} + 2 m_\mathrm{SC}(\tilde{z}) \right)^2}} \\
&\leqslant \left( 1 + \frac{1}{\sigma_N} \frac{n_\ell}{N} \right) \max(\Dist(\tilde{z}, [-2, 2])^{-6}, 1).
\end{align*}
Hence, we can upper bound $\abs{h^{(\ell)}_N(\tilde{z})}$:
\begin{align*}
\Abs{h^{(\ell)}_N(\tilde{z})} &\leqslant \Abs{\tau^{(\ell)}_N(\tilde{z})} \left( \frac{1}{\sigma_N} \frac{n_\ell}{N} \Abs{t^{(\ell)}_N(\tilde{z})} \left( \Abs{\tilde{z} t^{(\ell)}_N(\tilde{z})} + 1 \right) + \frac{1}{N} \frac{\mu^{(\ell)}_N}{\sigma_N^2} \Abs{\frac{m_\mathrm{SC}(\tilde{z})}{\tilde{z} + 2 m_\mathrm{SC}(\tilde{z})}} \right) \\
&\leqslant \begin{multlined}[t] \left( 1 + \frac{1}{\sigma_N} \frac{n_\ell}{N} \right) \left[  \frac{1}{\sigma_N} \frac{n_\ell}{N} \left( 1 + \frac{1}{\sigma_N} \frac{n_\ell}{N} \right) \left[ 3 \left( 1 + \frac{1}{\sigma_N} \frac{n_\ell}{N} \right) + 1 \right] + \frac{1}{N} \frac{\mu^{(\ell)}_N}{\sigma_N^2} \right] \\ \times \max(\Dist(\tilde{z}, [-2, 2])^{-14}, 1). \end{multlined}
\end{align*}
Therefore we can conclude that $\frac{2}{\pi} \Re \int_{\bbC^+} \deriv{\Phi_q[\varphi]}{\bar{\tilde{z}}}(\tilde{z}) h^{(\ell)}_N(\tilde{z}) ~\rmd \tilde{z} = 0$ (Lemma \ref{lem:stieltjes_distrib}) and
\begin{multline*}
\Esp{\frac{1}{n_\ell} \Tr \left( \varphi \left( \frac{1}{\sigma_N} \left[ \frac{1}{N} \bN^{(\ell)} \bN^{(\ell) \top} - \mu^{(\ell)}_N \bI_{n_\ell} \right] \right) \right)} = 1 + \bigO(N^{-\min(\frac{d}{2}, 2)}), \\
\text{i.e.,} \quad \Esp{\Tr \left( \psi \left( \frac{1}{\sigma_N} \left[ \frac{1}{N} \bN^{(\ell)} \bN^{(\ell) \top} - \mu^{(\ell)}_N \bI_{n_\ell} \right] \right) \right)} = \bigO(N^{-\min(\frac{d - 2}{2}, 1)}).
\end{multline*}

Secondly, we prove the almost sure convergence of $\Tr \left( \psi \left( \frac{1}{\sigma_N} \left[ \frac{1}{N} \bN^{(\ell)} \bN^{(\ell) \top} - \mu^{(\ell)}_N \bI_{n_\ell} \right] \right) \right)$ to $0$ by showing that its variance is $\bigO(N^{-\min(\frac{d}{2}, 2)})$ (and Lemma \ref{lem:as_convergence} implies the result). With the Poincaré-Nash inequality (Lemma \ref{lem:poincare-nash}), we have
\begin{multline*}
\Var \left( \Tr \left( \psi \left( \frac{1}{\sigma_N} \left[ \frac{1}{N} \bN^{(\ell)} \bN^{(\ell) \top} - \mu^{(\ell)}_N \bI_{n_\ell} \right] \right) \right) \right) \\
\begin{aligned}
&= \Var \left( \Tr \left( \varphi \left( \frac{1}{\sigma_N} \left[ \frac{1}{N} \bN^{(\ell)} \bN^{(\ell) \top} - \mu^{(\ell)}_N \bI_{n_\ell} \right] \right) \right) \right) \\
&\leqslant \sum_{i = 1}^{n_\ell} \sum_{j = 1}^{\prod_{\ell' \neq \ell} n_{\ell'}} \Esp{\Abs{\deriv{\Tr \left( \varphi \left( \frac{1}{\sigma_N} \left[ \frac{1}{N} \bN^{(\ell)} \bN^{(\ell) \top} - \mu^{(\ell)}_N \bI_{n_\ell} \right] \right) \right)}{N^{(\ell)}_{i, j}}}^2} \\
&= \sum_{i = 1}^{n_\ell} \sum_{j = 1}^{\prod_{\ell' \neq \ell} n_{\ell'}} \Esp{\Abs{\Tr \left( \varphi' \left( \frac{1}{\sigma_N} \left[ \frac{1}{N} \bN^{(\ell)} \bN^{(\ell) \top} - \mu^{(\ell)}_N \bI_{n_\ell} \right] \right) \deriv{}{N^{(\ell)}_{i, j}} \left[ \frac{1}{\sigma_N} \frac{1}{N} \bN^{(\ell)} \bN^{(\ell) \top} \right] \right)}^2} \\
&= \frac{1}{\sigma_N^2} \frac{1}{N^2} \sum_{i = 1}^{n_\ell} \sum_{j = 1}^{\prod_{\ell' \neq \ell} n_{\ell'}} \Esp{\Abs{\left[ 2 \varphi' \left( \frac{1}{\sigma_N} \left[ \frac{1}{N} \bN^{(\ell)} \bN^{(\ell) \top} - \mu^{(\ell)}_N \bI_{n_\ell} \right] \right) \bN^{(\ell)} \right]_{i, j}}^2} \\
&= \begin{multlined}[t] \frac{1}{\sigma_N} \frac{4}{N} \Esp{\Tr u \left( \frac{1}{\sigma_N} \left[ \frac{1}{N} \bN^{(\ell)} \bN^{(\ell) \top} - \mu^{(\ell)}_N \bI_{n_\ell} \right] \right)} \\ + \frac{4}{N} \frac{\mu^{(\ell)}_N}{\sigma_N^2} \Esp{\Tr \varphi^{\prime 2} \left( \frac{1}{\sigma_N} \left[ \frac{1}{N} \bN^{(\ell)} \bN^{(\ell) \top} - \mu^{(\ell)}_N \bI_{n_\ell} \right] \right)} \end{multlined}
\end{aligned}
\end{multline*}
where $u : x \mapsto x \varphi^{\prime 2}(x)$ and $\varphi^{\prime 2}$ are infinitely differentiable functions with compact support which equal $0$ on $[-2, 2]$. Hence, applying similarly the Helffer-Sjöstrand formula (Proposition \ref{prop:helffer-sjostrand}), we find
\begin{align*}
\frac{1}{\sigma_N} \frac{4}{N} \Esp{\Tr u \left( \frac{1}{\sigma_N} \left[ \frac{1}{N} \bN^{(\ell)} \bN^{(\ell) \top} - \mu^{(\ell)}_N \bI_{n_\ell} \right] \right)} &= \frac{4}{\sigma_N} \frac{n_\ell}{N} \frac{2}{\pi} \Re \int_{\bbC^+} \deriv{\Phi_q[u]}{\bar{\tilde{z}}}(\tilde{z}) \Esp{\widetilde{m}^{(\ell)}_N(\tilde{z})} ~\rmd \tilde{z} \\
&= \bigO(N^{-\min(d - 1, \frac{d + 2}{2})})
\end{align*}
and
\begin{multline*}
\frac{4}{N} \frac{\mu^{(\ell)}_N}{\sigma_N^2} \Esp{\Tr \varphi^{\prime 2} \left( \frac{1}{\sigma_N} \left[ \frac{1}{N} \bN^{(\ell)} \bN^{(\ell) \top} - \mu^{(\ell)}_N \bI_{n_\ell} \right] \right)} \\
= 4 \frac{n_\ell}{N} \frac{\mu^{(\ell)}_N}{\sigma_N^2} \frac{2}{\pi} \Re \int_{\bbC^+} \deriv{\Phi_q[\varphi']}{\bar{\tilde{z}}}(\tilde{z}) \Esp{\widetilde{m}^{(\ell)}_N(\tilde{z})} ~\rmd \tilde{z} = \bigO(N^{-\min(\frac{d}{2}, 2)})
\end{multline*}
for $q$ chosen sufficiently large. Thus,
\[
\Var \left( \Tr \left( \psi \left( \frac{1}{\sigma_N} \left[ \frac{1}{N} \bN^{(\ell)} \bN^{(\ell) \top} - \mu^{(\ell)}_N \bI_{n_\ell} \right] \right) \right) \right) = \bigO(N^{-\min(\frac{d}{2}, 2)})
\]
and we can conclude on the almost sure convergence with Lemma \ref{lem:as_convergence}.

\subsection{Deterministic Equivalent}

With the rescaling $(z, \bQ^{(\ell)}(z)) \curvearrowright (\tilde{z}, \widetilde{\bQ}^{(\ell)}(\tilde{z}))$, Equation \eqref{eq:PTQ} becomes
\begin{multline*}
\frac{1}{\sigma_N} \Esp{\bP^{(\ell)} \bT^{(\ell) \top} \widetilde{\bQ}^{(\ell)}} = \frac{1}{\sigma_N} \Esp{\bP^{(\ell)} \bP^{(\ell) \top} \widetilde{\bQ}^{(\ell)}} \\
- \frac{1}{\sigma_N^2} \Esp{\frac{n_\ell}{N} \widetilde{m}^{(\ell)}_N(\tilde{z}) \bP^{(\ell)} \bT^{(\ell) \top} \widetilde{\bQ}^{(\ell)} + \frac{1}{N} \bP^{(\ell)} \bT^{(\ell) \top} \widetilde{\bQ}^{(\ell) 2}}
\end{multline*}
where $\Norm{\frac{1}{\sigma_N^2} \Esp{\frac{n_\ell}{N} \widetilde{m}^{(\ell)}_N(\tilde{z}) \bP^{(\ell)} \bT^{(\ell) \top} \widetilde{\bQ}^{(\ell)} + \frac{1}{N} \bP^{(\ell)} \bT^{(\ell) \top} \widetilde{\bQ}^{(\ell) 2}}} \to 0$ as $N \to +\infty$ since $\norm{\bP^{(\ell)}} = \bigO(N^{\frac{d - 2}{4}})$ and $\norm{\bT^{(\ell)}} = \bigO(N^{\frac{d - 2}{2}})$. Hence, with Equation \eqref{eq:EQ} and Equation \eqref{eq:expansion3}, we have
\[
\Norm{m_\mathrm{SC}(\tilde{z}) \Esp{\widetilde{\bQ}^{(\ell)}} + \tilde{z} \Esp{\widetilde{\bQ}^{(\ell)}} + \bI_{n_\ell} - \frac{1}{\sigma_N} \bP^{(\ell)} \bP^{(\ell) \top} \Esp{\widetilde{\bQ}^{(\ell)}}} \xrightarrow[N \to +\infty]{} 0
\]
and, since $m_\mathrm{SC}(\tilde{z}) + \tilde{z} = \frac{-1}{m_\mathrm{SC}(\tilde{z})}$, we can define the following deterministic equivalent (Definition \ref{def:matrix_equivalent}):
\[
\widetilde{\bQ}^{(\ell)}(\tilde{z}) \longleftrightarrow \bar{\bQ}^{(\ell)}(\tilde{z}) \eqdef \left( \frac{1}{\sigma_N} \bP^{(\ell)} \bP^{(\ell) \top} + \frac{1}{m_\mathrm{SC}(\tilde{z})} \bI_{n_\ell} \right)^{-1}.
\]

\section{Proof of Theorem \ref{thm:spike}}
\label{proof:thm:spike}

Recall that $\widetilde{\bQ}^{(\ell)}$ is the resolvent of $\frac{1}{\sigma_N} \left[ \bT^{(\ell)} \bT^{(\ell) \top} - \mu^{(\ell)}_N \bI_{n_\ell} \right]$ while $\widetilde{\bQ}^{(\ell)}_0$ denotes the resolvent of the same model without signal, $\frac{1}{\sigma_N} \left[ \frac{1}{N} \bN^{(\ell)} \bN^{(\ell) \top} - \mu^{(\ell)}_N \bI_{n_\ell} \right]$.

\subsection{Convergence of Bilinear Forms}
\label{app:proof_spike_low_mlrank:bilinear_forms}

First of all, we must show the \emph{almost sure} convergence $\ba^\top \widetilde{\bQ}^{(\ell)}_0 \bb - m_\mathrm{SC}(\tilde{z}) \scal{\ba}{\bb} \to 0$ for all bounded (sequences of) vectors $\ba, \bb \in \bbR^{n_\ell}$. Given the concentration result proven in Section \ref{proof:thm:ed:concentration}, we just need to show that $\ba^\top \esp{\widetilde{\bQ}^{(\ell)}_0} \bb - m_\mathrm{SC}(\tilde{z}) \scal{\ba}{\bb} \to 0$ as $N \to +\infty$.

Let us multiply Equation \eqref{eq:EQ} when $\bP^{(\ell)} = \bzero_{n_\ell \times \prod_{\ell' \neq \ell} n_{\ell'}}$ by $\ba^\top$ on the left and $\bb$ on the right.
\[
\Esp{\widetilde{m}^{(\ell)}_N(\tilde{z}) \ba^\top \widetilde{\bQ}^{(\ell)}_0 \bb} + \tilde{z} \Esp{\ba^\top \widetilde{\bQ}^{(\ell)}_0 \bb} + \scal{\ba}{\bb} = \bigO[z](N^{-\min(\frac{d - 2}{2}, 1)}).
\]
Then, using the fact that $\esp{\widetilde{m}^{(\ell)}_N(\tilde{z})} = m_\mathrm{SC}(\tilde{z}) + \bigO[\tilde{z}](N^{-\min(\frac{d - 2}{2}, 1)})$ and $m_\mathrm{SC}(\tilde{z}) + \tilde{z} = \frac{-1}{m_\mathrm{SC}(\tilde{z})}$, we obtain the desired result: $\esp{\ba^\top \widetilde{\bQ}^{(\ell)}_0 \bb} = m_\mathrm{SC}(\tilde{z}) \scal{\ba}{\bb} + \bigO[\tilde{z}](N^{-\min(\frac{d - 2}{2}, 1)})$.

\subsection{Isolated Eigenvalues}

We seek eigenvalues of $\frac{1}{\sigma_N} \left[ \bT^{(\ell)} \bT^{(\ell) \top} - \mu^{(\ell)}_N \bI_{n_\ell} \right]$ which stay outside the support of the semicircle distribution $[-2, 2]$. That is, we seek $\tilde{\xi} \in \bbR \setminus [-2, 2]$ such that
\[
\det \left( \frac{1}{\sigma_N} \left[ \bT^{(\ell)} \bT^{(\ell) \top} - \mu^{(\ell)}_N \bI_{n_\ell} \right] - \tilde{\xi} \bI_{n_\ell} \right) = 0.
\]
Using the expansion $\bT^{(\ell)} = \bP^{(\ell)} + \frac{1}{\sqrt{N}} \bN^{(\ell)}$, this is equivalent to
\begin{multline*}
\det \left( \frac{1}{\sigma_N} \left( \bP^{(\ell)} \bP^{(\ell) \top} + \frac{1}{\sqrt{N}} \bP^{(\ell)} \bN^{(\ell) \top} + \frac{1}{\sqrt{N}} \bN^{(\ell)} \bP^{(\ell) \top} \right) \widetilde{\bQ}^{(\ell)}_0(\tilde{\xi}) + \bI_{n_\ell} \right) \\
\times \det \left( \frac{1}{\sigma_N} \left[ \frac{1}{N} \bN^{(\ell)} \bN^{(\ell) \top} - \mu^{(\ell)}_N \bI_{n_\ell} \right] - \tilde{\xi} \bI_{n_\ell} \right) = 0
\end{multline*}
where the second determinant is non-zero for $N$ large enough from the confinement of the spectrum proven in Section \ref{proof:thm:ed:confinement}. Then, we know that $\bscrP = \tucker{\bscrH}{\bX^{(1)}, \ldots, \bX^{(d)}}$ therefore $\bP^{(\ell)} = \bX^{(\ell)} \bL^{(\ell)}$ with $\bL^{(\ell)} = \bH^{(\ell)} \left( \bigotimes_{\ell' \neq \ell} \bX^{(\ell') \top} \right)$ and we can write
\[
\frac{1}{\sigma_N} \left( \bP^{(\ell)} \bP^{(\ell) \top} + \frac{1}{\sqrt{N}} \bP^{(\ell)} \bN^{(\ell) \top} + \frac{1}{\sqrt{N}} \bN^{(\ell)} \bP^{(\ell) \top} \right)
\]
as the matrix product $\begin{bsmallmatrix} \bX^{(\ell)} & \bX^{(\ell)} & \frac{1}{\sigma_N \sqrt{N}} \bN^{(\ell)} \bL^{(\ell) \top} \end{bsmallmatrix} \begin{bsmallmatrix} \frac{1}{\sigma_N} \bH^{(\ell)} \bH^{(\ell) \top} \bX^{(\ell) \top} \\ \frac{1}{\sigma_N \sqrt{N}} \bL^{(\ell)} \bN^{(\ell) \top} \\ \bX^{(\ell) \top} \end{bsmallmatrix}$. Thus, with Sylvester's identity ($\det(\bI_n + \bA \bB) = \det(\bI_k + \bB \bA)$), we are left to evaluate a $3 r_\ell \times 3 r_\ell$ determinant:
\[
\det(\bI_{3 r_\ell} + \bM) = 0
\]
where $\bM$ is the following matrix
\[
\begin{bsmallmatrix}
\frac{1}{\sigma_N} \bH^{(\ell)} \bH^{(\ell) \top} \bX^{(\ell) \top} \widetilde{\bQ}^{(\ell)}_0(\tilde{\xi}) \bX^{(\ell)} & \frac{1}{\sigma_N} \bH^{(\ell)} \bH^{(\ell) \top} \bX^{(\ell) \top} \widetilde{\bQ}^{(\ell)}_0(\tilde{\xi}) \bX^{(\ell)} & \frac{1}{\sigma_N^2 \sqrt{N}} \bH^{(\ell)} \bH^{(\ell) \top} \bX^{(\ell) \top} \widetilde{\bQ}^{(\ell)}_0(\tilde{\xi}) \bN^{(\ell)} \bL^{(\ell) \top} \\
\frac{1}{\sigma_N \sqrt{N}} \bL^{(\ell)} \bN^{(\ell) \top} \widetilde{\bQ}^{(\ell)}_0(\tilde{\xi}) \bX^{(\ell)} & \frac{1}{\sigma_N \sqrt{N}} \bL^{(\ell)} \bN^{(\ell) \top} \widetilde{\bQ}^{(\ell)}_0(\tilde{\xi}) \bX^{(\ell)} & \frac{1}{\sigma_N^2 N} \bL^{(\ell)} \bN^{(\ell) \top} \widetilde{\bQ}^{(\ell)}_0(\tilde{\xi}) \bN^{(\ell)} \bL^{(\ell) \top} \\
\bX^{(\ell) \top} \widetilde{\bQ}^{(\ell)}_0(\tilde{\xi}) \bX^{(\ell)} & \bX^{(\ell) \top} \widetilde{\bQ}^{(\ell)}_0(\tilde{\xi}) \bX^{(\ell)} & \frac{1}{\sigma_N \sqrt{N}} \bX^{(\ell) \top} \widetilde{\bQ}^{(\ell)}_0(\tilde{\xi}) \bN^{(\ell)} \bL^{(\ell) \top}
\end{bsmallmatrix}.
\]
From the convergence of bilinear forms and the orthonormality of the columns of $\bX^{(\ell)}$, we have $\bX^{(\ell) \top} \widetilde{\bQ}^{(\ell)}_0(\tilde{\xi}) \bX^{(\ell)} \to m_\mathrm{SC}(\tilde{\xi}) \bI_{r_\ell}$ almost surely. Moreover, we can see that the matrix $\frac{1}{\sigma_N \sqrt{N}} \bL^{(\ell)} \bN^{(\ell) \top} \widetilde{\bQ}^{(\ell)}_0(\tilde{\xi}) \bX^{(\ell)}$ vanishes almost surely as $N \to +\infty$ since $\norm{\bL^{(\ell)}} = \bigO(N^{\frac{d - 2}{4}})$ and $\norm{\bN^{(\ell)} \ba} = \bigO(\sqrt{N})$ almost surely\footnote{This fact is not so easy to see (note that it does not depend on $d$!). If we naively upper bound $\norm{\bN^{(\ell)} \ba}^2$ by $\norm{\bN^{(\ell)}}^2 \norm{\ba}^2 = \bigO(N^{\frac{d - 1}{2}})$, we do not find the desired upper bound. Instead, we can remark that $\norm{\bN^{(\ell)} \ba}^2 = \ba^\top \bN^{(\ell) \top} \bN^{(\ell)} \ba = \ba^\top \bV \bD \bV^\top \ba$ where $\bD = \Diag(\lambda_1(\bN^{(\ell) \top} \bN^{(\ell)}), \ldots, \lambda_{n_\ell}(\bN^{(\ell) \top} \bN^{(\ell)}))$ (we assume $N$ large enough so that $\prod_{\ell' \neq \ell} n_{\ell'} > n_\ell$ thus $\lambda_i(\bN^{(\ell) \top} \bN^{(\ell)}) = 0$ for $i > n_\ell$) and $\bV$ follows a uniform distribution on the Stiefel manifold $V_{n_\ell}(\bbR^{\prod_{\ell' \neq \ell} n_{\ell'}})$ \citep[Theorem 2.2.1]{chikuse_statistics_2003}. Therefore $\frac{1}{N} \norm{\bN^{(\ell)} \ba}^2 \leqslant (\mu^{(\ell)}_N + 2 \sigma_N) \ba^\top \bV \bV^\top \ba$. Without loss of generality, we can assume that $a_i = \norm{\ba} \delta_{1, i}$ (replace $\ba$ and $\bV$ by $\bO \ba$ and $\bO \bV$ for a well-chosen orthogonal matrix $\bO$). From \citet{mardia_uniform_1977}, we know that $[\bV \bV^\top]_{1, 1}$ follows a beta distribution with parameters $\frac{n_\ell}{2}, \frac{\prod_{\ell' \neq \ell} n_{\ell'} - n_\ell}{2}$ so its moments are given by $\esp{[\bV \bV^\top]_{1, 1}^k} = \prod_{r = 0}^{k - 1} \frac{n_\ell + 2 r}{\prod_{\ell' \neq \ell} n_{\ell'} + 2 r}$ for all $k \geqslant 1$. This is enough to see that $(\mu^{(\ell)}_N + 2 \sigma_N) \esp{[\bV \bV^\top]_{1, 1}} = (\mu^{(\ell)}_N + 2 \sigma_N) \frac{n_\ell}{\prod_{\ell' \neq \ell} n_{\ell'}} = \bigO(1)$ and $(\mu^{(\ell)}_N + 2 \sigma_N)^4 \esp{([\bV \bV^\top]_{1, 1} - \esp{[\bV \bV^\top]_{1, 1}})^4} = \bigO(N^{-2})$, whence the almost sure statement $\frac{1}{N} \norm{\bN^{(\ell)} \ba}^2 = \bigO(1)$. \label{ftn:norm_Na}} for all bounded (sequences of) vectors $\ba \in \bbR^{\prod_{\ell' \neq \ell} n_{\ell'}}$. The only term which remains to evaluate is the block $(2, 3)$:
\begin{multline*}
\frac{1}{\sigma_N^2 N} \Norm{\bL^{(\ell)} \bN^{(\ell) \top} \widetilde{\bQ}^{(\ell)}_0(\tilde{\xi}) \bN^{(\ell)} \bL^{(\ell) \top}} \\
\leqslant \frac{1}{\sigma_N} \frac{\Norm{\bH^{(\ell)}}^2}{\sigma_N} \frac{\Norm{\bN^{(\ell)} \left( \bigotimes_{\ell' \neq \ell} \bX^{(\ell')} \right)}^2}{N} \Norm{\widetilde{\bQ}^{(\ell)}_0(\tilde{\xi})} \xrightarrow[N \to +\infty]{\text{a.s.}} 0.
\end{multline*}

Eventually, as the determinant is a continuous function in the entries of the matrix, we have, in the large $N$ limit,
\[
\det \begin{bmatrix}
\frac{m_\mathrm{SC}(\tilde{\xi})}{\sigma_N} \bH^{(\ell)} \bH^{(\ell) \top} + \bI_{r_\ell} & \frac{m_\mathrm{SC}(\tilde{\xi})}{\sigma_N} \bH^{(\ell)} \bH^{(\ell) \top} & \bzero_{r_\ell \times r_\ell} \\
\bzero_{r_\ell \times r_\ell} & \bI_{r_\ell} & \bzero_{r_\ell \times r_\ell} \\
m_\mathrm{SC}(\tilde{\xi}) \bI_{r_\ell} & m_\mathrm{SC}(\tilde{\xi}) \bI_{r_\ell} & \bI_{r_\ell}
\end{bmatrix} = 0.
\]
Using twice the relation $\det \begin{bsmallmatrix} \bA & \bB \\ \bC & \bD \end{bsmallmatrix} = \det(\bA \bD^{-1} - \bB \bD^{-1} \bC \bD)$ when $\bD$ is invertible, this simplifies into
\[
\det \left( \frac{m_\mathrm{SC}(\tilde{\xi})}{\sigma_N} \bH^{(\ell)} \bH^{(\ell) \top} + \bI_{r_\ell} \right) = 0
\]
Thus, we seek $\tilde{\xi}^{(\ell)}_{q_\ell} \in \bbR \setminus [-2, 2]$ such that
\[
\frac{m_\mathrm{SC}(\tilde{\xi}^{(\ell)}_{q_\ell})}{\sigma_N} s_{q_\ell}^2(\bP^{(\ell)}) + 1 = 0, \qquad q_\ell \in [r_\ell].
\]
Injecting the expression $m_\mathrm{SC}(\tilde{\xi}^{(\ell)}_{q_\ell}) = -\frac{\sigma_N}{s_{q_\ell}^2(\bP^{(\ell)})}$ into the equation $m_\mathrm{SC}^2(\tilde{\xi}^{(\ell)}_{q_\ell}) + \tilde{\xi}^{(\ell)}_{q_\ell} m_\mathrm{SC}(\tilde{\xi}^{(\ell)}_{q_\ell}) + 1 = 0$ yields
\[
\frac{\sigma_N^2}{s_{q_\ell}^4(\bP^{(\ell)})} - \tilde{\xi}^{(\ell)}_{q_\ell} \frac{\sigma_N}{s_{q_\ell}^2(\bP^{(\ell)})} + 1 = 0 \iff \tilde{\xi}^{(\ell)}_{q_\ell} = \frac{s_{q_\ell}^2(\bP^{(\ell)})}{\sigma_N} + \frac{\sigma_N}{s_{q_\ell}^2(\bP^{(\ell)})}.
\]
As $\tilde{\xi}^{(\ell)}_{q_\ell} > 0$ by definition, it must be strictly greater than $2$ (the right edge of the semicircle). This is true only if $\rho^{(\ell)}_{q_\ell} \eqdef \frac{s_{q_\ell}^2(\bP^{(\ell)})}{\sigma_N} > 1$.

\subsection{Eigenvector Alignments}

Let $\hat{\bu}^{(\ell)}_{i_\ell}$, $i_\ell \in [n_\ell]$, denote the $i_\ell$-th left singular vector of $\bT^{(\ell)}$ (sorted in non-increasing order of its corresponding singular value). From the spectral decomposition $\bT^{(\ell)} \bT^{(\ell) \top} = \sum_{i_\ell = 1}^{n_\ell} s_{i_\ell}^2(\bT^{(\ell)}) \hat{\bu}^{(\ell)}_{i_\ell} \hat{\bu}^{(\ell) \top}_{i_\ell}$, we have,
\[
\widetilde{\bQ}^{(\ell)}(\tilde{z}) = \sum_{i_\ell = 1}^{n_\ell} \frac{\hat{\bu}^{(\ell)}_{q_\ell} \hat{\bu}^{(\ell) \top}_{q_\ell}}{\frac{1}{\sigma_N} \left[ s_{i_\ell}^2(\bT^{(\ell)}) - \mu^{(\ell)}_N \right] - \tilde{z}}.
\]
If $\rho^{(\ell)}_{q_\ell} > 1$, $q_\ell \in [r_\ell]$, then $s_{q_\ell}^2(\bT^{(\ell)})$ is an isolated eigenvalue in the spectrum of $\bT^{(\ell)} \bT^{(\ell) \top}$ and \linebreak $\frac{1}{\sigma_N} \left[ s_{q_\ell}^2(\bT^{(\ell)}) - \mu^{(\ell)}_N \right] \xrightarrow[N \to +\infty]{\text{a.s.}} \tilde{\xi}^{(\ell)}_{q_\ell}$. Hence, for any positively-oriented simple closed complex contour $\gamma^{(\ell)}_{q_\ell}$ circling around $\tilde{\xi}^{(\ell)}_{q_\ell}$, leaving all the other $\tilde{\xi}^{(\ell)}_{q_\ell'}$, $q_\ell' \neq q_\ell$, outside and not crossing $[-2, 2]$, Cauchy's integral formula yields, for $N$ large enough and any $\ba \in \bbR^{n_\ell}$,
\[
\Abs{\ba^\top \hat{\bu}^{(\ell)}_{q_\ell}}^2 = -\frac{1}{2 \rmi \pi} \oint_{\gamma^{(\ell)}_{q_\ell}} \ba^\top \widetilde{\bQ}^{(\ell)}(\tilde{z}) \ba ~\rmd \tilde{z} \xrightarrow[N \to +\infty]{\text{a.s.}} -\frac{1}{2 \rmi \pi} \oint_{\gamma^{(\ell)}_{q_\ell}} \ba^\top \bar{\bQ}^{(\ell)}(\tilde{z}) \ba ~\rmd \tilde{z}
\]
by the dominated convergence theorem since, for all $\tilde{z} \in \gamma^{(\ell)}_{q_\ell}$, $\ba^\top \widetilde{\bQ}^{(\ell)}(\tilde{z}) \ba \to \ba^\top \bar{\bQ}^{(\ell)}(\tilde{z}) \ba$ almost surely as $N \to +\infty$ by definition of the deterministic equivalent (Definition \ref{def:matrix_equivalent}) and $\tilde{z} \in \gamma^{(\ell)}_{q_\ell} \mapsto \abs{\ba^\top \widetilde{\bQ}^{(\ell)}(\tilde{z}) \ba}$ is almost surely bounded for $N$ large enough because we can choose $\gamma^{(\ell)}_{q_\ell}$ such that $\Dist(\tilde{\xi}^{(\ell)}_{q_\ell}, \gamma^{(\ell)}_{q_\ell}) \geqslant \varepsilon > 0$ and therefore $\abs{\ba^\top \widetilde{\bQ}^{(\ell)}(\tilde{z}) \ba} \leqslant \norm{\ba}^2 \norm{\widetilde{\bQ}^{(\ell)}(\tilde{z})} \leqslant \norm{\ba}^2 / \varepsilon$ almost surely.

Using residue calculus, we can compute,
\[
-\frac{1}{2 \rmi \pi} \oint_{\gamma^{(\ell)}_{q_\ell}} \ba^\top \bar{\bQ}^{(\ell)}(\tilde{z}) \ba ~\rmd \tilde{z} = -\lim_{\tilde{z} \to \tilde{\xi}^{(\ell)}_{q_\ell}} \left( \tilde{z} - \tilde{\xi}^{(\ell)}_{q_\ell} \right) \ba^\top \left( \frac{1}{\sigma_N} \bP^{(\ell)} \bP^{(\ell) \top} + \frac{1}{m_\mathrm{SC}(\tilde{z})} \bI_{n_\ell} \right)^{-1} \ba.
\]
Note that $\bP^{(\ell)} \bP^{(\ell) \top} = \bX^{(\ell)} \bH^{(\ell)} \bH^{(\ell) \top} \bX^{(\ell) \top}$ and there exist an $r_\ell \times r_\ell$ orthogonal matrix $\bO^{(\ell)}$ such that $\bH^{(\ell)} \bH^{(\ell) \top} = \bO^{(\ell)} \bLambda^{(\ell)} \bO^{(\ell) \top}$ with $\bLambda^{(\ell)} = \Diag(s_1^2(\bP^{(\ell)}), \ldots, s_{r_\ell}^2(\bP^{(\ell)}))$. Hence,
\begin{multline*}
\ba^\top \left( \frac{1}{\sigma_N} \bP^{(\ell)} \bP^{(\ell) \top} + \frac{1}{m_\mathrm{SC}(\tilde{z})} \bI_{n_\ell} \right)^{-1} \ba \\
= \ba^\top \bX^{(\ell)} \bO^{(\ell)} \left( \frac{1}{\sigma_N} \bLambda^{(\ell)} + \frac{1}{m_\mathrm{SC}(\tilde{z})} \bI_{n_\ell} \right)^{-1} \bO^{(\ell) \top} \bX^{(\ell) \top} \ba.
\end{multline*}
Let us therefore compute the following quantity,
\[
-\lim_{\tilde{z} \to \tilde{\xi}^{(\ell)}_{q_\ell}} \left( \tilde{z} - \tilde{\xi}^{(\ell)}_{q_\ell} \right) \left( \frac{s_{q_\ell'}^2(\bP^{(\ell)})}{\sigma_N} + \frac{1}{m_\mathrm{SC}(\tilde{z})} \right)^{-1} = \left\{ \begin{array}{ll}
0 & \text{if}~ q_\ell' \neq q_\ell \\
\zeta^{(\ell)}_{q_\ell} & \text{if}~ q_\ell' = q_\ell
\end{array} \right., \qquad q_\ell' \in [r_\ell],
\]
where we have used the fact that $m_\mathrm{SC}(\tilde{\xi}^{(\ell)}_{q_\ell'}) = -\frac{\sigma_N}{s_{q_\ell'}^2(\bP^{(\ell)})}$. In order to handle the case $q_\ell' = q_\ell$, we use L'Hôpital's rule,
\begin{align*}
\zeta^{(\ell)}_{q_\ell} &= -\left( \frac{\rmd}{\rmd \tilde{z}} \left[ \frac{s_{q_\ell}^2(\bP^{(\ell)})}{\sigma_N} + \frac{1}{m_\mathrm{SC}(\tilde{z})} \right]_{\tilde{z} = \tilde{\xi}^{(\ell)}_{q_\ell}} \right)^{-1} \\
&= \frac{m_\mathrm{SC}^2(\tilde{\xi}^{(\ell)}_{q_\ell})}{m_\mathrm{SC}'(\tilde{\xi}^{(\ell)}_{q_\ell})} \\
&= \frac{\sigma_N^2}{s_{q_\ell}^4(\bP^{(\ell)}) m_\mathrm{SC}'(\tilde{\xi}^{(\ell)}_{q_\ell})}.
\end{align*}
In order to compute $m_\mathrm{SC}'(\tilde{\xi}^{(\ell)}_{q_\ell})$, let us differentiate the relation $m_\mathrm{SC}^2(\tilde{z}) + \tilde{z} m_\mathrm{SC}(\tilde{z}) + 1 = 0$,
\begin{gather*}
2 m_\mathrm{SC}'(\tilde{z}) m_\mathrm{SC}(\tilde{z}) + m_\mathrm{SC}(\tilde{z}) + \tilde{z} m_\mathrm{SC}'(\tilde{z}) = 0, \\
m_\mathrm{SC}'(\tilde{z}) = -\frac{m_\mathrm{SC}(\tilde{z})}{2 m_\mathrm{SC}(\tilde{z}) + \tilde{z}}.
\end{gather*}
Hence,
\begin{align*}
m_\mathrm{SC}'(\tilde{\xi}^{(\ell)}_{q_\ell}) &= -\frac{-\frac{\sigma_N}{s_{q_\ell}^2(\bP^{(\ell)})}}{-2 \frac{\sigma_N}{s_{q_\ell}^2(\bP^{(\ell)})} + \frac{s_{q_\ell}^2(\bP^{(\ell)})}{\sigma_N} + \frac{\sigma_N}{s_{q_\ell}^2(\bP^{(\ell)})}} \\
&= \frac{1}{\frac{s_{q_\ell}^4(\bP^{(\ell)})}{\sigma_N^2} - 1}.
\end{align*}
Back to our previous expression of $\zeta^{(\ell)}_{q_\ell}$, we now have,
\[
\zeta^{(\ell)}_{q_\ell} = 1 - \frac{\sigma_N^2}{s_{q_\ell}^4(\bP^{(\ell)})}.
\]

Therefore, for all $\ba \in \bbR^{n_\ell}$,
\[
-\frac{1}{2 \rmi \pi} \oint_{\gamma^{(\ell)}_{q_\ell}} \ba^\top \bar{\bQ}^{(\ell)}(\tilde{z}) \ba ~\rmd \tilde{z} = \ba^\top \bX^{(\ell)} \bO^{(\ell)} \bZ^{(\ell)}_{q_\ell} \bO^{(\ell) \top} \bX^{(\ell) \top} \ba
\]
where $\bZ^{(\ell)}_{q_\ell}$ is an $r_\ell \times r_\ell$ matrix with all its entries equal to $0$ except $[\bZ^{(\ell)}_{q_\ell}]_{q_\ell, q_\ell} = \zeta^{(\ell)}_{q_\ell}$. Thus, summing the alignments of $\hat{\bu}^{(\ell)}_{q_\ell}$ with each column of $\bX^{(\ell)}$ yields
\[
\Norm{\bX^{(\ell) \top} \hat{\bu}^{(\ell)}_{q_\ell}}^2 \xrightarrow[N \to +\infty]{\text{a.s.}} \zeta^{(\ell)}_{q_\ell} \sum_{q_\ell' = 1}^{r_\ell} O^{(\ell) 2}_{q_\ell', q_\ell} = \zeta^{(\ell)}_{q_\ell}.
\]

\section{Proof of Lemma \ref{lem:bound}}
\label{proof:lem:bound}

Our proof of Lemma \ref{lem:bound} uses the notion of $\varepsilon$-covering. An $\varepsilon$-covering of a \emph{compact} set $\calK$ for the norm $\norm{\cdot}$ is a \emph{finite} set $\calC \subset \calK$ such that for all $x \in \calK$, there exists $\bar{x} \in \calC$ such that $\norm{x - \bar{x}} \leqslant \varepsilon$. We also define the covering number $N(\varepsilon, \calK, \norm{\cdot})$ as the smallest possible number of elements in $\calC$.

Moreover, we recall the definition of the Gamma function $\Gamma(s) = \int_0^{+\infty} t^{s - 1} e^{-t} \rmd t$ and the (upper) incomplete Gamma function $\Gamma(s, x) = \int_x^{+\infty} t^{s - 1} e^{-t} \rmd t$ for $s > 0$ and $x \geqslant 0$.

For our proof, we need to introduce a few preliminary results which are stated and proven below (except Lemma \ref{lem:covering} for which a reference is given).

\begin{lemma} \label{lem:norm_epsilon}
For $\ell \in [d]$ and $\varepsilon > 0$, let $\bDelta^{(\ell)} \in \bbR^{n_\ell \times r_\ell}$ be such that $\norm{\bDelta^{(\ell)}} \leqslant \varepsilon$ and $\bV^{(\ell)} \in V_{r_\ell}(\bbR^{n_\ell})$ be the matrix of its left singular vectors. For all $\bA^{(\ell')} \in \bbR^{n_{\ell'} \times r_{\ell'}}$, $\ell' \neq \ell$,
\[
\Norm{\bscrN(\bA^{(1)}, \ldots, \bDelta^{(\ell)}, \ldots, \bA^{(d)})}_\rmF \leqslant \varepsilon \Norm{\bscrN(\bA^{(1)}, \ldots, \bV^{(\ell)}, \ldots, \bA^{(d)})}_\rmF.
\]
\end{lemma}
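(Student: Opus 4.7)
The plan is to exploit the SVD of $\bDelta^{(\ell)}$ together with multilinearity of tensor contraction, reducing the problem to a standard matrix inequality involving Frobenius and operator norms. Since $\bDelta^{(\ell)} \in \bbR^{n_\ell \times r_\ell}$ admits an SVD $\bDelta^{(\ell)} = \bV^{(\ell)} \bSigma^{(\ell)} \bW^{(\ell) \top}$ with $\bV^{(\ell)} \in V_{r_\ell}(\bbR^{n_\ell})$ as specified in the statement, $\bSigma^{(\ell)} \in \bbR^{r_\ell \times r_\ell}$ diagonal with singular values bounded by $\norm{\bDelta^{(\ell)}} \leqslant \varepsilon$, and $\bW^{(\ell)} \in \bbR^{r_\ell \times r_\ell}$ orthogonal, I would first use multilinearity of the contraction in the $\ell$-th argument to write
\[
\bscrN(\bA^{(1)}, \ldots, \bDelta^{(\ell)}, \ldots, \bA^{(d)}) = \bscrN(\bA^{(1)}, \ldots, \bV^{(\ell)}, \ldots, \bA^{(d)})(\bI_{r_1}, \ldots, \bSigma^{(\ell)} \bW^{(\ell) \top}, \ldots, \bI_{r_d}).
\]

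Next, I would pass to the mode-$\ell$ unfolding to turn Frobenius norms of tensor contractions into Frobenius norms of matrix products. Denoting by $\bB^{(\ell)}$ the mode-$\ell$ unfolding of $\bscrN(\bA^{(1)}, \ldots, \bV^{(\ell)}, \ldots, \bA^{(d)})$, the previous identity yields
\[
\Norm{\bscrN(\bA^{(1)}, \ldots, \bDelta^{(\ell)}, \ldots, \bA^{(d)})}_\rmF = \Norm{\bW^{(\ell)} \bSigma^{(\ell)} \bB^{(\ell)}}_\rmF = \Norm{\bSigma^{(\ell)} \bB^{(\ell)}}_\rmF,
\]
where the last equality uses the invariance of the Frobenius norm under orthogonal transformations. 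The standard inequality $\norm{\bX \bY}_\rmF \leqslant \norm{\bX} \cdot \norm{\bY}_\rmF$ with $\norm{\bSigma^{(\ell)}} = \norm{\bDelta^{(\ell)}} \leqslant \varepsilon$ then gives
\[
\Norm{\bSigma^{(\ell)} \bB^{(\ell)}}_\rmF \leqslant \varepsilon \Norm{\bB^{(\ell)}}_\rmF = \varepsilon \Norm{\bscrN(\bA^{(1)}, \ldots, \bV^{(\ell)}, \ldots, \bA^{(d)})}_\rmF,
\]
which is the desired bound.

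No real obstacle arises here beyond a careful bookkeeping of how the SVD factors combine under mode-$\ell$ contraction; the argument is essentially the observation that contracting on mode $\ell$ with a matrix $\bDelta^{(\ell)}$ is the same as first contracting with its left singular vectors $\bV^{(\ell)}$ and then applying the small $r_\ell \times r_\ell$ matrix $\bSigma^{(\ell)} \bW^{(\ell) \top}$ on the resulting small axis. Note that the statement does not involve $\bscrN$ being random, so the proof is purely algebraic and deterministic, valid for any tensor of the appropriate shape.
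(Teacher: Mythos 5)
Your proof is correct and follows essentially the same route as the paper's: substitute the SVD $\bDelta^{(\ell)} = \bV^{(\ell)}\bSigma^{(\ell)}\bW^{(\ell)\top}$ into the mode-$\ell$ unfolding and apply $\norm{\bX\bY}_\rmF \leqslant \norm{\bX}\,\norm{\bY}_\rmF$ with $\norm{\bSigma^{(\ell)}} = \norm{\bDelta^{(\ell)}} \leqslant \varepsilon$. Your extra step of discarding $\bW^{(\ell)}$ by orthogonal invariance before bounding is a cosmetic variation of the paper's direct bound on $\norm{\bW^{(\ell)\top}\bSigma^{(\ell)}}$.
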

\begin{proof}
Let $\bV^{(\ell)} \bSigma^{(\ell)} \bW^{(\ell) \top}$ be the singular value decomposition of $\bDelta^{(\ell)}$. We have,
\begin{multline*}
\Norm{\bscrN(\bA^{(1)}, \ldots, \bDelta^{(\ell)}, \ldots, \bA^{(d)})}_\rmF = \Norm{\bDelta^{(\ell) \top} \bN^{(\ell)} \bigkron_{\ell' \neq \ell} \bA^{(\ell')}}_\rmF \\
= \Norm{\bW^{(\ell) \top} \bSigma^{(\ell)} \bV^{(\ell) \top} \bN^{(\ell)} \bigkron_{\ell' \neq \ell} \bA^{(\ell')}}_\rmF \leqslant \underbrace{\Norm{\bW^{(\ell) \top} \bSigma^{(\ell)}}}_{= \varepsilon} \underbrace{\Norm{\bV^{(\ell) \top} \bN^{(\ell)} \bigkron_{\ell' \neq \ell} \bA^{(\ell')}}_\rmF}_{= \Norm{\bscrN(\bA^{(1)}, \ldots, \bV^{(\ell)}, \ldots, \bA^{(d)})}_\rmF}
\end{multline*}
using the fact that $\norm{\bA \bB}_\rmF \leqslant \norm{\bA} \norm{\bB}_\rmF$.
\end{proof}

\begin{lemma} \label{lem:chi2}
Given $\bA{(\ell)} \in V_{r_\ell}(\bbR^{n_\ell})$, $\ell \in [d]$,
\[
\Norm{\bscrN(\bA^{(1)}, \ldots, \bA^{(d)})}_\rmF^2 \sim \chi^2 \left( \prod_{\ell \in [d]} r_\ell \right).
\]
\end{lemma}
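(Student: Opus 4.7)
The plan is to show that the $\prod_\ell r_\ell$ entries of the contracted tensor $\bscrN(\bA^{(1)}, \ldots, \bA^{(d)})$ are themselves i.i.d.\ $\calN(0, 1)$ random variables, from which the $\chi^2$ distribution of the squared Frobenius norm follows by definition.

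First, I would observe that each entry of $\bscrN(\bA^{(1)}, \ldots, \bA^{(d)})$ is a linear combination of the entries of $\bscrN$,
\[
\bscrN(\bA^{(1)}, \ldots, \bA^{(d)})_{j_1, \ldots, j_d} = \sum_{i_1 = 1}^{n_1} \dots \sum_{i_d = 1}^{n_d} \scrN_{i_1, \ldots, i_d} \prod_{\ell = 1}^d A^{(\ell)}_{i_\ell, j_\ell},
\]
and is therefore Gaussian with mean $0$, since the $\scrN_{i_1, \ldots, i_d}$ are independent $\calN(0, 1)$. Jointly, the $\prod_\ell r_\ell$ entries form a Gaussian vector, so independence is equivalent to zero covariance.

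Next, I would compute the covariance between two such entries directly using $\esp{\scrN_{i_1, \ldots, i_d} \scrN_{i'_1, \ldots, i'_d}} = \prod_{\ell = 1}^d \delta_{i_\ell, i'_\ell}$, which gives
\[
\Esp{\bscrN(\bA^{(1)}, \ldots, \bA^{(d)})_{j_1, \ldots, j_d} \bscrN(\bA^{(1)}, \ldots, \bA^{(d)})_{k_1, \ldots, k_d}} = \prod_{\ell = 1}^d \sum_{i_\ell = 1}^{n_\ell} A^{(\ell)}_{i_\ell, j_\ell} A^{(\ell)}_{i_\ell, k_\ell} = \prod_{\ell = 1}^d [\bA^{(\ell) \top} \bA^{(\ell)}]_{j_\ell, k_\ell}.
\]
The orthonormality assumption $\bA^{(\ell) \top} \bA^{(\ell)} = \bI_{r_\ell}$ then collapses this product to $\prod_\ell \delta_{j_\ell, k_\ell}$, which is exactly the covariance structure of i.i.d.\ standard Gaussians.

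With the entries shown to be i.i.d.\ $\calN(0, 1)$, the squared Frobenius norm is the sum of $\prod_\ell r_\ell$ independent squared standard Gaussians, which is by definition $\chi^2(\prod_\ell r_\ell)$. I do not anticipate any real obstacle here; the only subtlety is remembering that jointly Gaussian plus zero covariance implies independence (not true for non-Gaussian marginals), which is precisely what makes the orthonormality assumption the key ingredient.
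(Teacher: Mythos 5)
Your proposal is correct and follows essentially the same route as the paper's proof: show each entry of the contraction is standard Gaussian, compute the pairwise covariance using independence of the noise entries, and invoke orthonormality of the $\bA^{(\ell)}$ to collapse it to $\prod_\ell \delta_{j_\ell, k_\ell}$. Your explicit remark that independence follows from zero covariance only because the entries are \emph{jointly} Gaussian is a point the paper leaves implicit, but the argument is the same.
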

\begin{proof}
Firstly, observe that, for all $(q_1, \ldots, q_d) \in \bigtimes_{\ell \in [d]} [r_\ell]$,
\[
[\bscrN(\bA^{(1)}, \ldots, \bA^{(d)})]_{q_1, \ldots, q_d} = \sum_{i_1, \ldots, i_d = 1}^{n_1, \ldots, n_d} \scrN_{i_1, \ldots, i_d} A^{(1)}_{i_1, q_1} \ldots A^{(d)}_{i_d, q_d} \sim \calN(0, 1)
\]
since $\sum_{i_\ell = 1}^{n_\ell} A^{(\ell) 2}_{i_\ell, q_\ell} = 1$ for all $\ell \in [d]$. Then, we show that all the entries of $\bscrN(\bA^{(1)}, \ldots, \bA^{(d)})$ are independent because their covariance is identity,
\begin{align*}
&\Esp{[\bscrN(\bA^{(1)}, \ldots, \bA^{(d)})]_{q_1, \ldots, q_d} [\bscrN(\bA^{(1)}, \ldots, \bA^{(d)})]_{q_1', \ldots, q_d'}} \\
&= \sum_{i_1, \ldots, i_d = 1}^{n_1, \ldots, n_d} \sum_{i_1', \ldots, i_d' = 1}^{n_1, \ldots, n_d} \Esp{\scrN_{i_1, \ldots, i_d} \scrN_{i_1', \ldots, i_d'}} A^{(1)}_{i_1, q_1} A^{(1)}_{i_1', q_1'} \ldots A^{(d)}_{i_d, q_d} A^{(d)}_{i_d', q_d'} \\
&= \sum_{i_1, \ldots, i_d = 1}^{n_1, \ldots, n_d} A^{(1)}_{i_1, q_1} A^{(1)}_{i_1, q_1'} \ldots A^{(d)}_{i_d, q_d} A^{(d)}_{i_d, q_d'} \\
&= \left\{ \begin{array}{ll}
1 & \text{if}~ (q_1, \ldots, q_d) = (q_1', \ldots, q_d') \\
0 & \text{otherwise}
\end{array} \right. .
\end{align*}
Hence, the result follows from the fact that $\norm{\bscrN(\bA^{(1)}, \ldots, \bA^{(d)})}_\rmF^2$ is the sum of $\prod_{\ell \in [d]} r_\ell$ squared independent $\calN(0, 1)$ variables.
\end{proof}

\begin{lemma}[\citealp{hinrichs_entropy_2017}, Lemma 4.1] \label{lem:covering}
For $0 < \varepsilon < 1$, we have the following upper bound on the $\varepsilon$-covering number of the Stiefel manifold $V_r(\bbR^n)$ for the spectral norm $\norm{\cdot}$,
\[
N(\varepsilon, V_r(\bbR^n), \Norm{\cdot}) \leqslant \left[ \frac{C}{\varepsilon} \right]^{r \left( n - \frac{r + 1}{2} \right)}.
\]
where $C > 0$ is a universal constant.
\end{lemma}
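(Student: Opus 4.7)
The plan is to prove the bound by induction on $r$, the base case $r = 1$ being the standard covering number of the unit sphere, $N(\varepsilon, S^{n-1}, \norm{\cdot}) \leq (1 + 2/\varepsilon)^n \leq (3/\varepsilon)^n$, which matches the claim (with slack) since $d_{1,n} = n - 1$. For the inductive step, every $\bX \in V_r(\bbR^n)$ is decomposed along its first column as $\bX = [\bx_1 \mid \bX']$, with $\bx_1 \in S^{n-1}$ and $\bX' \in V_{r-1}(\bx_1^\perp)$.

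Given $\delta, \varepsilon' > 0$ chosen as small multiples of $\varepsilon$, I would build a candidate $\varepsilon$-net in three moves: (i) cover $S^{n-1}$ by a $\delta$-net $\scrC_1$ of cardinality at most $(c_0/\delta)^{n-1}$; (ii) for each $\bar{\bx}_1 \in \scrC_1$, fix an isometric identification $\phi_{\bar{\bx}_1}: \bbR^{n-1} \to \bar{\bx}_1^\perp$ and lift the inductively constructed $\varepsilon'$-net of $V_{r-1}(\bbR^{n-1})$ to $V_{r-1}(\bar{\bx}_1^\perp)$; (iii) form the product $\{[\bar{\bx}_1 \mid \phi_{\bar{\bx}_1}(\bar{\bX}')]\}$. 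The key perturbation lemma, to verify that this set is indeed an $\varepsilon$-net, is the following: since $\bx_1^\top \bX' = 0$, one has $\bar{\bx}_1^\top \bX' = (\bar{\bx}_1 - \bx_1)^\top \bX'$ with spectral norm at most $\delta$; hence the polar factor $\bZ'$ of $(\bI - \bar{\bx}_1 \bar{\bx}_1^\top) \bX'$ belongs to $V_{r-1}(\bar{\bx}_1^\perp)$ and satisfies $\norm{\bZ' - \bX'} \leq \delta + \O(\delta^2)$ by the standard Lipschitz property of the polar retraction near the Stiefel manifold. Combined with the block-matrix inequality $\norm{[\ba \mid \bB]}^2 \leq \norm{\ba}^2 + \norm{\bB}^2$ (which follows from $[\ba \mid \bB][\ba \mid \bB]^\top = \ba \ba^\top + \bB\bB^\top$), suitable choices of $\delta, \varepsilon'$ give $\norm{\bX - [\bar{\bx}_1 \mid \phi_{\bar{\bx}_1}(\bar{\bX}')]} \leq \varepsilon$. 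Multiplying the net cardinalities and invoking the arithmetic identity $(n-1) + (r-1)(n-1-r/2) = r(n-(r+1)/2)$ closes the induction with the correct exponent $d = r(n-(r+1)/2)$.

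The main obstacle is ensuring that the constant $C$ remains \emph{universal}, i.e., independent of both $n$ and $r$. The naive recursion above picks up a multiplicative factor (from the ratios $\varepsilon/\delta$ and $\varepsilon/\varepsilon'$) at each of the $r$ inductive steps, which would inflate $C$ geometrically in $r$. The cleanest way to bypass this is to drop the induction and instead use a direct Riemannian volume argument on $V_r(\bbR^n) \simeq O(n)/O(n-r)$: explicit Haar-measure computations give $\mathrm{vol}(V_r(\bbR^n)) \leq C_1^d$ with $C_1$ universal (via Stirling, after telescoping the $\Gamma$-factors in $\mathrm{vol}(O(n)) = \prod_{k=1}^n 2\pi^{k/2}/\Gamma(k/2)$), and the Bishop--Gromov comparison theorem, applied with the uniformly bounded sectional curvature of this compact symmetric space, yields $\mathrm{vol}(B_g(\bX_0, \rho)) \geq c\,\rho^d$ for a universal constant $c$. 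Dividing gives $N(\rho, V_r(\bbR^n), d_g) \leq (C_2/\rho)^d$, and a bilipschitz comparison between the geodesic distance and the spectral norm (valid locally near each base point with a universal constant, and extended to the whole manifold by homogeneity of the $O(n)$-action) converts this into the desired estimate with a truly universal $C$. This is the route taken by \citet{hinrichs_entropy_2016}.
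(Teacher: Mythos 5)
This lemma is not proved in the paper at all: Appendix C states explicitly that all preliminary results are proven there \emph{except} Lemma \ref{lem:covering}, for which only the reference \citep{hinrichs_entropy_2016} is given. So there is no in-paper argument to compare against, and your proposal has to be judged as a standalone proof --- and as written it is a plan with genuine gaps rather than a proof. Your first, inductive route you correctly abandon yourself: splitting the error budget over $r$ column-peeling steps inflates the per-dimension constant with $r$, so it cannot deliver the \emph{universal} $C$ claimed in the statement (even though, for the paper's application where $r_\ell = \Th(1)$, an $r$-dependent constant would in fact be harmless). Your second, volumetric route is indeed the standard way such bounds are obtained for homogeneous spaces of the orthogonal group (in the spirit of Szarek's nets of Grassmannians and orthogonal groups), but every step where the universal constant is actually earned is asserted rather than argued.

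Concretely: (i) Bishop--Gromov is the wrong comparison theorem for your purpose --- it bounds ball volumes from \emph{above} under a Ricci lower bound; a lower bound $\mathrm{vol}(B_g(\bX_0,\rho)) \geqslant c\,\rho^d$ needs G\"unther's (Bishop's) inequality under a sectional-curvature \emph{upper} bound, and even then $c \approx \omega_d$ (the volume of the Euclidean unit $d$-ball) only for $\rho$ below a curvature-dependent threshold, so you must check that both the curvature bound and this threshold are uniform in $n$ and $r$. (ii) The volumetric quotient then gives $N(\rho) \lesssim \mathrm{vol}(V_r(\bbR^n)) / \left( \omega_d \rho^d \right)$, and the real content of the lemma is precisely that $\mathrm{vol}(V_r(\bbR^n))/\omega_d \leqslant C^d$ with $C$ universal; the parenthetical ``via Stirling, after telescoping the $\Gamma$-factors'' is exactly the computation that must be carried out and is not obviously routine. (iii) The claimed bilipschitz comparison between geodesic distance and the spectral norm with a constant independent of $n$, $r$ and valid globally is also left unproved; only the easy direction (extrinsic distance bounded by geodesic distance) comes for free. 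Finally, your closing claim that this is the route of \citet{hinrichs_entropy_2016} cannot be checked against the paper, which reproduces no proof; if you want a self-contained argument, these three points are where the work lies.
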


\begin{lemma} \label{lem:gamma}
$\Gamma(s, x) \leqslant \max(1, e^{s - 1}) \Gamma(s) e^{-x / 2}$ for all $x \geqslant 0$ and $s > 0$.
\end{lemma}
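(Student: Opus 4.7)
}
The plan is to split into the cases $s \leqslant 1$ and $s \geqslant 1$, using the fact that $\max(1, e^{s-1}) = 1$ in the first case and $\max(1, e^{s-1}) = e^{s-1}$ in the second. In each case I will perform a tailored change of variables in the integral defining $\Gamma(s, x)$ so that the factor $e^{-x/2}$ appears naturally, and then use monotonicity of $v \mapsto v^{s-1}$ (with the correct sign of $s-1$) to bound the remaining integrand by $u^{s-1} e^{-u}$ up to a harmless constant.

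For $s \leqslant 1$, I substitute $t = x + u$ in $\Gamma(s, x) = \int_x^{+\infty} t^{s-1} e^{-t} \rmd t$ to get
\[
\Gamma(s, x) = e^{-x} \int_0^{+\infty} (x + u)^{s-1} e^{-u} \rmd u.
\]
Since $s - 1 \leqslant 0$, the map $v \mapsto v^{s-1}$ is non-increasing on $(0, +\infty)$, so $(x + u)^{s-1} \leqslant u^{s-1}$ for every $u > 0$. Hence $\Gamma(s, x) \leqslant e^{-x} \Gamma(s) \leqslant \Gamma(s) e^{-x/2}$, which is exactly the claim when $\max(1, e^{s-1}) = 1$.

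For $s \geqslant 1$, the substitution $u = t - x/2$ is better suited: it yields
\[
\Gamma(s, x) = e^{-x/2} \int_{x/2}^{+\infty} \left( u + \tfrac{x}{2} \right)^{s-1} e^{-u} \rmd u.
\]
The key observation is that $u \geqslant x/2$ forces $u + x/2 \leqslant 2u$, and since $s - 1 \geqslant 0$ we obtain $(u + x/2)^{s-1} \leqslant 2^{s-1} u^{s-1}$. Extending the integral from $x/2$ to $0$ and recognizing $\Gamma(s)$ gives $\Gamma(s, x) \leqslant 2^{s-1} \Gamma(s) e^{-x/2}$. The elementary inequality $2 \leqslant e$ then upgrades $2^{s-1}$ to $e^{s-1}$, concluding this case.

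There is no real obstacle here: the only care is to pick the right substitution in each regime so that the sign of $s-1$ cooperates with monotonicity of $v \mapsto v^{s-1}$. A single unified substitution cannot handle both regimes without losing the desired constant, which is why the case split is essentially forced.
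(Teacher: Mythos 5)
Your proof is correct, and it takes a genuinely different route from the paper's. The paper studies the ratio $f(x) = \Gamma(s, x) / (C e^{-x/2})$ by differentiation: it reduces the sign of $f'$ to that of an auxiliary function $g(x) = \Gamma(s, x) - 2 x^{s-1} e^{-x}$, analyses $g'$ with a sign table, and in the case $s > 1$ must locate an implicit critical point $x_0(s) \in [0, 2(s-1)]$ before bounding $\Gamma(s, x_0(s))\, e^{x_0(s)/2} \leqslant \Gamma(s)\, e^{s-1}$. You instead avoid calculus entirely: the shift $t = x + u$ handles $s \leqslant 1$ via $(x+u)^{s-1} \leqslant u^{s-1}$, and the half-shift $t = u + x/2$ handles $s \geqslant 1$ via $u + x/2 \leqslant 2u$ on the domain of integration, giving $\Gamma(s,x) \leqslant 2^{s-1} \Gamma(s) e^{-x/2}$. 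This is shorter, and in the regime $s \geqslant 1$ it actually yields the sharper constant $2^{s-1}$ rather than $e^{s-1}$ (the weaker stated bound then follows from $2 \leqslant e$). The only point worth making explicit is that in the first case the pointwise bound $(x+u)^{s-1} \leqslant u^{s-1}$ degenerates at $u = 0$ (where the right-hand side is $+\infty$ for $s<1$), but this is harmless since the comparison is under a convergent integral over $u \in (0, +\infty)$; your phrasing "for every $u > 0$" already covers this.
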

\begin{proof}
Given $s > 0$, consider the function $f : x \in [0, +\infty[ \mapsto \frac{\Gamma(s, x)}{C e^{-x / 2}}$ with $C > 0$. Our goal is to show that $0 < f \leqslant 1$ when $C$ is well chosen. $f$ is continuously differentiable on $[0, +\infty[$ and
\[
f'(x) = \frac{1}{C e^{-x / 2}} \left( -x^{s - 1} e^{-x} + \frac{1}{2} \Gamma(s, x) \right) \geqslant 0 \iff \Gamma(s, x) - 2 x^{s - 1} e^{-x} \geqslant 0.
\]
Consider the function $g : x \in [0, +\infty[ \mapsto \Gamma(s, x) - 2 x^{s - 1} e^{-x}$. $g$ is also continuously differentiable on $[0, +\infty[$ and
\[
g'(x) = x^{s - 1} e^{-x} - 2 \left( s - 1 \right) x^{s - 2} e^{-x} \geqslant 0 \iff x \geqslant 2 \left( s - 1 \right).
\]
We distinguish two cases.
\begin{enumerate}
\item If $0 < s \leqslant 1$, then $g$ increases monotonically on $[0, +\infty[$. Since $\lim_{x \to +\infty} g(x) = 0$, we necessarily have $g(x) \leqslant 0$ for all $x \in [0, +\infty[$. Hence, $f(x) \leqslant f(0) = \frac{\Gamma(s)}{C}$ and we can choose $C = \Gamma(s)$.
\item If $s > 1$, our conclusion stems from the following table.
\begin{center} \begin{tikzpicture}
\tkzTabInit{$x$/.5, $g'(x)$/.5, $g(x)$/1.5}{$0$, $2 \left( s - 1 \right)$, $+\infty$}
\tkzTabLine{, -, z, +,}
\tkzTabVar{+/$\Gamma(s)$, -/$g(2 \left( s - 1 \right))$, +/0}
\end{tikzpicture} \end{center}
Since $g$ is strictly increasing on $\left[ 2 \left( s - 1 \right), +\infty \right[$ and $\lim_{x \to +\infty} g(x) = 0$, we necessarily have $g(2 \left( s - 1 \right)) < 0$. Hence, since $\Gamma(s) > 0$, the equation $g(x) = 0$ has a unique solution on $[0, +\infty[$ and it lies between $0$ and $2 \left( s - 1 \right)$. Let $x_0(s) \in \left[ 0, 2 \left( s - 1 \right) \right]$ be this unique solution. Then,
\[
\sup_{[0, +\infty[} f = f(x_0(s)) = \frac{\Gamma(s, x_0(s))}{C e^{-x_0(s) / 2}}
\]
and we can choose $C = \Gamma(s, x_0(s)) e^{x_0(s) / 2}$. The final result follows from $\Gamma(s, x_0(s)) \leqslant \Gamma(s)$ and $e^{x_0(s) / 2} \leqslant e^{s - 1}$.
\end{enumerate}
\end{proof}

We are now ready to prove Lemma \ref{lem:bound}.

Let $\varepsilon > 0$ and $\calC_\ell$ be an $\varepsilon$-covering of $V_{r_\ell}(\bbR^{n_\ell})$ for the spectral norm $\norm{\cdot}$, $\ell \in [d]$. Since $\bigtimes_{\ell \in [d]} V_{r_\ell}(\bbR^{n_\ell})$ is compact, it contains an element $(\bA^{(1)}_\star, \ldots, \bA^{(d)}_\star)$ such that
\[
\sup_{\bA^{(\ell)} \in V_{r_\ell}(\bbR^{n_\ell}),~ \ell \in [d]} \Norm{\bscrN(\bA^{(1)}, \ldots, \bA^{(d)})}_\rmF = \Norm{\bscrN(\bA^{(1)}_\star, \ldots, \bA^{(d)}_\star)}_\rmF.
\]
Let $\bar{\bA}^{(\ell)} \in \calC_\ell$ be such that $\bA^{(\ell)}_\star = \bar{\bA}^{(\ell)} + \bDelta^{(\ell)}$ with $\norm{\bDelta^{(\ell)}} \leqslant \varepsilon$. Then, using the triangle inequality, Lemma \ref{lem:norm_epsilon} and the optimality of $(\bA^{(1)}_\star, \ldots, \bA^{(d)}_\star)$, we have,
\[
\Norm{\bscrN(\bar{\bA}^{(1)} + \bDelta^{(1)}, \ldots, \bar{\bA}^{(d)} + \bDelta^{(d)})}_\rmF \leqslant \Norm{\bscrN(\bar{\bA}^{(1)}, \ldots, \bar{\bA}^{(d)})}_\rmF + S \Norm{\bscrN(\bA^{(1)}_\star, \ldots, \bA^{(d)}_\star)}_\rmF
\]
with $S \eqdef \sum_{k = 1}^d \binom{d}{k} \varepsilon^k \leqslant \sum_{k = 1}^d \varepsilon^k \frac{d^k}{k!} \leqslant e^{\varepsilon d} -1$. Hence, choosing $\varepsilon = \frac{1}{d} \log \frac{3}{2}$, we get,
\[
\Norm{\bscrN(\bA^{(1)}_\star, \ldots, \bA^{(d)}_\star)}_\rmF \leqslant 2 \Norm{\bscrN(\bar{\bA}^{(1)}, \ldots, \bar{\bA}^{(d)})}_\rmF
\]
and, from the union bound, for any $t \geqslant 0$,
\begin{align*}
\bbP \left( \Norm{\bscrN(\bA^{(1)}_\star, \ldots, \bA^{(d)}_\star)}_\rmF \geqslant t \right) &\leqslant \bbP \left( \bigcup_{\bA^{(\ell)} \in \calC_\ell,~ \ell \in [d]} \left\{ \Norm{\bscrN(\bA^{(1)}, \ldots, \bA^{(d)})}_\rmF \geqslant \frac{t}{2} \right\} \right) \\
&\leqslant \sum_{\bA^{(\ell)} \in \calC_\ell,~ \ell \in [d]} \bbP \left( \Norm{\bscrN(\bA^{(1)}, \ldots, \bA^{(d)})}_\rmF \geqslant \frac{t}{2} \right).
\end{align*}
Thus, combining Lemma \ref{lem:chi2} and \ref{lem:covering}, we have,
\[
\bbP \left( \Norm{\bscrN(\bA^{(1)}_\star, \ldots, \bA^{(d)}_\star)}_\rmF \geqslant t \right) \leqslant \left[ \frac{C d}{\log \frac{3}{2}} \right]^{\sum_{\ell = 1}^d r_\ell \left( n_\ell - \frac{r_\ell + 1}{2} \right)} \bbP \left( X \geqslant \frac{t^2}{4} \right)
\]
where $X$ is a random variable following a $\chi^2(\prod_{\ell \in [d]} r_\ell)$ distribution. Eventually, the probability on the right-hand side can be bounded using Lemma \ref{lem:gamma},
\[
\bbP \left( X \geqslant \frac{t^2}{4} \right) = \frac{\Gamma \left( \frac{1}{2} \prod_{\ell \in [d]} r_\ell, \frac{t^2}{8} \right)}{\Gamma \left( \frac{1}{2} \prod_{\ell \in [d]} r_\ell \right)} \leqslant \max(1, e^{\frac{1}{2} \prod_{\ell \in [d]} r_\ell - 1}) e^{-t^2 / 16}.
\]
We get the result stated in Lemma \ref{lem:bound} with
\[
t^2 = 16 \left[ \left( \sum_{\ell = 1}^d r_\ell \left( n_\ell - \frac{r_\ell + 1}{2} \right) \right) \log \frac{C d}{\log \frac{3}{2}} + \log \left( \frac{1}{\delta} \max \left( 1, e^{\frac{1}{2} \prod_{\ell = 1}^d r_\ell - 1} \right) \right) \right].
\]

\section{Proof of Theorem \ref{thm:hooi}}
\label{proof:thm:hooi}

Recall the decomposition $\bscrP = \tucker{\bscrH}{\bX^{(1)}, \ldots, \bX^{(d)}}$. We use the following lemma whenever we state that $\norm{\bscrP(\bU^{(1)}, \ldots, \bU^{(d)})}_\rmF = \bigO(\norm{\bscrP}_\rmF)$ as $N \to +\infty$.
\begin{lemma} \label{lem:ThetaP}
For all $(\bA^{(1)}, \ldots, \bA^{(d)}) \in \bigtimes_{\ell \in [d]} V_{r_\ell}(\bbR^{n_\ell})$,
\[
\Norm{\bscrP(\bA^{(1)}, \ldots, \bA^{(d)})}_\rmF \leqslant \Norm{\bscrP}_\rmF \prod_{\ell \in [d]} \Norm{\bX^{(\ell) \top} \bA^{(\ell)}}.
\]
\end{lemma}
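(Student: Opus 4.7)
The plan is to reduce the bound to a matrix inequality on a single mode-$\ell$ unfolding, using the Tucker structure of $\bscrP$ and the multiplicativity of the spectral norm on Kronecker products.

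First, I would rewrite the contraction in Tucker form. Starting from $\bscrP = \Tucker{\bscrH}{\bX^{(1)},\ldots,\bX^{(d)}}$ and using the definition of $\bscrP(\bA^{(1)},\ldots,\bA^{(d)})$ recalled in Section~\ref{sec:preliminaries:tensors}, a direct computation on entries gives
\[
\bscrP(\bA^{(1)},\ldots,\bA^{(d)}) = \Tucker{\bscrH}{\bA^{(1)\top}\bX^{(1)},\ldots,\bA^{(d)\top}\bX^{(d)}}.
\]
This is the standard composition rule for two successive Tucker products.

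Next, I would pick any mode $\ell \in [d]$ (for concreteness $\ell=1$) and use the unfolding identity reviewed just after the MLSVD paragraph in Section~\ref{sec:preliminaries:tensors}. Applied to the previous display,
\[
[\bscrP(\bA^{(1)},\ldots,\bA^{(d)})]^{(1)}
= \bigl(\bA^{(1)\top}\bX^{(1)}\bigr)\,\bH^{(1)}\,\bigboxtimes_{\ell \neq 1}\bigl(\bX^{(\ell)\top}\bA^{(\ell)}\bigr).
\]
Taking Frobenius norms on both sides and applying the submultiplicative inequality $\Norm{\bM_1\bM_2}_\rmF \leqslant \Norm{\bM_1}\,\Norm{\bM_2}_\rmF$ (first from the left with $\bA^{(1)\top}\bX^{(1)}$, then from the right with the Kronecker factor) yields
\[
\Norm{\bscrP(\bA^{(1)},\ldots,\bA^{(d)})}_\rmF
\leqslant \Norm{\bX^{(1)\top}\bA^{(1)}}\,\Norm{\bH^{(1)}}_\rmF\, \Norm{\bigboxtimes_{\ell \neq 1}\bigl(\bX^{(\ell)\top}\bA^{(\ell)}\bigr)}.
\]

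Finally, two standard facts conclude the proof. On the one hand, since the columns of $\bX^{(\ell)}$ are orthonormal, $\Norm{\bH^{(1)}}_\rmF = \Norm{\bscrH}_\rmF = \Norm{\bscrP}_\rmF$. On the other hand, the spectral norm is multiplicative on Kronecker products (see, e.g., \citealp[Chapter 10]{abadir_matrix_2005}), so
\[
\Norm{\bigboxtimes_{\ell \neq 1}\bigl(\bX^{(\ell)\top}\bA^{(\ell)}\bigr)} = \prod_{\ell \neq 1}\Norm{\bX^{(\ell)\top}\bA^{(\ell)}}.
\]
Combining these gives the announced bound $\Norm{\bscrP(\bA^{(1)},\ldots,\bA^{(d)})}_\rmF \leqslant \Norm{\bscrP}_\rmF \prod_{\ell \in [d]}\Norm{\bX^{(\ell)\top}\bA^{(\ell)}}$. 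The argument is essentially a bookkeeping exercise; the only point that needs a moment of care is to push the right-hand Kronecker factor through using the variant $\Norm{\bM_2\bM_3}_\rmF \leqslant \Norm{\bM_2}_\rmF\Norm{\bM_3}$ of the submultiplicative bound, which is the reason one places the Frobenius norm on $\bH^{(1)}$ rather than on the factor matrices.
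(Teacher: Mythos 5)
Your proof is correct. It shares the paper's core mechanism — unfold the contracted tensor along one mode to get $(\bA^{(1)\top}\bX^{(1)})\,\bH^{(1)}\,\bigboxtimes_{\ell\neq 1}(\bX^{(\ell)\top}\bA^{(\ell)})$ and apply the mixed-norm submultiplicativity $\Norm{\bM\bN}_\rmF \leqslant \Norm{\bM}\,\Norm{\bN}_\rmF$ — but differs in how the remaining $d-1$ factors are handled. The paper peels them off one at a time: after extracting $\Norm{\bA^{(1)\top}\bX^{(1)}}$ it re-unfolds the residual tensor along mode $2$, extracts $\Norm{\bA^{(2)\top}\bX^{(2)}}$, and iterates until only $\Norm{\bscrH}_\rmF = \Norm{\bscrP}_\rmF$ remains; this never needs the Kronecker-product norm identity. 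You instead extract the mode-$1$ factor from the left, the entire Kronecker block from the right via $\Norm{\bM\bN}_\rmF \leqslant \Norm{\bM}_\rmF\Norm{\bN}$, and then invoke the multiplicativity of the spectral norm over Kronecker products to split that block into $\prod_{\ell\neq 1}\Norm{\bX^{(\ell)\top}\bA^{(\ell)}}$. Your route is shorter and avoids the repeated re-unfoldings, at the cost of one extra standard fact ($\Norm{\bA\boxtimes\bB} = \Norm{\bA}\,\Norm{\bB}$, which follows from the singular values of a Kronecker product being products of singular values, and extends to $d-1$ factors by associativity); the paper's route is more self-contained but more bookkeeping-heavy. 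Both are complete and rigorous.
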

\begin{proof}
The proof relies on the property $\norm{\bA \bB}_\rmF \leqslant \norm{\bA} \norm{\bB}_\rmF$.
\begin{align*}
\Norm{\bscrP(\bA^{(1)}, \ldots, \bA^{(d)})}_\rmF &= \Norm{\bA^{(1) \top} \bX^{(1)} \bH^{(1)} \bigkron_{\ell = 2}^d \bX^{(\ell) \top} \bA^{(\ell)}}_\rmF \\
&\leqslant \Norm{\bA^{(1) \top} \bX^{(1)}} \Norm{\bH^{(1)} \bigkron_{\ell = 2}^d \bX^{(\ell) \top} \bA^{(\ell)}}_\rmF \\
&= \Norm{\bA^{(1) \top} \bX^{(1)}} \Norm{\bA^{(2) \top} \bX^{(2)} \bH^{(2)} \left( \bI_{r_1} \kron \bigkron_{\ell = 3}^d \bX^{(\ell) \top} \bA^{(\ell)} \right)}_\rmF \\
&\leqslant \Norm{\bA^{(1) \top} \bX^{(1)}} \Norm{\bA^{(2) \top} \bX^{(2)}} \Norm{\bH^{(2)} \left( \bI_{r_1} \kron \bigkron_{\ell = 3}^d \bX^{(\ell) \top} \bA^{(\ell)} \right)}_\rmF \\
&\ldots \\
&\leqslant \left( \prod_{\ell = 1}^d \Norm{\bA^{(\ell) \top} \bX^{(\ell)}} \right) \underbrace{\Norm{\bH^{(d)} \bigkron_{\ell = 1}^d \bI_{r_\ell}}_\rmF}_{= \Norm{\bscrH}_\rmF = \Norm{\bscrP}_\rmF}.
\end{align*}
\end{proof}

Given $\ell \in [d]$, $\bU^{(\ell)}_1$ gathers the $r_\ell$ dominant left singular vectors of $\bT^{(\ell)} \bigkron_{\ell' \neq \ell} \bU^{(\ell')}_0$, i.e., it is solution to
\begin{equation} \label{eq:max_t}
\max_{\bU^{(\ell)} \in V_{r_\ell}(\bbR^{n_\ell})} \Norm{\bU^{(\ell) \top} \bT^{(\ell)} \bigkron_{\ell' \neq \ell} \bU^{(\ell')}_0}_\rmF^2.
\end{equation}
Consider also a solution $\widetilde{\bU}^{(\ell)}_1$ to the following related problem
\begin{equation} \label{eq:max_p}
\max_{\bU^{(\ell)} \in V_{r_\ell}(\bbR^{n_\ell})} \Norm{\bU^{(\ell) \top} \bP^{(\ell)} \bigkron_{\ell' \neq \ell} \bU^{(\ell')}_0}_\rmF^2.
\end{equation}
Observe that, using the property $\Norm{\bA \bB}_\rmF \leqslant \Norm{\bA} \Norm{\bB}_\rmF$, we have,
\begin{multline*}
\Norm{\bU^{(\ell) \top} \bP^{(\ell)} \bigkron_{\ell' \neq \ell} \bU^{(\ell')}_0}_\rmF^2 = \Norm{\bU^{(\ell) \top} \bX^{(\ell)} \bH^{(\ell)} \bigkron_{\ell' \neq \ell} \bX^{(\ell') \top} \bU^{(\ell')}_0}_\rmF^2 \\
\leqslant \Norm{\bU^{(\ell) \top} \bX^{(\ell)}}^2 \Norm{\bH^{(\ell)} \bigkron_{\ell' \neq \ell} \bX^{(\ell') \top} \bU^{(\ell')}_0}_\rmF^2 \leqslant \Norm{\bH^{(\ell)} \bigkron_{\ell' \neq \ell} \bX^{(\ell') \top} \bU^{(\ell')}_0}_\rmF^2
\end{multline*}
and this upper bound is only reached with $\bU^{(\ell)} = \bX^{(\ell)} \bO^{(\ell)}$, for any $r_\ell \times r_\ell$ orthogonal matrix $\bO^{(\ell)}$. Hence, $\widetilde{\bU}^{(\ell)}_1 = \bX^{(\ell)} \bO^{(\ell)}$. The strategy of our proof is to show that, as $N \to +\infty$, Problem \eqref{eq:max_t} has the same solutions as Problem \eqref{eq:max_p}, which are known.

With the decomposition $\bscrT = \bscrP + \frac{1}{\sqrt{N}} \bscrN$, we have,
\begin{multline*}
\Norm{\bU^{(\ell) \top} \bT^{(\ell)} \bigkron_{\ell' \neq \ell} \bU^{(\ell')}_0}_\rmF^2 = \Norm{\bU^{(\ell) \top} \bP^{(\ell)} \bigkron_{\ell' \neq \ell} \bU^{(\ell')}_0}_\rmF^2 + \frac{1}{N} \Norm{\bU^{(\ell) \top} \bN^{(\ell)} \bigkron_{\ell' \neq \ell} \bU^{(\ell')}_0}_\rmF^2 \\
+ \frac{2}{\sqrt{N}} \Scal{\bU^{(\ell) \top} \bP^{(\ell)} \bigkron_{\ell' \neq \ell} \bU^{(\ell')}_0}{\bU^{(\ell) \top} \bN^{(\ell)} \bigkron_{\ell' \neq \ell} \bU^{(\ell')}_0}_\rmF.
\end{multline*}
From Lemma \ref{lem:bound}, $\frac{1}{N} \norm{\bU^{(\ell) \top} \bN^{(\ell)} \bigkron_{\ell' \neq \ell} \bU^{(\ell')}_0}_\rmF^2 = \bigO(1)$ almost surely and
\begin{multline*}
\frac{2}{\sqrt{N}} \left\lvert \Scal{\bU^{(\ell) \top} \bP^{(\ell)} \bigkron_{\ell' \neq \ell} \bU^{(\ell')}_0}{\bU^{(\ell) \top} \bN^{(\ell)} \bigkron_{\ell' \neq \ell} \bU^{(\ell')}_0}_\rmF \right\rvert \\
\leqslant \frac{2}{\sqrt{N}} \underbrace{\Norm{\bU^{(\ell) \top} \bP^{(\ell)} \bigkron_{\ell' \neq \ell} \bU^{(\ell')}_0}_\rmF}_{= \bigO(\Norm{\bscrP}_\rmF)} \underbrace{\Norm{\bU^{(\ell) \top} \bN^{(\ell)} \bigkron_{\ell' \neq \ell} \bU^{(\ell')}_0}_\rmF}_{= \bigO(\sqrt{N})}.
\end{multline*}
Therefore, for all $\bU^{(\ell)} \in V_{r_\ell}(\bbR^{n_\ell})$,
\begin{equation} \label{eq:approx_t}
\Norm{\bU^{(\ell) \top} \bT^{(\ell)} \bigkron_{\ell' \neq \ell} \bU^{(\ell')}_0}_\rmF^2 = \Norm{\bU^{(\ell) \top} \bP^{(\ell)} \bigkron_{\ell' \neq \ell} \bU^{(\ell')}_0}_\rmF^2 + \bigO(\Norm{\bscrP}_\rmF) \qquad \text{almost surely.}
\end{equation}
In particular,
\begin{gather*}
\Norm{\bU^{(\ell) \top}_1 \bT^{(\ell)} \bigkron_{\ell' \neq \ell} \bU^{(\ell')}_0}_\rmF^2 = \Norm{\bU^{(\ell) \top}_1 \bP^{(\ell)} \bigkron_{\ell' \neq \ell} \bU^{(\ell')}_0}_\rmF^2 + \bigO(\Norm{\bscrP}_\rmF) \qquad \text{almost surely,} \\
\Norm{\bU^{(\ell) \top}_1 \bT^{(\ell)} \bigkron_{\ell' \neq \ell} \bU^{(\ell')}_0}_\rmF^2 = \Norm{\widetilde{\bU}^{(\ell) \top}_1 \bP^{(\ell)} \bigkron_{\ell' \neq \ell} \bU^{(\ell')}_0}_\rmF^2 + \bigO(\Norm{\bscrP}_\rmF) \qquad \text{almost surely,}
\end{gather*}
where the first equation is simply Equation \eqref{eq:approx_t} with $\bU^{(\ell)} = \bU^{(\ell)}_1$ and the second equation stems from the maximum over $\bU^{(\ell)} \in V_{r_\ell}(\bbR^{n_\ell})$ of both sides of Equation \eqref{eq:approx_t}\footnote{$\max \{ \norm{\bU^{(\ell) \top} \bT^{(\ell)} \bigkron_{\ell' \neq \ell} \bU^{(\ell')}_0}_\rmF^2 \} \leqslant \max \{ \norm{\bU^{(\ell) \top} \bP^{(\ell)} \bigkron_{\ell' \neq \ell} \bU^{(\ell')}_0}_\rmF^2 \} + \max \{ \bigO(\norm{\bscrP}_\rmF) \}$ where each $\max$ is over $\bU^{(\ell)} \in V_{r_\ell}(\bbR^{n_\ell})$. Thus, $\norm{\bU^{(\ell) \top}_1 \bT^{(\ell)} \bigkron_{\ell' \neq \ell} \bU^{(\ell')}_0}_\rmF^2 - \norm{\widetilde{\bU}^{(\ell) \top}_1 \bP^{(\ell)} \bigkron_{\ell' \neq \ell} \bU^{(\ell')}_0}_\rmF^2 = \bigO(\norm{\bscrP}_\rmF)$ almost surely.}. Hence,
\begin{equation} \label{eq:approx_p}
\Norm{\bU^{(\ell) \top}_1 \bP^{(\ell)} \bigkron_{\ell' \neq \ell} \bU^{(\ell')}_0}_\rmF^2 = \Norm{\widetilde{\bU}^{(\ell) \top}_1 \bP^{(\ell)} \bigkron_{\ell' \neq \ell} \bU^{(\ell')}_0}_\rmF^2 + \bigO(\Norm{\bscrP}_\rmF) \qquad \text{almost surely.}
\end{equation}
Then, consider the singular value decomposition $\bX^{(\ell) \top} \bU^{(\ell)}_1 = \sum_{q_\ell = 1}^{r_\ell} s^{(\ell)}_{q_\ell} \bv^{(\ell)}_{q_\ell} \bw^{(\ell) \top}_{q_\ell}$.
\begin{align*}
&\Norm{\bU^{(\ell) \top}_1 \bP^{(\ell)} \bigkron_{\ell' \neq \ell} \bU^{(\ell')}_0}_\rmF^2 = \Norm{\bU^{(\ell) \top}_1 \bX^{(\ell)} \bH^{(\ell)} \bigkron_{\ell' \neq \ell} \bX^{(\ell') \top} \bU^{(\ell')}_0}_\rmF^2 \\
&= \Norm{\sum_{q_\ell = 1}^{r_\ell} s^{(\ell)}_{q_\ell} \bw^{(\ell)}_{q_\ell} \bv^{(\ell) \top}_{q_\ell} \bH^{(\ell)} \bigkron_{\ell' \neq \ell} \bX^{(\ell') \top} \bU^{(\ell')}_0}_\rmF^2 \\
&= \sum_{q_\ell = 1}^{r_\ell} s^{(\ell) 2}_{q_\ell} \Norm{\bv^{(\ell) \top}_{q_\ell} \bH^{(\ell)} \bigkron_{\ell' \neq \ell} \bX^{(\ell') \top} \bU^{(\ell')}_0}_\rmF^2 \\
&= \sum_{q_\ell = 1}^{r_\ell} \Norm{\bv^{(\ell) \top}_{q_\ell} \bH^{(\ell)} \bigkron_{\ell' \neq \ell} \bX^{(\ell') \top} \bU^{(\ell')}_0}_\rmF^2 - \sum_{q_\ell = 1}^{r_\ell} \left( 1 - s^{(\ell) 2}_{q_\ell} \right) \Norm{\bv^{(\ell) \top}_{q_\ell} \bH^{(\ell)} \bigkron_{\ell' \neq \ell} \bX^{(\ell') \top} \bU^{(\ell')}_0}_\rmF^2 \\
&= \Norm{\bH^{(\ell)} \bigkron_{\ell' \neq \ell} \bX^{(\ell') \top} \bU^{(\ell')}_0}_\rmF^2 - \sum_{q_\ell = 1}^{r_\ell} \left( 1 - s^{(\ell) 2}_{q_\ell} \right) \Norm{\bv^{(\ell) \top}_{q_\ell} \bH^{(\ell)} \bigkron_{\ell' \neq \ell} \bX^{(\ell') \top} \bU^{(\ell')}_0}_\rmF^2.
\end{align*}
Therefore, because $\norm{\bH^{(\ell)} \bigkron_{\ell' \neq \ell} \bX^{(\ell') \top} \bU^{(\ell')}_0}_\rmF^2 = \norm{\widetilde{\bU}^{(\ell) \top}_1 \bP^{(\ell)} \bigkron_{\ell' \neq \ell} \bU^{(\ell')}_0}_\rmF^2$, Equation \eqref{eq:approx_p} yields,
\[
\sum_{q_\ell = 1}^{r_\ell} \left( 1 - s^{(\ell) 2}_{q_\ell} \right) \Norm{\bv^{(\ell) \top}_{q_\ell} \bH^{(\ell)} \bigkron_{\ell' \neq \ell} \bX^{(\ell') \top} \bU^{(\ell')}_0}_\rmF^2 = \bigO(\Norm{\bscrP}_\rmF) \qquad \text{almost surely.}
\]
Using the decomposition $\bv^{(\ell)}_{q_\ell} = \sum_{q_\ell' = 1}^{r_\ell} [\bv^{(\ell)}_{q_\ell}]_{q_\ell'} \bX^{(\ell) \top} \bx^{(\ell)}_{q_\ell'}$, we can see that
\[
\Norm{\bv^{(\ell) \top}_{q_\ell} \bH^{(\ell)} \bigkron_{\ell' \neq \ell} \bX^{(\ell') \top} \bU^{(\ell')}_0}_\rmF^2 = \sum_{q_\ell' = 1}^{r_\ell} [\bv^{(\ell)}_{q_\ell}]_{q_\ell'}^2 \Norm{\bscrP(\bU^{(1)}_0, \ldots, \bx^{(\ell)}_{q_\ell'}, \ldots, \bU^{(d)}_0)}_\rmF^2 = \bigTh(L_N^2).
\]
Hence,
\[
\sum_{q_\ell = 1}^{r_\ell} \left( 1 - s^{(\ell) 2}_{q_\ell} \right) = \bigO \left( \frac{\Norm{\bscrP}_\rmF}{L_N^2} \right) \qquad \text{almost surely,}
\]
which is the result stated in Theorem \ref{thm:hooi}:
\[
\frac{1}{r_\ell} \Norm{\bX^{(\ell) \top} \bU^{(\ell)}_1}_\rmF^2 = \frac{1}{r_\ell} \sum_{q_\ell = 1}^{r_\ell} s^{(\ell) 2}_{q_\ell} = 1 + \bigO \left( \frac{\Norm{\bscrP}_\rmF}{L_N^2} \right) \qquad \text{almost surely.}
\]

\bibliography{bibliography}

\end{document}